%%%%%%%% mlsys 2022 EXAMPLE LATEX SUBMISSION FILE %%%%%%%%%%%%%%%%%

\documentclass[twoside]{article}
\PassOptionsToPackage{dvipsnames,table}{xcolor}
\usepackage[accepted]{aistats2024}
\usepackage{microtype}
\usepackage{graphicx}
\usepackage{subcaption}
\usepackage{booktabs, multirow}  % for professional tables
\usepackage{soul}% for underlines

% hyperref makes hyperlinks in the resulting PDF.
% If your build breaks (sometimes temporarily if a hyperlink spans a page)
% please comment out the following usepackage line and replace
% \usepackage{mlsys2022} with \usepackage[nohyperref]{mlsys2022} above.
\usepackage{hyperref}
%\usepackage[round]{natbib}
%\renewcommand{\bibname}{References}
%\renewcommand{\bibsection}{\subsubsection*{\bibname}}
%%%%% NEW MATH DEFINITIONS %%%%%

\usepackage{bm}
\usepackage{amsfonts, amsmath, amsthm}

% Mark sections of captions for referring to divisions of figures

% Highlight a newly defined term

% Figure reference, lower-case.

% Figure reference, capital. For start of sentence

% Section reference, lower-case.

% Section reference, capital.

% Reference to two sections.

% Reference to three sections.

% Reference to an equation, lower-case.
\def\eqref#1{equation~\ref{#1}}
% Reference to an equation, upper case

% A raw reference to an equation---avoid using if possible

% Reference to a chapter, lower-case.

% Reference to an equation, upper case.

% Reference to a range of chapters

% Reference to an algorithm, lower-case.

% Reference to an algorithm, upper case.

% Reference to a part, lower case

% Reference to a part, upper case

\def\1{\bm{1}}

% Random variables

% rm is already a command, just don't name any random variables m

% Random vectors

\def\rvx{{\mathbf{x}}}

\def\rvz{{\mathbf{z}}}

% Elements of random vectors

% Random matrices

% Elements of random matrices

% Vectors

\def\vx{{\bm{x}}}
\def\vy{{\bm{y}}}
\def\vz{{\bm{z}}}
% Elements of vectors

% Matrix

% Tensor
\DeclareMathAlphabet{\mathsfit}{\encodingdefault}{\sfdefault}{m}{sl}
\SetMathAlphabet{\mathsfit}{bold}{\encodingdefault}{\sfdefault}{bx}{n}

% Graph

\def\gC{{\mathcal{C}}}
\def\gD{{\mathcal{D}}}

\def\gF{{\mathcal{F}}}
\def\gG{{\mathcal{G}}}
\def\gH{{\mathcal{H}}}

\def\gM{{\mathcal{M}}}

\def\gX{{\mathcal{X}}}
\def\gY{{\mathcal{Y}}}
\def\gZ{{\mathcal{Z}}}

% Sets

\def\sD{{\mathbb{D}}}
% Don't use a set called E, because this would be the same as our symbol
% for expectation.
\def\sF{{\mathbb{F}}}

\def\sR{{\mathbb{R}}}

% Entries of a matrix

% entries of a tensor
% Same font as tensor, without \bm wrapper

% The true underlying data generating distribution

% The empirical distribution defined by the training set

% The model distribution

% Stochastic autoencoder distributions

 % Laplace distribution

\newcommand{\E}{\mathbb{E}}

\newcommand{\R}{\mathbb{R}}

\newcommand{\softmax}{\mathrm{softmax}}
\newcommand{\sigmoid}{\sigma}

% Wolfram Mathworld says $L^2$ is for function spaces and $\ell^2$ is for vectors
% But then they seem to use $L^2$ for vectors throughout the site, and so does
% wikipedia.

 % See usage in notation.tex. Chosen to match Daphne's book.

\DeclareMathOperator*{\argmax}{arg\,max}
\DeclareMathOperator*{\argmin}{arg\,min}
\newtheorem*{rep@theorem}{\rep@title}
\newcommand{\newreptheorem}[2]{%
\newenvironment{rep#1}[1]{%
 \def\rep@title{#2 \ref{##1}}%
 \begin{rep@theorem}}%
 {\end{rep@theorem}}}
\makeatother
\newtheorem{theorem}{\protect\theoremname}
\providecommand{\theoremname}{Theorem}
\newreptheorem{theorem}{Theorem}
\newtheorem{proposition}{Proposition}
\newtheorem{lemma}{Lemma}
\theoremstyle{definition}
\newtheorem{definition}{Definition}[section]

% Attempt to make hyperref and algorithmic work together better:

% Use the following line for the initial blind version submitted for review:
% \usepackage[accepted]{mlsys2022}

% If accepted, instead use the following line for the camera-ready submission:
% \usepackage[accepted]{mlsys2022}

% The \mlsystitle you define below is probably too long as a header.
% Therefore, a short form for the running title is supplied here:
% \icmltitlerunning{Federated Adversarial Domain Adaptation}
%\usepackage[utf8]{inputenc} % allow utf-8 input
%\usepackage[T1]{fontenc}    % use 8-bit T1 fonts
%\usepackage{hyperref}       % hyperlinks
%\usepackage{url}            % simple URL typesetting
\usepackage{booktabs}       % professional-quality tables
\usepackage{nicefrac}       % compact symbols for 1/2, etc.
\usepackage{placeins}
\usepackage{microtype}      % microtypography
\usepackage{amssymb}
\usepackage{xcolor}
\usepackage{framed}
\usepackage{booktabs}
\usepackage{bbm} 
\definecolor{shadecolor}{rgb}{1,0,0}
\usepackage{microtype}
\usepackage{wrapfig}
\usepackage{changepage,threeparttable}
\usepackage{adjustbox}
\usepackage{algorithm, algpseudocode}

\newcounter{phase}[algorithm]
\newlength{\phaserulewidth}
\newcommand{\setphaserulewidth}{\setlength{\phaserulewidth}}
\newcommand{\phase}[1]{%
  \vspace{-1.25ex}
  % Top phase rule
  \Statex\leavevmode\llap{\rule{\dimexpr\labelwidth+\labelsep}{\phaserulewidth}}\rule{\linewidth}{\phaserulewidth}
  \Statex\strut\refstepcounter{phase}\textit{Phase~\thephase~--~#1}% Phase text
  % Bottom phase rule
  \vspace{-1.25ex}\Statex\leavevmode\llap{\rule{\dimexpr\labelwidth+\labelsep}{\phaserulewidth}}\rule{\linewidth}{\phaserulewidth}}
\makeatother

\setphaserulewidth{.7pt}

\begin{document}
\twocolumn[
\aistatstitle{Continual Domain Adversarial Adaptation via Double-Head Discriminators}
\aistatsauthor{ Yan Shen \And Zhanghexuan Ji \And  Chunwei Ma \And Mingchen Gao}

\aistatsaddress{ University at Buffalo \And  University at Buffalo \And University at Buffalo \And University at Buffalo } 
]
\begin{abstract}
    Domain adversarial adaptation in a continual setting poses a significant challenge due to the limitations on accessing previous source domain data. Despite extensive research in continual learning, the task of adversarial adaptation cannot be effectively accomplished using only a small number of stored source domain data, which is a standard setting in memory replay approaches. This limitation arises from the erroneous empirical estimation of $\gH$-divergence with few source domain samples. 
    % existing state-the widely adopted memory replay approach could be less effective for the task of adversarial adaptation. The task of adversarial adaptation requires an addition domain adversarial objective that are related to $\gH$-divergence between the source and target domain. 
    To tackle this problem, we propose a double-head discriminator algorithm, by introducing an addition source-only domain discriminator that are trained solely on source learning phase. We prove that with the introduction of a pre-trained source-only domain discriminator, the empirical estimation error of $\gH$-divergence related adversarial loss is reduced from the source domain side. Further experiments on existing domain adaptation benchmark show that our proposed algorithm achieves more than 2$\%$ improvement on all categories of target domain adaptation task while significantly mitigating the forgetting on source domain. 
\end{abstract}
% ==================== ==================== ==================== ==================== ====================
% Introduction
% ==================== ==================== ==================== ==================== ====================
\section{Introduction} \label{sec:intro}

Unsupervised Domain adaptation(UDA) refers to the process of transferring knowledge from a labeled source domain to an unlabeled target domain~\cite{ben2010theory,zhao2019learning}, taking into account the presence of {\it domain shifts} between the source and target domains. One line of UDA work to bridge the domain gap focuses on learning domain invariant feature representations by adversarial adaptations\cite{ganin2016domain,zhao2018adversarial, long2017conditional,zhang2019bridging}. Classic adversarial domain adaptation applies in offline settings where both the source and target domain data can be accessed i.i.d. However in the context of continual learning(CL), domain data is considered to be accessed in a sequential manner. The sequential nature of CL makes the direct application of this line of approaches particularly challenging.

Intuitively, one can expect that the gap between offline and online learning would be partly bridged if a small portion of the previous domain data is stored and subsequently accessible. This 'divide-and-conquer' idea has brought up to a setting known as {\it memory replay continual learning} where the learner stores a small portion of previous tasks in memory and replays them with the new mini-batch data. However different from memory replay CL in supervised task\cite{belouadah2019il2m, zhao2020maintaining, hou2019learning, castro2018end, wu2019large, liu2021adaptive}, adversarial adaptation requires estimation of an extra domain discrepancy term, as the $\gH$-divergence, in addition to the supervised task risk on the previous source domain. Prior theoretical results showed that empirically estimating $\gH$-divergence using only a few source samples results in a significant error gap from the source side\cite{ben2010theory}. As a consequence, the model adversarially trained on few stored source samples, would exhibit poorer performance in target adaptation.

In light of the above unique challenge in adversarial adaptation under CL settings, to construct a low-error empirical estimation of domain discrepancy with few source samples, we propose our \textbf{double-head discriminator algorithm}. We train two domain discriminators on domain data of different phases. One is trained in source learning phase as {\it source-only domain discriminator}. The other one is adversarially trained in target adaptation phase with task model.  
And we use the ensemble of two domain discriminators for a lowered empirical error estimation of $\gH$-divergence. In particular, the source-only domain discriminator is trained exclusively with source domain data in one-class learning approaches. It serves as a score-based function to assess the level of in-distribution within the source domain. In the target adaption phase, the source-only domain discriminator is freeze. The ensembles of two domain discriminator's digits are used as $\gH$-divergence signal to learn a domain generalized task model. 

% The predictions from source only domain discriminator are ensembled with the adversarailly trained domain discriminator to reduce the empirical estimation error of $\gH$-divergence from the source domain side. The $\gH$-divergence from double head domain discriminators is used as a signal to learn domain invariant representation. 

To summarize, our contributions are listed as follows 

\textbf{Contributions}

(i) We propose, a double-head discriminator algorithm, tailored for adversarial adaptation in a CL setting. Different from existing works on continual UDA, our algorithm learns a domain generalized task model with better performance on target domain task while mitigating the issue of catastrophic forgetting on the tasks of previous source domains. Our proposed algorithm is effective, requiring only a few source domain samples stored in the replay memory buffer.

(ii) We theoretically analyze our proposed algorithm. Firstly we show that the population form of two discriminator's ensemble digit does construct a $\gH$-divergence to bound on the generalization error between source and target domain's population risk. Next, we demonstrate that in empirical form, the ensemble of two discriminators reduces the error of empirical estimation on $\gH$-divergence from the source domain side. Finally, we analyze the equilibrium of our adversarial loss on how source only domain discriminator regulates on source and target domain's distributions.

%We first analyze the equilibrium of our adversarial loss on how source only domain discriminator regulates on source and target domain's distributions. We next show that our domain adversarial function constructs a generalization upper bound on target domain's population risks. We further justify that by introducing our source-only domain discriminator, the error of empirical estimation error on domain discrepancy is reduced from the source domain side.    

(iii) Empirically, we show that our algorithm consistently achieves better performance on continual adaptation to target domain task while significantly mitigating the issue of catastrophic forgetting on previous source domain tasks. 
\setlength\intextsep{-10pt} % change this value

\section{Preliminary}
% \subsection{Adversarial Domain Adaptation in a Continual Setting}

% In Continual UDA, the data comes as a stream $S_0, T_1$ and an unified model is trained on current data locally without revisiting previous data. Let $P=\{S_0, T_1\} $ be the data stream, in which $S_0 = \{(\vx^s_i, y^s_i)\}$ contains labeled examples in source domain, $T_1 = \{(\vx^t_i)\}$ contains unlabeled examples in target domain, where $\vx^s_i, \vx^t_i \in \sD,y_{i} \in \gC$. Specifically, the continual domain adaptations algorithm begins with training a task model $f_w$ on labeled examples from source domain $S$. For the successive phase, the data comes as an unlabeled examples on target domains $T$. And the task model $f_{\omega}$ is continuously trained during $T_1$ from the pre-trained $f_{\omega}$ from $S_0$.  We name the two phase as source training phase $S_0$ and target adaptation phase $T_1$. 

% \subsection{Limitations of Domain Adversarial Adaptation in Continual Settings}
\paragraph{Continual Unsupervised Domain Adaptation}

In Continual UDA, the data comes as a stream $S_0, T_1$ and an unified model is trained on current data locally without revisiting previous data. Let $P=\{S_0, T_1\} $ be the data stream, in which $S_0 = \{(\vx^s_i, y^s_i)\}$ contains labeled examples in source domain, $T_1 = \{(\vx^t_i)\}$ contains unlabeled examples in target domain, where $\vx^s_i, \vx^t_i \in \sD,y_{i} \in \gC$. Specifically, the continual domain adaptations algorithm begins with training a task model $f_w$ on labeled examples from source domain $S$. For the successive phase, the data comes as an unlabeled examples on target domains $T$.
% And the task model $f_{\omega}$ is continuously trained during $T_1$ from the pre-trained $f_{\omega}$ from $S_0$. 
We name the two phase as source training phase $S_0$ and target adaptation phase $T_1$. The unified task model $f_{\omega} = f_{\omega}^2\circ f_{\omega}^1$ consists of a {\it feature extractor} $f^1_{\omega}$ and a {\it label predictor} $f_{\omega}^2$. The feature extractor is a deep neural network $\vz = f^1_{\omega}(\vx), \vz \in \sF$ that maps the data to feature space. Continual domain adaptations aims to learn a feature extractor $f^1_{\omega}$ that generates domain invariant feature representations. The label predictor is another network $\vy = f^2_{\omega}(\vz), \vy \in \sR^{|\gC|}$ maps from feature space to task digits space. Both feature extractor and label predictor are trained continuously at both $S_0$ and $T_1$ phase. In addition to the task model, the trained model also involves another {\it domain discriminator} network  $d = h_{\psi}(\vz), d \in \sR$ that tries to determine whether the
extracted features belongs to the source or target domain. 
\paragraph{Target Domain Adaptation with Few Stored Source Samples is Challenging}
Memory replay in continual learning methods refers to the practice of storing a small number of samples from previous domains and replaying them alongside the current data stream in mini-batches during the learning process on new domain data. We denote memory buffer as $\gM$ that stored few portions of previously accessed domain data. With the incorporation of replay memory buffer $\gM$, it optimizes on a empirical task loss of the joint distribution of the current data stream and the replay memory $\gM$\cite{chaudhry2019tiny}. In traditional supervised CL settings, the empirical task loss on $\gM$ is trained purely for memorizing the old task. Uniquely in adversarial adaptation, the new task objective of target adaptation on $T_1$ takes the general form of an empirical domain adversarial loss on the joint distribution of target domain data in $T_1$ and stored source domain data in $\gM$, as follows:
\begin{equation}
\begin{split}    &\min_{\omega}\max_{\psi}\mathbb{E}_{(\mathbf{x}^s_i, y^s_i) \sim \gM}[\ell(f_{\omega}(\mathbf{x}^s_i), y^s_i) 
\\&- \nu \mathbb{E}_{(\mathbf{x}^s_i, y^s_i) \sim \gM}D^s_{\psi}(\mathbf{x}^s_i)] - \nu\mathbb{E}_{\mathbf{x}^t_i \sim T_1}D^t_{\psi}(\rvx^t_i)
\end{split}
\end{equation}
As a part of new adversarial adaptation task on target domain at $T_1$, $\mathbb{E}_{(\mathbf{x}^s_i, y^s_i) \sim \gM}D^s_{\psi}(\mathbf{x}^s_i) + \mathbb{E}_{\mathbf{x}^t_i \sim T_1}D^t_{\psi}(\rvx^t_i)$ is related to an empirical estimation of the $\gH$-divergence $d_{\gH\Delta\gH}$ in Theorem  \ref{ben2010theory} of our appendix. For more detailed introduction on adversarial domain adaptation, we refer interested readers to Appendix (\ref{sec:adv}). However, according to Theorem \ref{empiricalhdiv} given by Ben-David et.al,\cite{ben2010theory}, 
using few samples of stored source domains data to construct an empirical version of $\gH$-divergence, denoted as $\hat{d}_{\gH\Delta\gH}$, can result in significant errors when estimating the population $\gH$-divergence.
\begin{theorem}
\label{empiricalhdiv}
Let $\gF$ be a hypothesis space with VC dimensions $d$, if $S'$ are samples of size $m$ from $S$ and $T'$ are samples of size $n$ from $T$ respectively and $\hat{d}_{\gH\Delta\gH}(S', T')$ is the empirical $\gH$-divergence between samples, then for any $\delta \in (0,1)$, with probability at lease $1-\delta$
    \begin{equation}
    \begin{split}
    &d_{\gH\Delta\gH}(S, T) \leq \hat{d}_{\gH\Delta\gH}(S', T') + 2\sqrt{\frac{d\log2m+\log(2/\delta)}{m}}\\ &+2\sqrt{\frac{d\log2n+\log(2/\delta)}{2n}}
    \end{split}
    \end{equation}
\end{theorem}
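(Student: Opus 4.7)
The plan is to apply the standard Vapnik--Chervonenkis (VC) uniform convergence bound to each of the two samples separately and then to combine them through the triangle inequality. First I would unfold the definitions. For any binary hypothesis class $\gG$ over $\sD$, write
$$
\eta_S(g) := \Pr_{\vx \sim S}[g(\vx)=1], \qquad \hat\eta_{S'}(g) := \frac{1}{m}\sum_{\vx \in S'}\1[g(\vx)=1],
$$
and analogously $\eta_T(g)$ and $\hat\eta_{T'}(g)$. Then by definition,
$$
d_\gG(S,T) = 2\sup_{g \in \gG}|\eta_S(g) - \eta_T(g)|, \qquad \hat d_\gG(S',T') = 2\sup_{g \in \gG}|\hat\eta_{S'}(g) - \hat\eta_{T'}(g)|.
$$
Taking $\gG = \gH\Delta\gH$, I can view $\gG$ as a single class of VC dimension $d$ and reduce the theorem to a uniform-deviation argument on each side.

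Next, I would invoke the classical VC generalization bound on each distribution independently. With probability at least $1 - \delta/2$ over $S'$,
$$
\sup_{g \in \gG}|\eta_S(g) - \hat\eta_{S'}(g)| \leq \sqrt{\frac{d\log 2m + \log(2/\delta)}{m}},
$$
and with probability at least $1 - \delta/2$ over $T'$,
$$
\sup_{g \in \gG}|\eta_T(g) - \hat\eta_{T'}(g)| \leq \sqrt{\frac{d\log 2n + \log(2/\delta)}{2n}}.
$$
A union bound then guarantees that both events hold simultaneously with probability at least $1 - \delta$.

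Finally, fixing any $g \in \gG$ and chaining via the triangle inequality gives
$$
|\eta_S(g) - \eta_T(g)| \leq |\eta_S(g) - \hat\eta_{S'}(g)| + |\hat\eta_{S'}(g) - \hat\eta_{T'}(g)| + |\hat\eta_{T'}(g) - \eta_T(g)|.
$$
Taking a supremum over $g \in \gG$, bounding the first and third terms by the two VC inequalities from the previous paragraph, and multiplying through by the factor of $2$ in the definition of $d_\gG$ delivers the stated inequality.

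The main obstacle, I expect, is not the logical structure of the argument but the careful bookkeeping of constants. The asymmetric denominators ($m$ versus $2n$) inside the two square roots reflect two slightly different packagings of the VC tail bound, so one must select the precise form of the concentration inequality on each side and split the confidence $\delta$ as $\delta/2 + \delta/2$ in the right way to reproduce the stated constants. A minor secondary subtlety is that the symmetric-difference class $\gH\Delta\gH$ has VC dimension bounded by a constant multiple of $\mathrm{VCdim}(\gH)$ rather than equalling it; the theorem implicitly absorbs this into the single parameter $d$, so the cleanest route is to prove a bound for an abstract class of VC dimension $d$ and then instantiate at $\gG = \gH\Delta\gH$.
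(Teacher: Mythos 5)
The paper does not prove this statement itself: it is quoted directly from Ben-David et al.\ (2010), and no proof appears in the appendix. Your proposal is correct and is essentially the argument from that original source --- a VC uniform-convergence bound applied separately to the source and target samples, a union bound splitting $\delta$ into $\delta/2+\delta/2$, and the triangle inequality on $\sup_{g}|\eta_S(g)-\eta_T(g)|$, with the factor of $2$ from the definition of the $\gH$-divergence producing the stated constants; your closing caveats about the exact packaging of the VC tail bound and about $\mathrm{VCdim}(\gH\Delta\gH)$ versus $\mathrm{VCdim}(\gH)$ are exactly the right points to flag.
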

Due to the erroneous estimation from the learning objective of the task, the adversarial adaptation task on target domain at $T_1$ is expected to exhibit poor performance.

\section{Methodology} \label{sec:method}

\subsection{Double head Domain Discriminator For Continual UDA}
To compensate for the erroneous empirical estimation of $\gH$-divergence originating from few source domain samples, our natural idea is to introduce an additional domain discriminator that is trained on the full set of source domain data instead of a tiny set on memory buffer. In the specific problem setting of continual UDA, the auxiliary domain discriminator is trained on $S_0$ phase and then frozen during $T_1$ phase. Since only the source domain data is accessible in the $S_0$ phase, the auxiliary domain discriminator we introduced in $S_0$ phase is {\it source-only domain discriminator}. Extending the general loss function of $\gH$-divergence to single-side(source) domain loss leads the following form
\begin{equation}
\begin{split}
\label{sourcediscobj}
&\hat{d}_{\gH\Delta\gH} \triangleq sup_{\psi}[\mathbb{E}_{\rvx^s_i \in S_0}D(\sigmoid(h_{\psi,s}(f^1_{\omega}(\mathbf{x}^s_i)))) \\
&- \mathbb{E}_{\rvx^s_i \notin S_0}D(\sigmoid(h_{\psi,s}(f^1_{\omega}(\mathbf{x}^t_i))))]
\end{split}
\end{equation}
The above training objective $\hat{d}_{\gH\Delta\gH}$ has a similar problem formulation of one-class learning. Specifically, the training data $\rvx_i \in S$ is treated as a one-class distribution. And a score based function $\sigmoid({h_{\psi, s}}\circ f^1_{\omega})(\rvx) \in [0, 1]$ is trained to determine how possible that a data instance $\rvx$ lies within the distribution of training dataset $S_0$(source domain). Though one-class learning doesn't not learn a boundary as distinguishable as multi-class classification model. An ideal one-class score function should exhibit positive correlations on its score with data points that belong to the in-distribution and have higher densities.

% An ideal score based function should have the following properties.
% \begin{itemize}
% \item The function should output a higher score for in-distribution data than outliers. 
% \item The score should be proportional to the the data distribution density. The function should output a higher score for data in high population density area than data in low population density area. 
% \item The score function should learn a holistic domain feature representations, not arbitrarily biases any features. 
% \end{itemize}
Though deep one-class learning problem is a class of challenging task that is still under active research. we will describe our solution in the specific case of source only domain classifier in Section (\ref{secmdd}).

In the remaining part of this section, we will describe how we utilize the two complementary domain discriminators jointly to learn a domain generalized task model in target adaptation phase $T_1$.  After the task model $f_{\omega}$ and source only domain discriminator $h_{\psi, s}$ is trained on source domain task. 
% After the task model are fully trained on source domain task, the task model $f_{\omega}$ is then freeze. In this stage, we train a source domain only discriminator $h_{\psi, s}$ using $f^1_{\omega}$ mapped feature of the source domain data from $S_0$ phase. 
% The source domain only discriminator $h_{\psi, s}$ is trained on the following objectives
% \begin{equation}
% \min_{\psi_s}-\mathbb{E}_{\mathbf{x}^s_i \sim S_0}\log(\softmax(h_{\psi,s}(f^1_{\omega}(\mathbf{x}^s_i)), \argmax_cf(\mathbf{x}^s_i)))
% \end{equation}
Then in the successive $T_1$ phase, the pre-trained source-only domain discriminator $h_{\psi, s}$ is freeze. We introduced another target adaptation discriminator $h_{\psi, t}$ that is adversarial trained with feature generator $f^1_{\omega}$ during $T_1$ phase. The target adaptation discriminator is trained discriminatively using the features from the source domain memory buffer $\gM$ and the target domain data in $T_1$ with the commonly used cross-entropy loss:
% \begin{equation}
% \begin{split}
% \min_{\psi_s}[-&\mathbb{E}_{\mathbf{x}^s_i \sim \gM}\log(\softmax(h_{\psi,s}(f^1_{\omega}(\mathbf{x}^s_i)), \argmax_cf(\mathbf{x}^s_i))) \\- &\mathbb{E}_{\mathbf{x}^t_i \sim T_1}
% -\log(1 -\softmax(h_{\psi,s}(f^1_{\omega}(\mathbf{x}^t_i)), \argmax_cf(\mathbf{x}^t_i))) ] 
% \end{split}
% \end{equation}
\begin{equation}
\label{psitobj}
\begin{split}
&D_{\psi, t}^s(\mathbf{x}^s_i) = -\log(\sigmoid(h_{\psi, t}(f^1_{\omega}(\mathbf{x}^s_i))), \hspace{2em}\\
&D_{\psi, t}^t(\mathbf{x}^t_i) =- \log(1- \sigmoid(h_{\psi, t}(f^1_{\omega}(\mathbf{x}^t_i)))) \\ 
&\min_{\psi_t}[\mathbb{E}_{\mathbf{x}^s_i \sim \gM}D_{\psi, t}^s(\mathbf{x}^s_i)+ \mathbb{E}_{\mathbf{x}^t_i \sim T_1}
D_{\psi, t}^t(\mathbf{x}^t_i) ]
\end{split}
\end{equation}
 To learn domain-independent feature representations, the feature extractor $f^1_{\omega}$ is trained adversarially with the target domain discriminator $h_{\psi, t}$. The estimated $\gH$-divergence from the domain discriminator is used as a signal to guide the learning of domain-invariant feature representations. Instead of solely relying on the target domain discriminator $h_{\psi, t}$ that is trained with only few samples of source domain data in $\gM$, we utilize the ensembles of source and target domain discriminator outputs to obtain a lower empirical estimation of the $\gH$-divergence between the distributions of the source and target domains. The adversarial loss function for learning feature extractor $f^1_{\omega}$ with respect to $\gH$-divergence is given by:  
 % Instead of solely relying on the target domain discriminator $h_{\psi, t}$ that are trained only with few samples of source domain data in $\gM$, we use the ensembles of source domain discriminator $h_{\psi, s}$ and target domain discriminator $h_{\psi, t}$ as the signals of domain discriminator. We use the ensembles of source and target domain discriminator digits as adverarial training signal to reduce the empirical estimations errors of feature divergence between source and target domains from discriminator. The adversarial loss function for learning feature extractor $f^1_{\omega}$ with respect to $\gH$-divergence is given by:  
 \begin{equation}
 \label{fomegaobj}
\begin{split}
&D_{\psi}^s(\mathbf{x}^s_i) = -\log(\sigmoid(h_{\psi, s}(f^1_{\omega}(\mathbf{x}^s_i))+h_{\psi, t}(f^1_{\omega}(\mathbf{x}^s_i)))\\ &D_{\psi}^t(\mathbf{x}^t_i) =- \log(1- \sigmoid(h_{\psi, s}(f^1_{\omega}(\mathbf{x}^t_i))+h_{\psi, t}(f^1_{\omega}(\mathbf{x}^t_i)))) 
\end{split}
\end{equation}
% \begin{equation}
% \begin{split}
% &D_{\psi}^s(\mathbf{x}^s_i) = -\log(\softmax(h_{\omega, s}(f^1_{\omega}(\mathbf{x}^s_i))+h_{\omega, t}(f^1_{\omega}(\mathbf{x}^s_i)), \argmax_cf(\mathbf{x}^s_i)))\\ &D_{\psi}^t(\mathbf{x}^t_i) =- \log(1- \softmax(h_{\omega, s}(f^1_{\omega}(\mathbf{x}^t_i))+h_{\omega, t}(f^1_{\omega}(\mathbf{x}^t_i)), \argmax_cf(\mathbf{x}^t_i))) 
% \end{split}
% \end{equation}
With the previously mentioned loss function for $\gH$-divergence, the joint learning objective for the task model $f_{\omega}(\cdot)$ during the target adaptation phase $T_1$ can be expressed as follows:
% As an inherent nature of continual adversarial adaptation, adapting to unlabeled target domains requires memorizing the tasks of previous source domain. Therefore, another supervised loss on source domain from memory buffer is added to domain adversarial loss to jointly train on feature extractor $f^1_{\omega}$ and label predictor $f^2_{\omega}$
\begin{equation}
\min_{\omega} \mathbb{E}_{(\mathbf{x}^s_i, y_i^s) \sim \gM}[\ell(f_{\omega}(\mathbf{x}^s_i) , y_i^s) -\nu D^s_{\psi}((\mathbf{x}^s_i))]  - \nu \mathbb{E}_{\mathbf{x}^t_i \sim T_1} D^t_{\psi}((\mathbf{x}^t_i))
\end{equation}
The entire diagram for Continual UDA with our double-head discriminator algorithm is illustrated in Fig(\ref{fig:doublehead}).
\begin{figure*}[htb!]
  \begin{center}
    \includegraphics[width=0.95\textwidth]{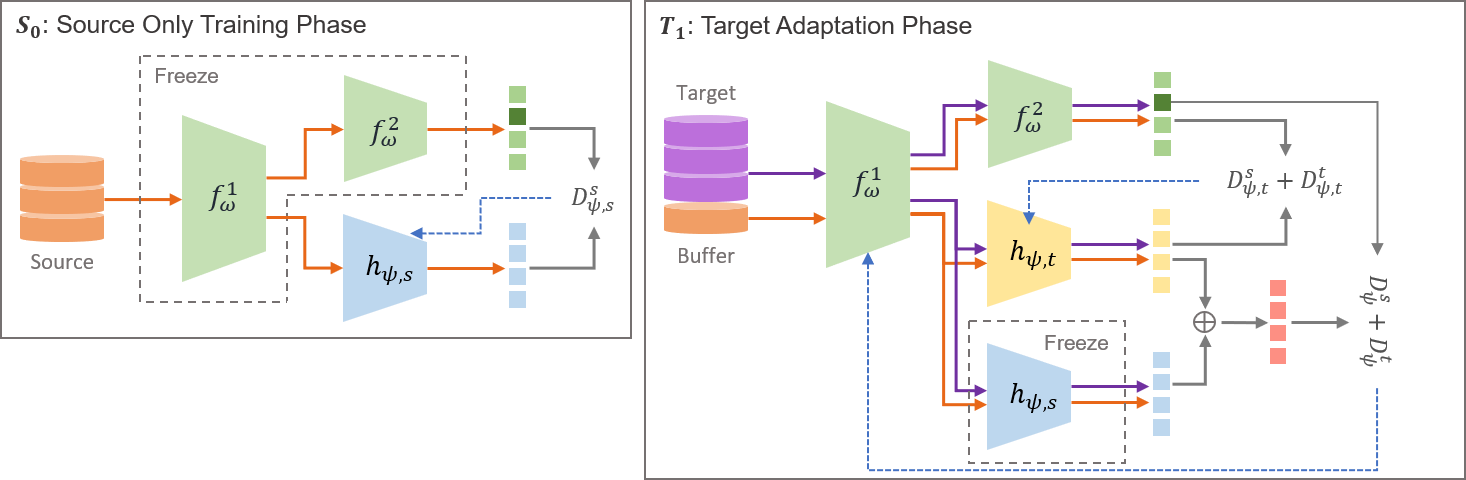}
  \end{center}
  \caption{The flowchart of our proposed double-head discriminator algorithm. The solid line is the forward path. And the dashed line is the backward training path. After the task model is trained in source domain, an additional source-only domain discriminator $h_{\psi,s}$ is trained by freezing the task model $f_{\omega}$. In the target adaptation phase, $h_{\psi,t}$ is adversarially trained with $f_{\omega}^1$ on domain adversarial loss, where the ensembles of domain discriminator $h_{\psi,s}$ and $h_{\psi,t}$'s digit is used as domain adversarial signal to learn domain invariant features for $f_{\omega}^1$}
  % $f_{\omega}^1$ is trained on the signals ensembles of domain discriminator $h_{\psi,s}$ and $h_{\psi,t}$'s digit.}
  % \caption{The flowchart of our proposed double-head discriminator algorithm. The solid line is the forward path. And the dashed line is the backward training path. After the task model is trained in source domain, an additional source-only domain discriminator $h_{\psi,s}$ is trained by freezing the task model $f_{\omega}$. In the target adaptation phase, the ensembles of domain discriminator $h_{\psi,s}$ and $h_{\psi,t}$'s digit is used as domain adversarial signal to learn domain invariant features for $f_{\omega}^1$. And $h_{\psi,t}$ is adversarially trained with $f_{\omega}^1$ on domain adversarial loss on its own digits $D_{\psi,t}^s + D_{\psi,t}^t$ }
  \label{fig:doublehead}
 \end{figure*}
\subsection{Example for Single Domain Discriminator Learning: Margin Disparity Discrepancy}
\label{secmdd}
% In most domain adversarial methods, the traditional $\gH\Delta\gH$ divergence\cite{ben2010theory} $d_{\gH\Delta\gH}$ is used as a discrepancy-based distance between source and target domain which is defined as
% \begin{equation}
% d_{\gH\Delta\gH} = \sup_{h,h'\in\gH}|\mathbb{E}_{S}\mathbbm{1}[h'\neq h] - \mathbb{E}_{T}\mathbbm{1}[h'\neq h]|
% \end{equation}
% By extending the $\gH\Delta\gH$ divergence to general loss function in \cite{mansour2009domain}, leading to the following discrepancy-based distance functions
% \begin{equation}
% \gL_{adv} \triangleq sup_{h,h'\in\gH}[\mathbb{E}_{S}L(h,h')-\mathbb{E}_{T}L(h,h')]
% \end{equation}
% With the commonly used cross-entropy loss, it would be reformulated as a a binary domain classification functions in the form of parameterized feature extractor and domain discriminator, we have
% \begin{equation}
% \gL_{adv} \triangleq sup_{\psi}[\mathbb{E}_{S}\log(\sigmoid(-h_\psi(f^1_{\omega}(\mathbf{x}^s_i))))+\mathbb{E}_{T}\log(\sigmoid(h_\psi(f^1_{\omega}(\mathbf{x}^t_i))))]
% \end{equation}
A strait-forward way for one-class learning of source only domain discriminator $h_{\psi, s}$ in (\ref{sourcediscobj}) is optimizing on commonly used cross-entropy function on a single class
\begin{equation}
 \min_{\psi_s}\E_{\rvx_i^s \sim S}[-\log(\sigmoid(h_{\psi,s}(f_{\omega}^1(\rvx_i^s)))]
\end{equation}
However directly training on above objective function would limit the trained source-only domain discriminator's ability as a score based function on in-distribution of source domain. One reason is the uncontrollable digit outputs. And the other reason is the biased features towards highest neuron activations.
% However there are several limitations to train source only domain discriminator from the NLL of a single class for continual UDA. 
% % As the source data and target data appears in separated times, in each learning phase, the domain discriminator would only learn from a single class of data. 
% The first problem is uncontrollable discriminative logits outputs. Learning from a single class would result that there is no control over the growth of $h_{\psi,s}(\cdot)$ as the sigmoid flattens after a certain value of $h_{\psi,s}(\cdot)$. The discriminator would sature so that $Sigmoid(h_{\psi,s}(\cdot))$ would output $1$ all the time. The second problem is the learnt feature bias. Features with highest activations would be emphasis by the discriminator. However they are not necessary be the most discriminator features between source and target domain. 

One way to address the above limitations of one-class learning is by adding a H-Regularization loss as in \cite{hu2020hrn}. This approach is called HRN and is applicable to general settings of positive, unlabeled learning and continual learning. We would introduce and discuss HRN method for continual UDA in our appendix. However by utilizing the specific problem structure in UDA, a more effective score-based function for domain discrepancy is utilizing the margins between classification spaces as proposed in Margin Disparity Discrepancy(MDD)\cite{zhang2019bridging}. 
% Even with the help of memory buffer replay in target learning phase, these problem could not be fully addressed. As the source domain data in memory buffer would only constitute a small fraction of domain data to learn, the domain classifier would learn from strongly unbalanced domain labels. In the case of strong class imbalance, the decision boundary of learnt discriminator would be far from Bayes optimal. With the strongly biased discriminator learnt in the single domain data stream, the biased discriminator won't produce the correct signal to learn a domain invariant features. 
Instead of using a binary domain discriminator of scalar outputs $h_{\psi}(\cdot) : \sF \to \sR$, MDD\cite{zhang2019bridging} introduce a multi-class domain discriminator of vector outputs $h_{\psi}(\cdot) : \sF \to \sR^{|\mathcal{C}|}$. The {\it margin disparity} from hypothesis of task model $f_{\omega}$ to $h_{\psi}\circ f^1_{\omega}$ is used as the score-based function to measure whether a data instance $\rvx$ lies within the source domain distribution
\begin{definition}[Margin Disparity Discrepancy~\cite{zhang2019bridging}] \label{def:mdd}
    The margin disparity discrepancy is defined as a $\gH$-divergence between source and target domains. 
    \begin{equation}
    d^{\rho}_{f,\gH}(S,T) \triangleq \sup_{f'\in\gH} (disp^{\rho}_S(f',f)-disp^{\rho}_T(f',f))
    \end{equation}
    where $disp^{\rho}_D(f',f)$ is defined as the margin disparity between $f$ and $f'$ in domain $D$
    \begin{equation}
    disp^{\rho}_D(f,f') \triangleq \mathbb{E}_{\mathbf{x}_i, \sim D}\Phi^{\rho}(\rho_{f'}(\rvx_i, h_f(\rvx_i)))
    \end{equation}
    where $\rho_{f} $, $h_f(\rvx_i)$ and $\Phi^{\rho}$ is defined as 
    \begin{equation}
    \rho_{f}(\rvx_i, c) \triangleq f(\mathbf{x}_i, c) - \max_{c'\neq c }f(\mathbf{x}_i, c')
    \end{equation}
    \begin{equation}
    h_f(\rvx_i) \triangleq \argmax_cf(\rvx_i, c)
    \end{equation}
    \begin{equation}
    \Phi^{\rho}(x) =
  \begin{cases}
    1       & \quad \text{if } x < 0\\
    1- x/\rho & \quad \text{if } 0 \leq x \leq \rho \\
    0  & \quad \text{if } x > \rho
  \end{cases}
    \end{equation}
\end{definition}
Again, with the commonly used cross-entropy loss in training objectives, $\gH$-divergence of $d^{\rho}_{f,\gH}(S,T)$ is approximated as 
\begin{equation}
\begin{split}
d^{\rho}_{f,\gH}(S,T) &\approx \mathbb{E}_{S}\log(\softmax(h_\psi(f^1_{\omega}(\mathbf{x}^s_i)), h_f(\mathbf{x}^s_i))) \\
&-\mathbb{E}_{T}\log(1-\softmax(h_\psi(f^1_{\omega}(\mathbf{x}^t_i)),h_f(\mathbf{x}^s_i)))
\end{split}
\end{equation}
The MDD-induced training objective for source-only domain discriminator $\psi_s$ in Equation (\ref{sourcediscobj}) results in
% By introducing MDD loss in training source-only domain discriminator, the training objective of Equation (\ref{sourcediscobj}) leads to
\begin{equation}
\min_{\psi_s}\E_{\rvx_i^s \sim S_0}-\log(\softmax(h_{\psi,s}(f^1_{\omega}(\mathbf{x}^s_i)), \argmax_cf(\mathbf{x}^s_i)))
\end{equation}

% The training of target adaptation domain discriminator $\psi_t$ during $T_1$ shown in Equation (\ref{psitobj}) transforms to
% \begin{equation}
% \begin{split}
% \min_{\psi_t}[-&\mathbb{E}_{\mathbf{x}^s_i \sim \gM}\log(\softmax(h_{\psi,t}(f^1_{\omega}(\mathbf{x}^s_i)), \argmax_cf(\mathbf{x}^s_i))) \\- &\mathbb{E}_{\mathbf{x}^t_i \sim T_1}
% -\log(1 -\softmax(h_{\psi,t}(f^1_{\omega}(\mathbf{x}^t_i)), \argmax_cf(\mathbf{x}^t_i))) ] 
% \end{split}
% \end{equation}
The MDD form of adversarial loss for feature extractor $f_{\omega}^1$ from the ensembles of source and target domain discriminator, as expressed in Equation (\ref{fomegaobj}), is given by:
\begin{equation}
\label{advomega}
\begin{split}
&D_{\psi}^s(\mathbf{x}^s_i) = -\log(\softmax(h_{\psi, s}(f^1_{\omega}(\mathbf{x}^s_i))+h_{\psi, t}(f^1_{\omega}(\mathbf{x}^s_i))\\
&, \argmax_cf(\mathbf{x}^s_i)))\\ &D_{\psi}^t(\mathbf{x}^t_i) =- \log(1- \softmax(h_{\psi, s}(f^1_{\omega}(\mathbf{x}^t_i))+h_{\psi, t}(f^1_{\omega}(\mathbf{x}^t_i))\\
&, \argmax_cf(\mathbf{x}^t_i))) 
\end{split}
\end{equation}

The full description of our double-head domain discriminator algorithm for continual UDA is shown in Algorithm \ref{alg:doubleheadalg} of our appendix.
% Note that in the source only domain discriminator training step, the task model $f_{\omega}$ is freeze. And in the target adaptation phase $T_1$, both target domain discriminator $\psi_t$ and task model $f_{\omega}$ are adversarial trained by min-maximization step. The full description of our double head discriminator algorithm for continual UDA is shown in Algorithm \ref{alg:doubleheadalg} of our appendix.

\vspace{-1em}
\section{Theoretical Analysis of Algorithm} \label{sec:theory}
In this section, we relate the source-only domain discriminator $h_{\psi, s}(\cdot)$, which is trained on source domain data and freezed during $T_1$, to a fixed hypothesis $f_0$. Thus we study its effect on target adaptation in $T_1$. First, we show that in the population form, our domain adversarial function from the ensembles of two discriminator's digit constructs a $\gH$-divergence as generalization upper bound between source and target domain task's population risks. 
\begin{theorem}
For a hypothesis class $\gF$ and a fixed $f_0\in\gF$ where for every $f \in \gF$, $f-f_0$ is also in $\gF$, then we have the following property holds
\begin{equation}
err_T(f) \leq err_S^{(\rho)}(f) + d^{(\rho)}_{f,f_0, \gF}(S,T) + \lambda
\end{equation}
where $err_S^{(\rho)}(f)$, $d^{(\rho)}_{f,f_0, \gF}(S,T)$ and $\lambda$ is defined as
\begin{equation}
\begin{split}
&err_S^{(\rho)}(f)= \E_{(x_i,y_i)\sim S}\Phi_{\rho}\circ\rho_{f}(x_i, y_i)\\
&d^{(\rho)}_{f,f_0, \gF}(S,T)=\sup_{f'\in\gF}\{
\E_{x_i\sim T}\Phi_{\rho}\circ\rho_{f'+f_0}(x_i, h_f(x_i))\\
&-\E_{x_i\sim S}\Phi_{\rho}\circ\rho_{f'+f_0}(x_i, h_f(x_i))\}\\
&\lambda=\min_{f^{\star}\in\gF}err_S^{(\rho)}(f^{\star})+err_T^{(\rho)}(f^{\star}),
\end{split}
\end{equation}
\end{theorem}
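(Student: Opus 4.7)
The plan is to adapt the classical Margin Disparity Discrepancy proof of Zhang et al.\ (2019) to the present setting, in which the inner margin-scoring hypothesis inside the discrepancy is parametrized as $f'+f_0$ rather than just $f'$. The assumption ``$f-f_0\in\gF$ for every $f\in\gF$'' is used in one specific place: it guarantees that for any $f^{\star}\in\gF$, one has $f^{\star}-f_0\in\gF$, so $f^{\star}$ itself can be realized as $f'+f_0$ with $f'\in\gF$, and hence becomes an admissible competitor inside the sup defining $d^{(\rho)}_{f,f_0,\gF}(S,T)$.

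First I would fix a joint minimizer $f^{\star}\in\argmin_{f'\in\gF}\{err_S^{(\rho)}(f')+err_T^{(\rho)}(f')\}$, so that $\lambda=err_S^{(\rho)}(f^{\star})+err_T^{(\rho)}(f^{\star})$ by definition. I would then prove a first ``margin triangle inequality''
\begin{equation*}
err_T(f)\le err_T^{(\rho)}(f^{\star})+disp_T^{(\rho)}(f,f^{\star}),
\end{equation*}
via the pointwise decomposition $\1[h_f(x)\neq y]\le\1[h_f(x)\neq h_{f^{\star}}(x)]+\1[h_{f^{\star}}(x)\neq y]$ together with the two elementary dominations $\1[h_{f^{\star}}(x)\neq y]\le\Phi^{\rho}(\rho_{f^{\star}}(x,y))$ and $\1[h_{f^{\star}}(x)\neq h_f(x)]\le\Phi^{\rho}(\rho_{f^{\star}}(x,h_f(x)))$. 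Both of these follow from $\Phi^{\rho}(s)\ge 1$ whenever $s\le 0$, combined with the observation that $\rho_g(x,c)\le 0$ as soon as $c\neq h_g(x)$.

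Second, I would transfer the disparity from $T$ to $S$ by adding and subtracting $disp_S^{(\rho)}(f,f^{\star})$. Since $f^{\star}\in\gF$ and $f^{\star}-f_0\in\gF$ by hypothesis, the choice $f'=f^{\star}-f_0$ produces $f'+f_0=f^{\star}$ as an admissible competitor in the sup defining $d^{(\rho)}_{f,f_0,\gF}(S,T)$, so
\begin{equation*}
disp_T^{(\rho)}(f,f^{\star})-disp_S^{(\rho)}(f,f^{\star})\le d^{(\rho)}_{f,f_0,\gF}(S,T).
\end{equation*}
Third, I would apply a second pointwise inequality $\Phi^{\rho}(\rho_{f^{\star}}(x,h_f(x)))\le\Phi^{\rho}(\rho_{f^{\star}}(x,y))+\Phi^{\rho}(\rho_f(x,y))$, which is trivial when $h_f(x)=y$ and otherwise follows because the right-hand side already contains a term $\Phi^{\rho}(\rho_f(x,y))\ge 1$. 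Integrating over $S$ yields $disp_S^{(\rho)}(f,f^{\star})\le err_S^{(\rho)}(f)+err_S^{(\rho)}(f^{\star})$. Chaining the three inequalities and invoking the definition of $\lambda$ gives exactly $err_T(f)\le err_S^{(\rho)}(f)+d^{(\rho)}_{f,f_0,\gF}(S,T)+\lambda$.

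The main obstacle here is bookkeeping rather than depth: the definition of $d^{(\rho)}_{f,f_0,\gF}$ keeps $f$ fixed in the \emph{label} slot of $disp^{\rho}$ and varies the \emph{margin} slot, which is the opposite convention from the MDD discrepancy defined earlier in the paper. One must therefore be careful to use the ``label-on-$f$, margin-on-$f^{\star}$'' direction of the triangle inequality in Step~1, together with the matching direction of the $\Phi^{\rho}$-bound in Step~3, so that both transferred disparities land in precisely the form $disp_D^{(\rho)}(f,f^{\star})$ appearing under the sup. The shift-closedness assumption on $f_0$ is used only in Step~2, but is indispensable: without it, $f^{\star}$ need not belong to $\{f'+f_0:f'\in\gF\}$ and the key inequality $disp_T^{(\rho)}(f,f^{\star})-disp_S^{(\rho)}(f,f^{\star})\le d^{(\rho)}_{f,f_0,\gF}(S,T)$ would not follow from the definition of the sup.
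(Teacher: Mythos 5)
Your proposal is correct and follows essentially the same route as the paper: decompose $err_T(f)$ through the joint minimizer $f^{\star}$, add and subtract $disp_S^{(\rho)}(f,f^{\star})$, bound the resulting $T$-minus-$S$ disparity gap by the sup via the admissible competitor $f'=f^{\star}-f_0$ (the only place the shift-closedness of $\gF$ is used, exactly as in the paper), and control $disp_S^{(\rho)}(f,f^{\star})$ by $err_S^{(\rho)}(f)+err_S^{(\rho)}(f^{\star})$, which is the paper's appeal to Lemma~C.1 of Zhang et al. The only difference is that you spell out the pointwise $\Phi^{\rho}$ dominations that the paper leaves to the cited lemma, and you correctly flag the argument-order inconsistency in the paper's $disp^{\rho}$ notation.
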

\vspace{-1em}
\textbf{Remark}~~ The upper bound above has a similar form with the learning bound proposed by \cite{zhang2019bridging}. From the perspective of population loss, 
 our domain loss function from the ensembles of two discriminator's digit is equivalent to that of traditional MDD version where source-only domain discriminator $f_0$ is not introduced. 

Next, we bound on the gap between empirical estimations of domain adversarial loss and its populated version. We first introduce Rademacher complexity as the richness of mapping from an arbitrary input space $\gX \in \sD \to \sR$. 
% The Rademacher complexity captures the richness of a family of functions by
% measuring the degree to which a hypothesis set can fit random noise. 
The following
states the formal definitions of the empirical and average Rademacher complexity.
\begin{definition}
\label{rademacher}
\textbf{(Rademacher Complexity)} Let $\gG$ be a family of functions mapping from $\gX \in \sD \to \sR$. And $\hat{D}=\{(\rvx_0, \rvx_1, \ldots ,\rvx_n)\}$ is a fixed sample of size n drawn from distribution $\gD$ over $\sD$. Then the empirical Rademacher complexity w.r.t sample $\hat{D}$ is defined as
\vspace{-1em}
\begin{equation}
\hat{\Re}_{n, \hat{D}}(\gG)=\E_{\delta}\sup_{g\in\gG}\frac{1}{n}\sum_{i=1}^{n}\delta_ig(\rvx_i)
\end{equation}

where $\delta_i$'s independent uniform random variables taking values $\{+1, -1\}$. The random variables $\delta_i$ are called Rademacher variables.

\noindent The Rademacher complexity
of $\gG$ is the expectation of the empirical Rademacher complexity over all samples of
size n drawn according to $\gD$:
\vspace{-1em}
\begin{equation}
{\Re}_{n, \gD}(\gG) := \E_{\hat{D} \sim\gD}[\hat{\Re}_{n, \hat{D}}(\gG)] 
\end{equation}
\end{definition}
In the following, we define $\gG_s$ as a family of source domain discrepancy loss function associated to $\gF$ mapping from $\gX \in \sD \to \sR$, $\gG_t$ as a family of target domain discrepancy loss function associated to $\gF$ mapping from $\gX$ to $\sR$:
\vspace{-1em}
\begin{equation}
\begin{split}
\gG_s &= \{g_s : x \to \log(\frac{e^{\rho_{f'}(\rvx, h_f)}}{1+e^{\rho_{f'}(\rvx, h_f)}}): f,f' \in \gF \} \\
\gG_t &= \{g_t : x \to \log(\frac{1}{1+e^{\rho_{f'}(\rvx, h_f)}}): f,f' \in \gF \}
\end{split}
\end{equation}
With the Rademacher complexity defined above, we would proceed to show that our $\gH$-divergence based domain adversarial loss could be empirically estimated through finite samples of source domain data and target domain data. 
\begin{theorem}
\label{rademacherbondtheorem}
Let $f_0 \in \gF$ be a fixed hypothesis that maps from $\gX \times \gY \to \sR$ which satisfies $\rho_{f_0}(\rvx^s, h_f)\geq \epsilon_s$ for source domain data $\rvx^s \in S$ and $\rho_{f_0}(\rvx^t, h_f) \leq \epsilon_t$ for target domain data $\rvx^t \in T$. $\rvx_i^s$ is an i.i.d sample of size m drawn from the source distribution S and $\rvx_t^s$ is an i.i.d sample of size n drawn from the target distribution T. Given the same settings as Definition \ref{rademacher}. For any $\delta>0$, with the probability at least $1-2\delta$, we have the following generalization error bound for $\gH$-divergence based adversarial loss function 
\begin{equation}
\begin{split}
&\E_{\rvx^s\in S}[\log(\frac{e^{\rho_{f'}(\rvx^s, h_f)+ \rho_{f_0}(\rvx^s, h_f)}}{1+e^{\rho_{f'}(\rvx^s, h_f)+\rho_{f'}(\rvx^s, h_f)}})] \\
&+ \E_{\rvx^t\in T}[\log(\frac{1}{1+e^{\rho_{f'}(\rvx^t, h_f)+\rho_{f_0}(\rvx^t, h_f)}})]\\
&\leq  \frac{1}{m}\sum_{i=1}^{m}\log(\frac{e^{\rho_{f'}(\rvx^s_i, h_f)+ \rho_{f_0}(\rvx^s_i, h_f)}}{1+e^{\rho_{f'}(\rvx^s_i, h_f)+\rho_{f'}(\rvx^s_i, h_f)}})\\
&+ \frac{1}{n}\sum_{i=1}^{n}\log(\frac{1}{1+e^{\rho_{f'}(\rvx^t_i, h_f)+\rho_{f'}(\rvx^t_i, h_f)}})\\
&+ \max\{\frac{2}{(e^{\epsilon_s}-1)\lambda_s^++1}, \frac{2}{(e^{\epsilon_s}-1)\lambda_s^-+1}\}{\Re}_{m, \gD_s}(\gG_s) \\
&+ \max\{\frac{2e^{\epsilon_t}}{(1-\lambda_t^+)e^{\epsilon_t}+\lambda_t^+},\frac{2e^{\epsilon_t}}{(1-\lambda_t^-)e^{\epsilon_t}+\lambda_t^-}\}{\Re}_{n, \gD_t}(\gG_t) \\
&+ \sqrt{\frac{\log\frac{1}{\delta}}{2m}} + \sqrt{\frac{\log\frac{1}{\delta}}{2n}}
\end{split}
\end{equation}
where $\lambda_s^+$, $\lambda_s^-$, $\lambda_t^+$  and $\lambda_t^-$ is defined as
\begin{equation}
\begin{split}
\lambda_s^- &= \min\{\frac{e^{\rho_{f'}(\rvx^s, h_f)}}{1+e^{\rho_{f'}(\rvx^s, h_f)}}\},  \lambda_s^+ = \max\{\frac{e^{\rho_{f'}(\rvx^s, h_f)}}{1+e^{\rho_{f'}(\rvx^s, h_f)}}\}, \forall \rvx^s \in S  \\ 
\lambda_t^- &= \min\{\frac{1}{1+e^{\rho_{f'}(\rvx^t, h_f)}}\}, \lambda_t^+ = \max\{\frac{1}{1+e^{\rho_{f'}(\rvx^t, h_f)}}\}, \forall \rvx^t \in T 
\end{split}
\end{equation}
\end{theorem}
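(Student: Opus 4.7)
The plan is to treat the source and target expectations independently, apply a standard one-sided Rademacher-complexity uniform convergence bound to each, and then reduce the Rademacher complexities of the $f_0$-shifted loss classes back to those of the base classes $\gG_s$ and $\gG_t$ via Talagrand's contraction principle. Concretely, I introduce the composite classes
\begin{equation*}
\tilde{\gG}_s = \Bigl\{\rvx \mapsto \log\tfrac{e^{\rho_{f'}(\rvx, h_f)+\rho_{f_0}(\rvx,h_f)}}{1+e^{\rho_{f'}(\rvx, h_f)+\rho_{f_0}(\rvx,h_f)}} : f,f' \in \gF\Bigr\},
\end{equation*}
and the analogous $\tilde{\gG}_t$ on the target side. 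McDiarmid combined with symmetrization yields, with probability $\geq 1-\delta$ each,
\begin{equation*}
\mathbb{E}_{S}[\tilde g_s] \leq \tfrac{1}{m}\sum_{i=1}^{m} \tilde g_s(\rvx^s_i) + 2\,\Re_{m,\gD_s}(\tilde{\gG}_s) + \sqrt{\tfrac{\log(1/\delta)}{2m}},
\end{equation*}
and the corresponding target inequality of size $n$; a union bound produces the $2\delta$ confidence and the two $\sqrt{\log(1/\delta)/(2\cdot)}$ slack terms.

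The technical core, and the main obstacle, is bounding $\Re_{m,\gD_s}(\tilde{\gG}_s)$ in terms of $\Re_{m,\gD_s}(\gG_s)$. Writing $u=\rho_{f'}(\rvx,h_f)$, $c=\rho_{f_0}(\rvx,h_f)$, a direct algebraic identity gives
\begin{equation*}
\log\sigma(u+c) = c + \log\sigma(u) - \log\bigl(1+(e^c-1)\sigma(u)\bigr),
\end{equation*}
so viewing $\tilde g_s$ as a per-sample function of the scalar $g_s(\rvx)=\log\sigma(u)$, the chain rule yields derivative $1/(1+(e^c-1)\sigma(u))$. Under $c \geq \epsilon_s$ and $\sigma(u)\in[\lambda_s^-,\lambda_s^+]$, the worst-case Lipschitz constant is
\begin{equation*}
L_s := \max\Bigl\{\tfrac{1}{1+(e^{\epsilon_s}-1)\lambda_s^+},\; \tfrac{1}{1+(e^{\epsilon_s}-1)\lambda_s^-}\Bigr\},
\end{equation*}
where the two candidates correspond to the two signs of $e^{\epsilon_s}-1$. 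An analogous manipulation on the target side, using $\log(1-\sigma(u+c)) = g_t - \log(e^c+(1-e^c)e^{g_t})$ with $g_t=\log(1-\sigma(u))$, produces derivative $e^c/\bigl(e^c+(1-e^c)(1-\sigma(u))\bigr)$; optimising over $c\leq\epsilon_t$ (worst at $c=\epsilon_t$) and $1-\sigma(u)\in[\lambda_t^-,\lambda_t^+]$ gives
\begin{equation*}
L_t := \max\Bigl\{\tfrac{e^{\epsilon_t}}{e^{\epsilon_t}+(1-e^{\epsilon_t})\lambda_t^+},\; \tfrac{e^{\epsilon_t}}{e^{\epsilon_t}+(1-e^{\epsilon_t})\lambda_t^-}\Bigr\}.
\end{equation*}

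Finally I invoke Talagrand's contraction lemma — for per-sample $L$-Lipschitz maps, $\hat{\Re}(\phi\circ\gG)\leq L\,\hat{\Re}(\gG)$, and the inequality is preserved by taking expectations over the sample — to obtain $\Re_{m,\gD_s}(\tilde{\gG}_s)\leq L_s \Re_{m,\gD_s}(\gG_s)$ and $\Re_{n,\gD_t}(\tilde{\gG}_t)\leq L_t \Re_{n,\gD_t}(\gG_t)$. Combining with the factor $2$ from symmetrisation produces precisely the coefficients $2L_s$ and $2L_t$ stated in the theorem, and summing the two one-sided bounds yields the claimed inequality. I expect the subtle points to be (i) the Lipschitz identity and the case analysis determining which of $\lambda^\pm$ achieves the worst case depending on the sign of $e^{\epsilon_s}-1$ (resp.\ $1-e^{\epsilon_t}$), and (ii) a mild centering adjustment required so that Talagrand's contraction applies to maps not vanishing at the origin, which is absorbed into an additive data-independent constant that does not affect the Rademacher complexity.
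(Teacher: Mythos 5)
Your proposal is correct and follows essentially the same route as the paper: a one-sided Rademacher uniform-convergence bound applied separately to the source and target terms, a reparametrization of the $f_0$-shifted loss as a per-sample Lipschitz transformation of the base losses in $\gG_s$ and $\gG_t$ (your identity $\log\sigma(u+c)=c+\log\sigma(u)-\log(1+(e^c-1)\sigma(u))$ is exactly the paper's $\Gamma_i$, and your derivative formulas and worst-case constants $2L_s$, $2L_t$ coincide with its $l_\Gamma$, $l_\Psi$), followed by Talagrand's contraction and a union bound. The only cosmetic difference is that the paper inverts $z=\log\sigma(u)$ explicitly rather than using your additive identity, and it applies the per-sample form of Talagrand's lemma directly without the centering adjustment you anticipate.
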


% \textbf{Remark}~~ This theorem justifies that the populated domain adversarial loss with respected to $\gH$-divergence could be approximated by the empirical one computed from finite source and target domain samples. Through the introduction of source-only domain discriminator related $f_0$, the empirical estimation error from source and target domain could be non-uniform. The error term in source domain is controlled by $\epsilon_s$. And the error term in target domain is controlled by $\epsilon_t$. A large $\epsilon_s$ would achieve a lower generalization error from source domain side. Conversely over-training on $f_0$ would cause a large $\epsilon_t$ which would result in a larger generalization error from target domain side. Therefore, our theorem shows that there is a trade-off between generalization error from source and target domain side. Note that in the continual UDA case, there are only few stored source domain samples for target domain adaptation. The source domain empirical error $\Re_{m, \gD_s}(\gG_s )$ becomes the major source of empirical estimation error. Thus it is justifiable to introduce a $f_0$ with a larger $\epsilon_s$ to exchange source domain error with target domain error. Our theorem also emphasises the importance of training a better one-class score based function $\rho_{f_0}(\rvx^s, h_f)$ with a higher score for in-distribution data on source domains than outliers. 

\textbf{Remark}~~ This theorem justifies that the populated domain adversarial loss with respected to $\gH$-divergence could be approximated by the empirical one computed from finite source and target domain samples. With the introduction of $f_0$, the empirical generalization error from source domain side is reduced with a larger source domain source of $\epsilon_s$. Conversely over-training on $f_0$ would cause a larger $\epsilon_t$ which would result in a larger generalization error from target domain side. Our theorem also emphasises the importance of training a better one-class score based function $\rho_{f_0}(\rvx^s, h_f)$ with a higher score for in-distribution data on source domains than outliers. 

Finally, we analyze on the equilibrium of our adversarial loss w.r.t generator and discriminators. We would show that how our introduced source only domain discriminator's score $\sigmoid_{h_f}\circ f'_s$ controls the magnitudes of consistency between source and target domain's distributions.
\begin{proposition}
Consider the following optimization problem we have defined
\begin{equation}
\begin{split}
&\max_{f'}\E_{\hat{S}}\log(\sigmoid_{h_f}\circ f')+\E_{\hat{T}}\log(1-\sigmoid_{h_f}\circ f')\\
&\min_{\hat{S},\hat{T}}\E_{\hat{S}}\log(\frac{1}{2}\sigmoid_{h_f}\circ f'+\frac{1}{2}\sigmoid_{h_f}\circ f_0)\\
&+\E_{\hat{T}}\log(1-\frac{1}{2}\sigmoid_{h_f}\circ f'-\frac{1}{2}\sigmoid_{h_f}\circ f_0)
\end{split}
\end{equation}
Assume that there is no restriction on the choice of $f'$. By fixing $f_0$, we have the following result.
% \begin{enumerate}
% \item The optimal value of $\sigmoid_{h_f}\circ f'$ on data x is 
% \begin{equation}
% \frac{\hat{p}_s(\rvx)}{\hat{p}_s(\rvx)+\hat{q}_t(\rvx)}
% \end{equation}
% where $\hat{p}_s(\rvx)$ and $\hat{q}_t(\rvx)$ are density functions of source and target domain empirical distributions

The minimization problem w.r.t $S$ and $T$ is equivalent to minimization on the sum of two terms $L_1$ and $L_2$
\begin{equation}
\begin{split}
    &L_1 =\\
    &4KL(\frac{3}{4}\hat{T}+\frac{1}{4}\hat{S}||\frac{1}{2}\hat{T}+\frac{1}{2}\hat{S}) + 4KL(\frac{1}{2}\hat{T}+\frac{1}{2}\hat{S}||\frac{3}{4}\hat{T}+\frac{1}{4}\hat{S})\\
     +& 4KL(\frac{3}{4}\hat{S}+\frac{1}{4}\hat{T}||\frac{1}{2}\hat{T}+\frac{1}{2}\hat{S}) + 4KL(\frac{1}{2}\hat{T}+\frac{1}{2}\hat{S}||\frac{3}{4}\hat{S}+\frac{1}{4}\hat{T})
\end{split}
\end{equation}
is a symmetric distribution divergence between $\hat{S}$ and $\hat{T}$ and has global minimum at $\hat{S}=\hat{T}$
\begin{equation}
\begin{split}
L_2 =&\int_{\rvx}(1-2\sigmoid_{h_f}\circ f_0(\rvx))(\hat{q}_t(\rvx)-\hat{p}_s(\rvx))\\
&\frac{1}{4-(\hat{p}_s(\rvx)-\hat{q}_t(\rvx))^2/(\hat{p}_s(\rvx)+\hat{q}_t(\rvx))^2}d\rvx
\end{split}
\end{equation}
is a re-weighted bounds on the total variations between $\hat{p}_s(\rvx)$ and $\hat{q}_t(\rvx)$
%\end{enumerate}
\end{proposition}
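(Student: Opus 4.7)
The plan is to reduce the bilevel saddle point to a minimization purely in $(\hat{S},\hat{T})$ by first solving the inner $\max_{f'}$ in closed form, and then algebraically decomposing the resulting functional into a symmetric piece that will become $L_1$ and an $\alpha$-dependent piece that will become $L_2$. Since $f'$ is unconstrained and $\sigmoid_{h_f}\circ f'$ can take any value in $(0,1)$ pointwise, the pointwise maximization of $\hat{p}_s(\rvx)\log t+\hat{q}_t(\rvx)\log(1-t)$ in $t$ yields the Goodfellow-style optimum $(\sigmoid_{h_f}\circ f')^{\star}(\rvx)=\hat{p}_s(\rvx)/(\hat{p}_s(\rvx)+\hat{q}_t(\rvx))$. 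Writing $\alpha(\rvx):=\sigmoid_{h_f}\circ f_0(\rvx)$ and $r(\rvx):=(\hat{p}_s-\hat{q}_t)/(\hat{p}_s+\hat{q}_t)$, substituting this optimum back and clearing the common denominator $2(\hat{p}_s+\hat{q}_t)$ reduces the outer objective, up to an additive $-2\log 4$, to
$$V(\hat{S},\hat{T}) = \int \hat{p}_s \log\bigl(1+r+2\alpha\bigr)\, d\rvx + \int \hat{q}_t \log\bigl(3-r-2\alpha\bigr)\, d\rvx.$$

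I would then split $V=V|_{\alpha\equiv 1/2}+(V-V|_{\alpha\equiv 1/2})$. For the $\alpha\equiv 1/2$ piece, the log-arguments reduce to $(2+r)=(3\hat{p}_s+\hat{q}_t)/(\hat{p}_s+\hat{q}_t)$ and $(2-r)=(\hat{p}_s+3\hat{q}_t)/(\hat{p}_s+\hat{q}_t)$. Introducing $M:=\tfrac{1}{2}(\hat{S}+\hat{T})$ and the biased mixtures $A_{+}:=\tfrac{3}{4}\hat{T}+\tfrac{1}{4}\hat{S}$, $A_{-}:=\tfrac{3}{4}\hat{S}+\tfrac{1}{4}\hat{T}$, the identity $A_{\pm}-M=\pm\tfrac{1}{4}(\hat{q}_t-\hat{p}_s)$ lets me recombine each forward/reverse pair $4\mathrm{KL}(A_{\pm}\|M)+4\mathrm{KL}(M\|A_{\pm})$ into the Jeffreys form $4\int(A_{\pm}-M)\log(A_{\pm}/M)\, d\rvx$; summing the two orientations and matching coefficients against $V|_{\alpha\equiv 1/2}$ reproduces the four-KL expression $L_1$ of the statement.

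For the residual I use the exact identity
$$V - V|_{\alpha\equiv 1/2} = \int \hat{p}_s \log\!\left(1+\tfrac{2\alpha-1}{2+r}\right) d\rvx + \int \hat{q}_t \log\!\left(1-\tfrac{2\alpha-1}{2-r}\right) d\rvx.$$
Linearizing with $\log(1+x)\approx x$ and combining the two integrands via the key simplification $r(\hat{p}_s+\hat{q}_t)=\hat{p}_s-\hat{q}_t$ collapses the numerator $2(\hat{p}_s-\hat{q}_t)-r(\hat{p}_s+\hat{q}_t)$ to a single $(\hat{p}_s-\hat{q}_t)$, yielding $(2\alpha-1)(\hat{p}_s-\hat{q}_t)/[(2+r)(2-r)]$, which is precisely $-L_2$ since $(2+r)(2-r)=4-r^2=4-((\hat{p}_s-\hat{q}_t)/(\hat{p}_s+\hat{q}_t))^2$.

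The remaining claims then follow quickly: $L_1\ge 0$ with equality iff $\hat{p}_s=\hat{q}_t$ comes from Gibbs' inequality applied to each of the four KL summands together with the observation that all four biased mixtures coincide with $M$ at $\hat{S}=\hat{T}$, and symmetry in $\hat{S}\leftrightarrow\hat{T}$ is manifest by construction. For $L_2$, the weight $[4-r^2]^{-1}\in[\tfrac{1}{4},\tfrac{1}{3}]$ is strictly positive and bounded, so $L_2$ is a signed, re-weighted version of $(\hat{q}_t-\hat{p}_s)$ whose magnitude is controlled by the total variation between $\hat{p}_s$ and $\hat{q}_t$, as asserted. The main technical obstacle I anticipate is the Jeffreys recombination in the second step: direct substitution of $f'^{\star}$ naturally produces expectations of $\log[(3\hat{p}_s+\hat{q}_t)/(4(\hat{p}_s+\hat{q}_t))]$ rather than the four-KL form, and bookkeeping the $\log 4$ and $\log(\hat{p}_s+\hat{q}_t)$ factors so that they telescope against $L_1$ (rather than leaving a Jensen-gap residual that couples back into the minimizer) is the subtle step, one that I expect to handle by grouping the forward/reverse KL pairs according to which mixture they bias toward before matching to the cross-expectation $\int p_s\log(2+r)+\int q_t\log(2-r)$ that the substitution produces.
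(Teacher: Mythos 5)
Your proposal follows the paper's proof essentially step for step: solve the inner maximization pointwise to get $\sigmoid_{h_f}\circ f'=\hat{p}_s/(\hat{p}_s+\hat{q}_t)$, substitute into the outer objective, split off the value at $\sigmoid_{h_f}\circ f_0\equiv\tfrac12$ as $L_1$, and obtain $L_2$ by first-order Taylor expansion in $\tfrac12\sigmoid_{h_f}\circ f_0-\tfrac14$. Your $L_2$ computation is identical to the paper's after the change of variables $r=(\hat{p}_s-\hat{q}_t)/(\hat{p}_s+\hat{q}_t)$, including the final identity $(3\hat{p}_s+\hat{q}_t)(3\hat{q}_t+\hat{p}_s)=4(\hat{p}_s+\hat{q}_t)^2-(\hat{p}_s-\hat{q}_t)^2$.

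The step you flag as the anticipated obstacle is real, and it does not close the way you (or the paper) hope. With $M=\tfrac12(\hat{S}+\hat{T})$, $A_-=\tfrac34\hat{S}+\tfrac14\hat{T}$, $A_+=\tfrac34\hat{T}+\tfrac14\hat{S}$, the four-KL sum telescopes via the Jeffreys identity to $\int(\hat{p}_s-\hat{q}_t)\log(A_-/A_+)\,d\rvx$, whereas the $\alpha\equiv\tfrac12$ piece equals $\int\hat{p}_s\log(A_-/M)+\hat{q}_t\log(A_+/M)\,d\rvx$ minus $\log 4$; their difference is the cross term $\int\hat{q}_t\log(A_-/M)+\hat{p}_s\log(A_+/M)\,d\rvx$, which is not constant in $(\hat{S},\hat{T})$ (it vanishes at $\hat{S}=\hat{T}$ but not elsewhere, as a two-atom example shows). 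The paper's own manipulation asserts $\E_{\hat{S}}[g]=-4\E_{M}[g]+4\E_{A_-}[g]$ per integrand, but $4A_--4M=\hat{S}-\hat{T}$, so this computes $\E_{\hat{S}-\hat{T}}[g]$ and silently drops $\E_{\hat{T}}[g]$; your planned "grouping by which mixture the pair biases toward" reproduces exactly this slip. The qualitative conclusion survives without the four-KL form: writing $u=\hat{p}_s/(\hat{p}_s+\hat{q}_t)$, the $\alpha\equiv\tfrac12$ piece is $\int(\hat{p}_s+\hat{q}_t)\phi(u)\,d\rvx$ with $\phi(u)=u\log\tfrac{2u+1}{4}+(1-u)\log\tfrac{3-2u}{4}$, and $\phi$ is strictly convex with its minimum at $u=\tfrac12$, so the expectation form of $L_1$ is a symmetric divergence minimized exactly at $\hat{S}=\hat{T}$. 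If you want the proposition as literally stated, you must either prove the identity up to these non-constant cross terms or replace the four-KL expression by the direct expectation form.
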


\textbf{Remark}~~ 
Recall that $\sigmoid_{h_f}\circ f_0(\rvx)$ is the output score of source only domain discriminator for the possibilities of $\rvx$ belongs to source domain. Assuming that for in-distribution area of source domain $\rvx$, $\sigmoid_{h_f}\circ f_{0}(\rvx)>\epsilon$, where $\hat{p}_s(\rvx)-\hat{q}_t(\rvx)>0$. Otherwise for out-of-distribution area $\sigmoid_{h_f}\circ, f_{0}(\rvx)<\epsilon$, where $\hat{p}_s(\rvx)-\hat{q}_t(\rvx)>0$. $L_2$ would be further approximated as $\Tilde{L}_2$
\begin{equation}
\begin{split}
&\Tilde{L}_2 = 2\int_{\rvx}\ (\sigmoid_{h_f}\circ f_0(\rvx)-\epsilon)(\hat{q}_t(\rvx)-\hat{p}_s(\rvx))\\
&\frac{1}{4-(\hat{p}_s(\rvx)-\hat{q}_t(\rvx))^2/(\hat{p}_s(\rvx)+\hat{q}_t(\rvx))^2}d\rvx\\
&\|\Tilde{L}_2 - L_2 \| \leq \frac{1}{12}\|1-2\epsilon\|
% & L_2 = \int_{\rvx}\|2\sigmoid_{h_f}\circ f_s'(\rvx)-1\|\|\hat{q}_t(\rvx)-\hat{p}_s(\rvx)\|\frac{1}{4-(\hat{p}_s(\rvx)-\hat{q}_t(\rvx))^2/(\hat{p}_s(\rvx)+\hat{q}_t(\rvx))^2}d\rvx\\
% & \frac{1}{4}\int_{\rvx}\|2\sigmoid_{h_f}\circ f_s'(\rvx)-1\|\|\hat{q}_t(\rvx)-\hat{p}_s(\rvx)\|d\rvx \leq  L_2  \leq \frac{1}{3}\int_{\rvx}\|2\sigmoid_{h_f}\circ f_s'(\rvx)-1\|\|\hat{q}_t(\rvx)-\hat{p}_s(\rvx)\|d\rvx
\end{split}
\end{equation}
Furthermore, since $\sigmoid_{h_f}\circ f_{0}(\rvx)$ is learned on the entire source domain dataset as a source-based function that relates to source domain's distribution density, it re-weights the empirical distribution $\hat{p}_s(\rvx)$ based on a small number of samples from the source domain stored in $\gM$.  
% Moreover $f'_{s}(\rvx)$ is learnt on the whole source domain datasets where $\hat{p}_s(\rvx)$ is learnt only on a small subset of memory data on source domains. Similar as the functions of one-class predictor, $f'_{s}(\rvx)$ approximates on the population density of source data distributions. And in the loss function of $L_2$, $f'_{s}(\rvx)$ re-weights on source domain's empirical distribution $\hat{p}_s(\rvx)$. In high population density samples of $\hat{p}_s(\rvx)$, it has a higher weights. As result of using $f'_{s}(\rvx)$ to re-weight on samples, it reduces the variance of regularization term between source and target's feature distribution on the empirical estimations from a small memory set of source domain data.
\section{Related Works}
\textbf{Unsupervised Domain Adaptation} For UDA methods, besides adversarial domain adaptation\cite{ganin2016domain,zhao2018adversarial, long2017conditional,zhang2019bridging, saito2018maximum} that learns feature representations invariant between source and target domain. Self-Training(ST)\cite{arazo2020pseudo, pham2021meta} and Knowledge Distillation\cite{liang2022dine} is also widely adopted for UDA. Other works\cite{ding2022source, fleuret2021uncertainty, dey2022leveraging, kundu2020universal, li2020model, tian2021vdm, wang2022exploring, xia2021adaptive, yang2021generalized,yeh2021sofa, gong2022robust, niu2022efficient} also address the problem of UDA. Notably 
However these works require either freezing on the task model trained on source domain, caching the prototypical features of source domain or demanding specific engineering on task model structure, which limits its application in the general setting of CL. Notably, federated UDA\cite{shen2023fedmm, peng2019federated} proposes a simplified version CL where all labeled datasets are simultaneously accessible in a spatial isolated case.

\textbf{Domain Incremental Learning} The main goal for domain incremental learning is to consistently learn information on new domain, without forgetting the knowledge of previous domains. The first category of methods is by incrementally adding new task heads to fit on new domains\cite{rusu2016progressive, zhou2012online,cortes2017adanet, yoonlifelong, ji2023continual}. The second category of methods is using memory replay methods to store the data of previous domains\cite{lopez2017gradient,chaudhryefficient, chaudhry2019continual,riemerlearning,hayes2020remind, prabhu2020gdumb}. The third category of methods is to add regularization terms to constraint task's objectives to avoid forgetting\cite{li2017learning, kirkpatrick2017overcoming, zenke2017continual,fini2020online,volpi2021continual, ma2022progressive}.  

\section{Experiments} \label{sec:experiments}
To evaluate the effectiveness of double-head discriminator algorithm for Continual UDA. We first describe the benchmark datasets and other experiment settings in Section \ref{expsetup} of our appendix. Then we perform ablation study in Section \ref{ablsty}. Next, we compare ours with various existing methods in \ref{compuda}.
\subsection{Ablation Study}
\label{ablsty}

\noindent\textbf{Effect of Different Memory Size}
To investigate the effect of different memory sizes on the model performance, we evaluate on the task of continual adaptation to MNISTM, USPS and SVHN with memory sizes of 8, 16, 32, 64, 128 on each class of source domain(MNIST). The $+\infty$ shows the cases of offline adaptation where all source and target data would be accessed in an i.i.d way. We show our result in Fig (\ref{fig:memsize}). With the increasing memory buffer size, the performance would slightly increase. However our algorithm entails minimal performance loss from the smaller memory buffer size.

\begin{figure}
\centering
\begin{subfigure}{.15\textwidth}
  \centering
  \includegraphics[width=1.0\linewidth]{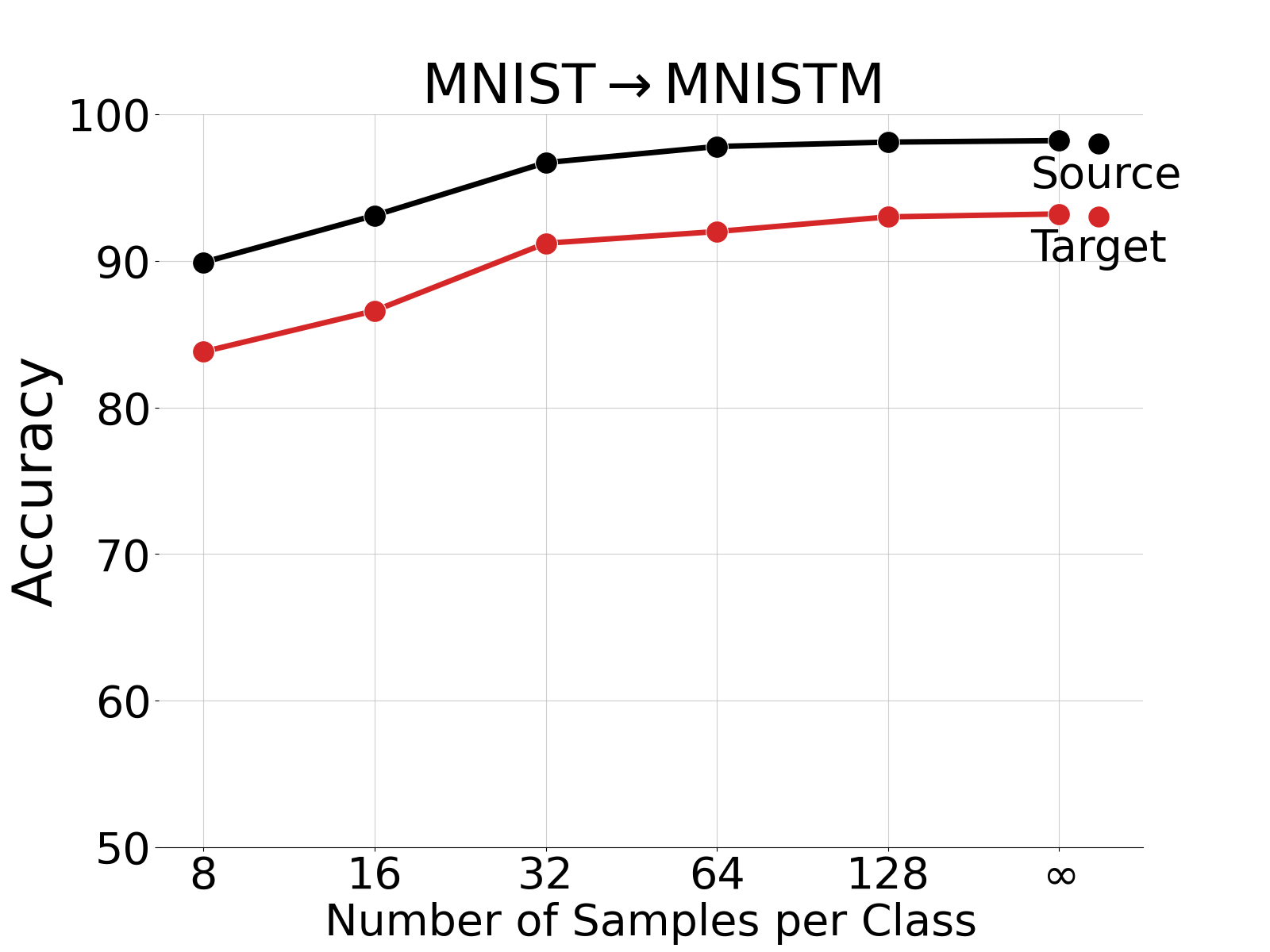}
\end{subfigure}%
\begin{subfigure}{.16\textwidth}
  \centering
  \includegraphics[width=1.0\linewidth]{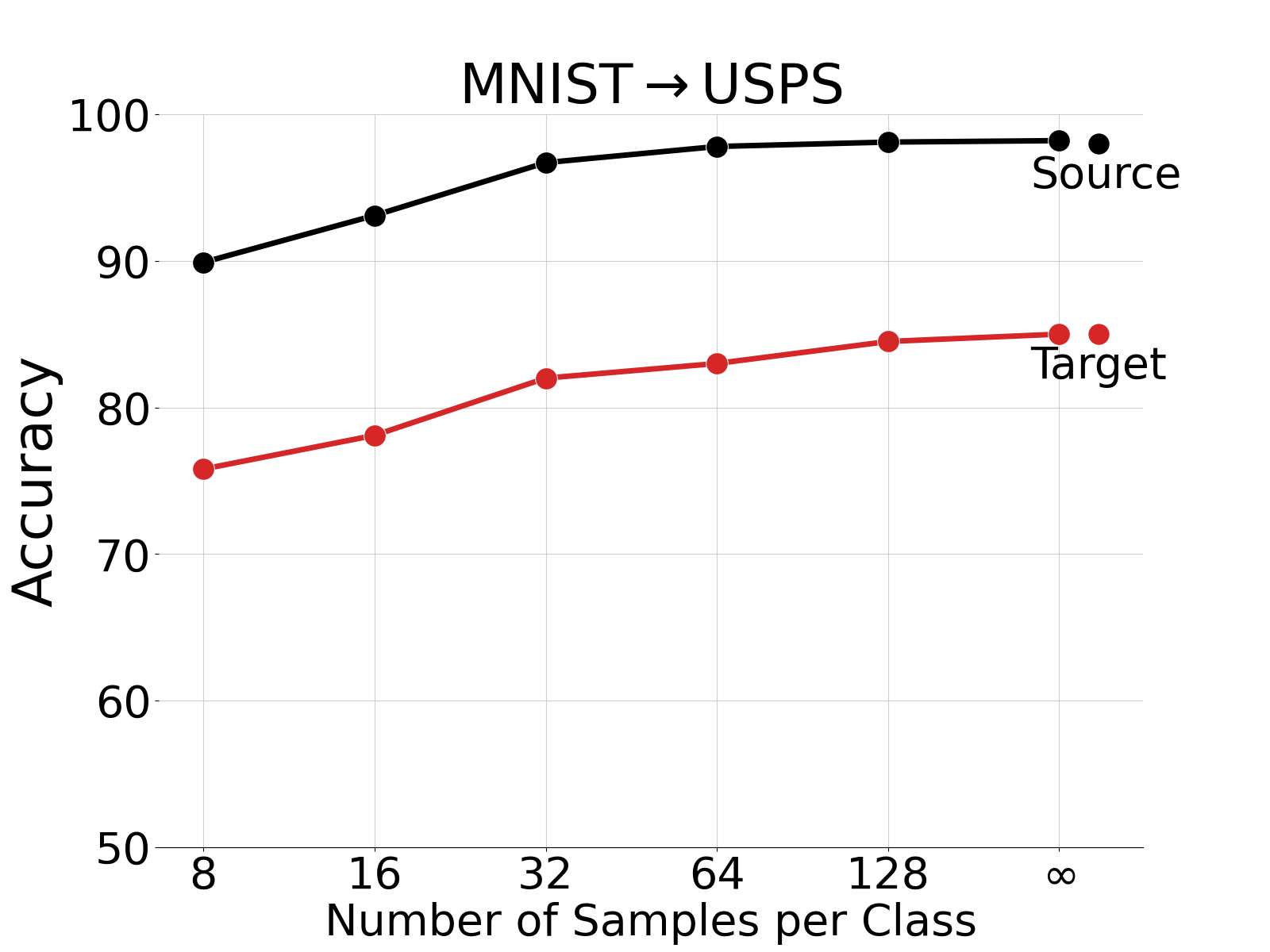}
\end{subfigure}
\begin{subfigure}{.16\textwidth}
  \centering
  \includegraphics[width=1.0\linewidth]{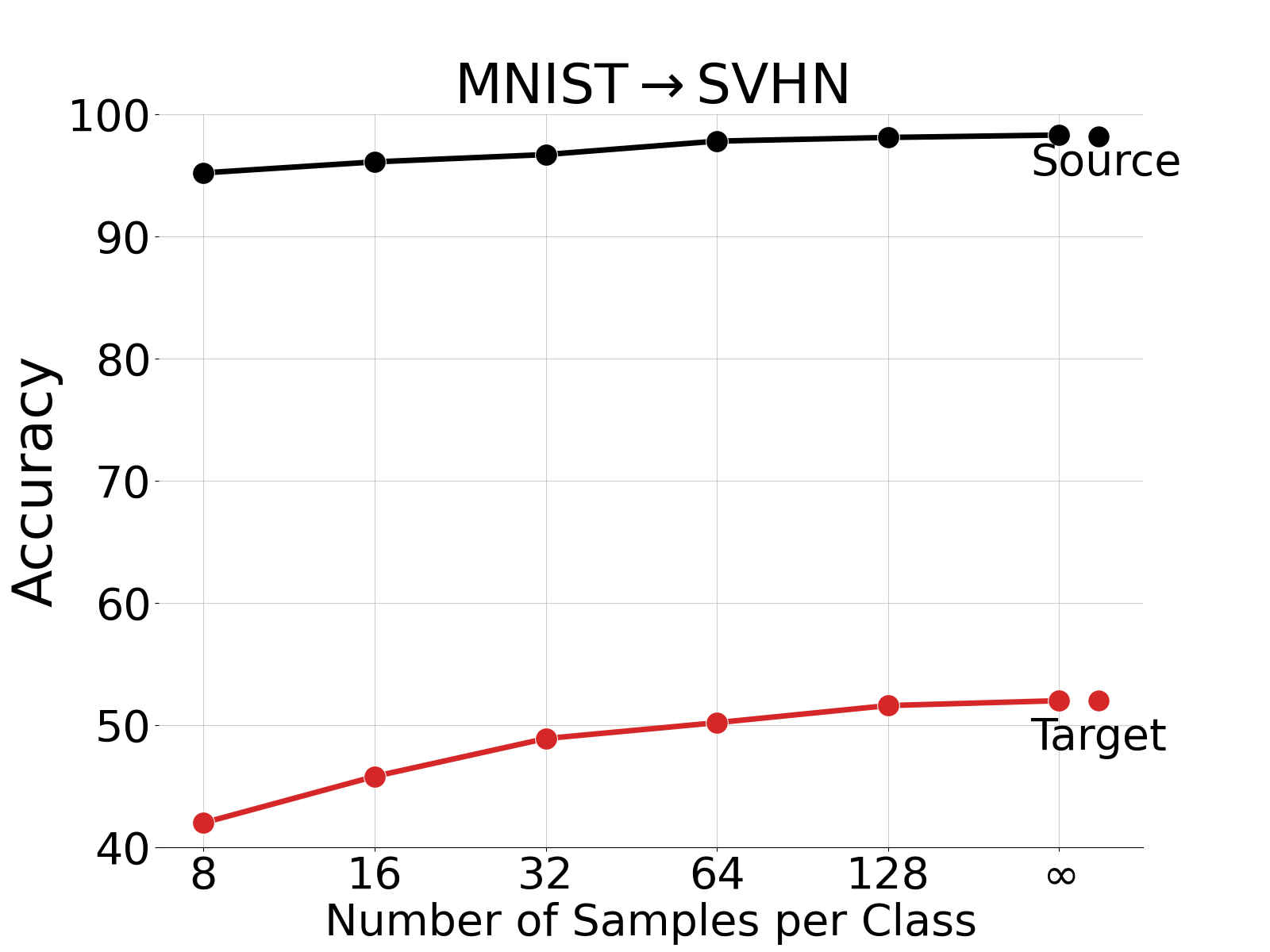}
\end{subfigure}%
\caption{Effect of different memory size on model performance}
\label{fig:memsize}
\end{figure}

\noindent\textbf{The Benefit of Source only Domain Discriminator}
To investigate the necessary of introducing source only domain discriminator $h_{\psi,s}$ in phase $T_1$ , we evaluate the contribution of $h_{\psi, s}$'s digits on learning the task model $f_{\omega}$ to adapt on target domain. 
Specifically, we use $h_{\psi,s}(f_{\omega}^1(\rvx_i)) + \gamma h_{\psi,t}((f_{\omega}^1(\rvx_i)))$ in Equation (\ref{advomega}) as the domain discriminator's signal to adapt $f_{\omega}$. We shift $\gamma$ from $0$ to $1$ where $h_{\psi, s}$ is gradually mixed with $h_{\psi, t}$. The result in Fig (\ref{fig:sdc}) showed that the performance is significantly lower in $\gamma=0$ where source only domain discriminator is not used for adaptation in phase $T_1$. Our result emphasizes the importance of introducing an additional pre-trained domain discriminator on $S_0$ phase. The choice of $\gamma$ to ensemble domain predictions from $h_{\psi,t}$ and $h_{\psi,s}$ adopts a wide range from $0.2$ to $1$. In MNIST to SVHN task, $\gamma=0.2$ has better result because the data variations of SVHN is much larger than MNIST and a smaller $\gamma$ would have less empirical error from target domain side as we have analyzed on Theorem \ref{rademacherbondtheorem}.

\begin{figure}
\centering
\begin{subfigure}{.15\textwidth}
  \centering
  \includegraphics[width=1.\linewidth]{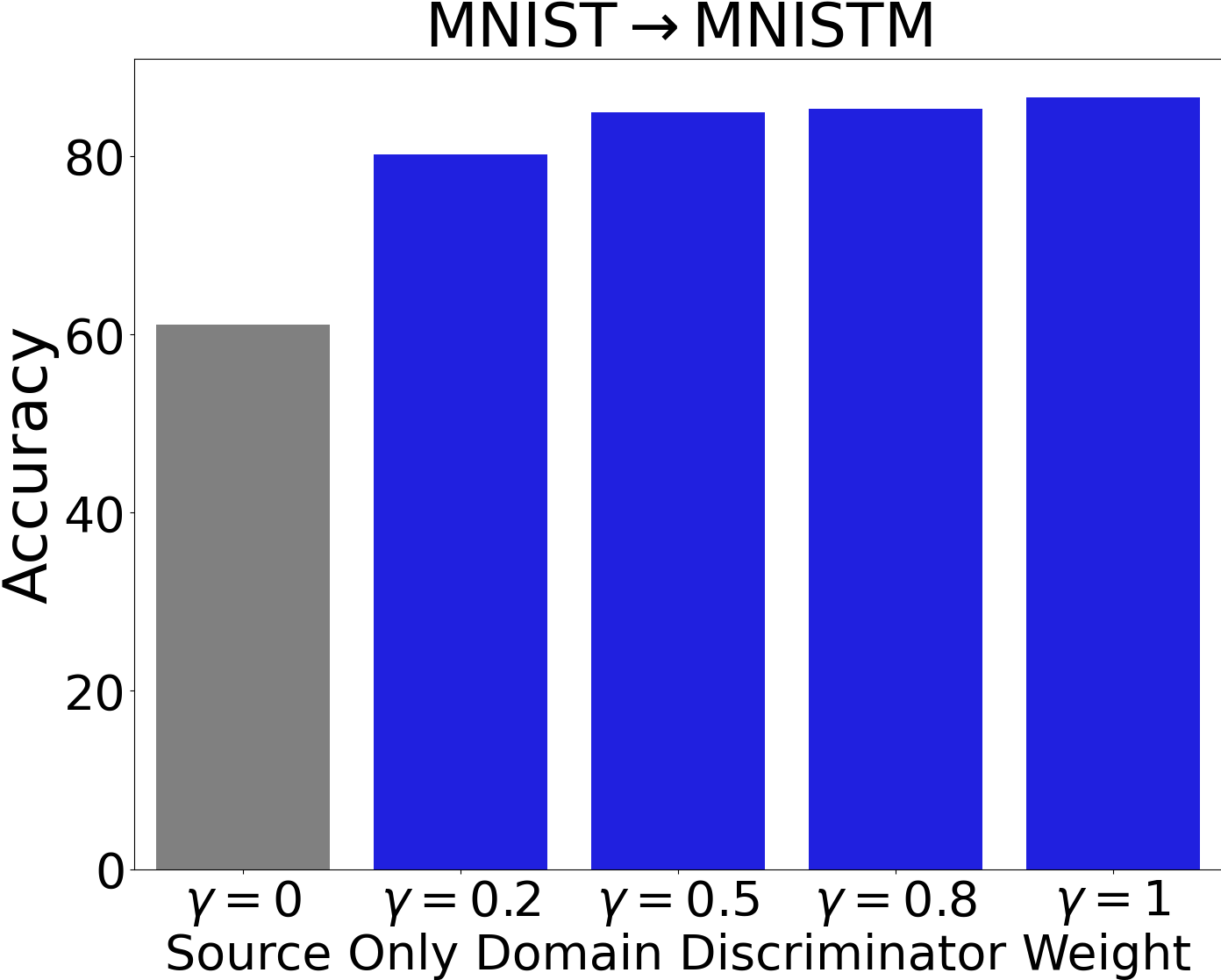}
\end{subfigure}%
\begin{subfigure}{.15\textwidth}
  \centering
  \includegraphics[width=1.\linewidth]{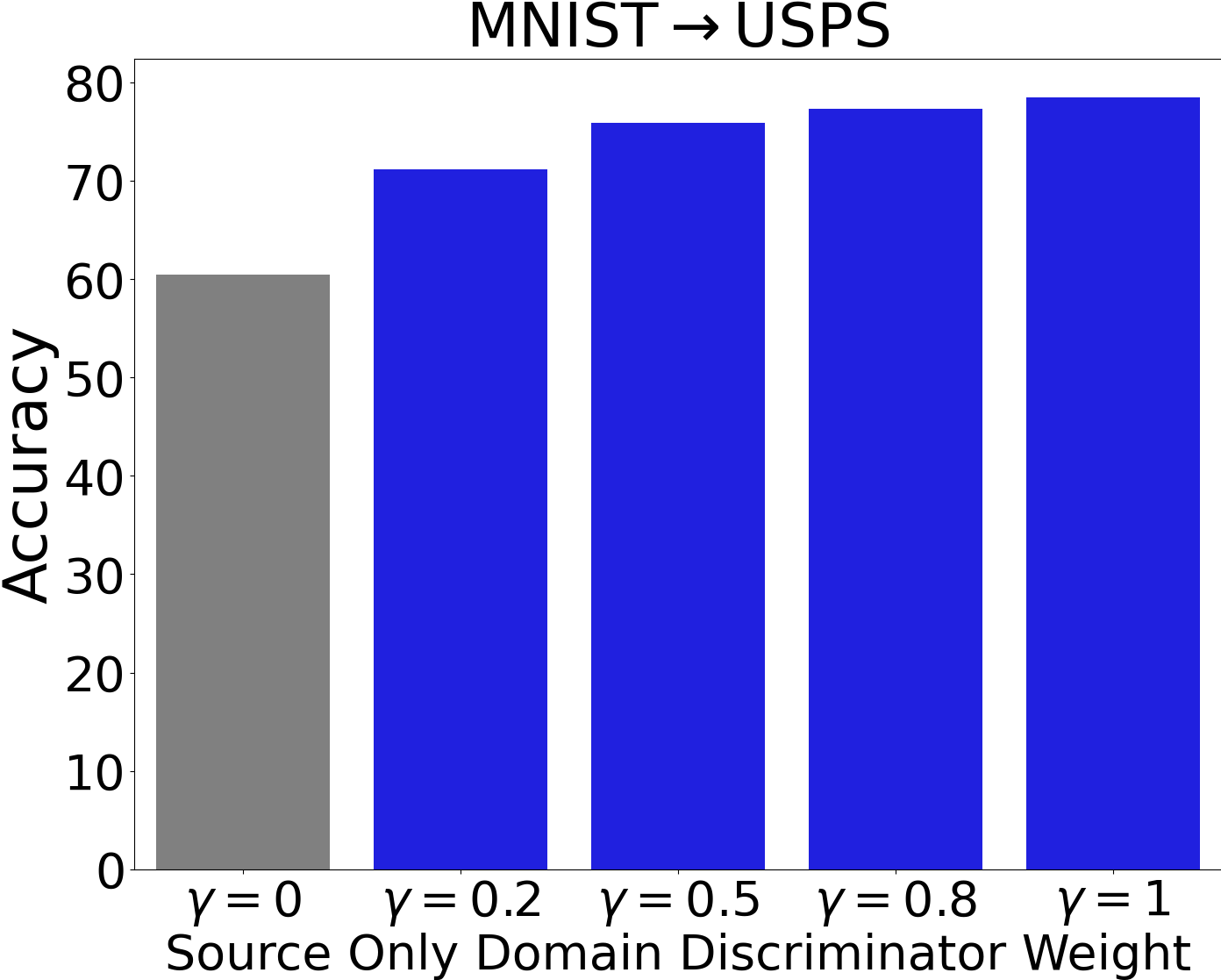}
\end{subfigure}
\begin{subfigure}{.15\textwidth}
  \centering
  \includegraphics[width=1.\linewidth]{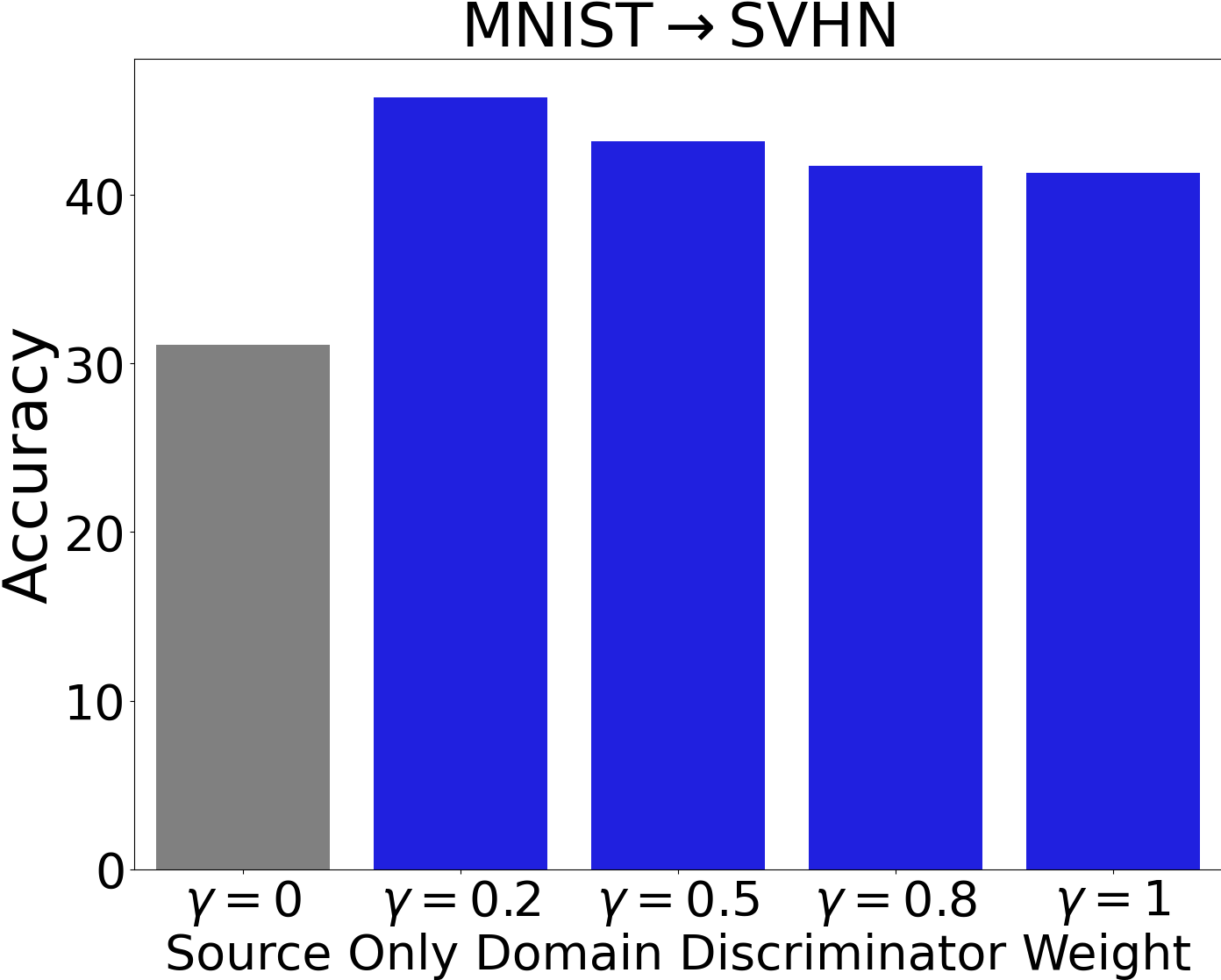}
\end{subfigure}%
\caption{Effect of source only domain discriminator's contribution on target adaptation performance}
\label{fig:sdc}
\end{figure}
%we compare the performance of the adversarial adapted model on double discriminators with the one that adapted solely on a discriminator trained on target adaptation phase. We uniformly use a small size source domain memory buffer of 16 per class in target adaption phase. The result in Tab XXX obviously showed the significantly higher performance by introducing source only domain discriminator. Adapting unlabeled target domain data solely from a small memory buffer of source domain data with a single domain discriminator trained from scratch on the target adaptation phase is not always sufficient for continual adaptation of the model to the target domain.

% \begin{table}
% \centering
% \caption{Benefit of Source only Domain Discriminator.}
% \begin{tabular}{lrrrrrrr|c|}
% \toprule
% \textbf{Discriminator Modes}& $T_1$ Only & Dual Head \\
% \midrule
% MNIST $\to$ MNISTM &$63.2$ &  $86.3$  \\
% % \midrule
% MNIST $\to$ USPS &$65.1$  & $78.6$\\
% % \midrule
% MNIST $\to$ SVHN &$32.1$ & $45.3$ \\
% \bottomrule
% \end{tabular}
% \label{tab:evolution_steps}
% \end{table}
\begin{table}
%\begin{subtable}{.5\linewidth}
\begin{adjustbox}{width=0.5\textwidth}
  \begin{tabular}{@{}lr|r|r|r|r|r@{\hskip 0.1in}l@{}}
  \toprule
                                  &     \multicolumn{1}{c}{}                & \multicolumn{4}{c}{Office-31 Target Domain Adaptations}                                                                                                                                                                                                                                                                                                                                                                                                                                                      &  \\ \cmidrule(l){2-7} 
                          Methods      & \multicolumn{1}{c|}{$A\to W$}                                                                                        & \multicolumn{1}{c|}{$D\to W$}                                                                                       & \multicolumn{1}{c|}{$W\to D$}                                                                                                             & \multicolumn{1}{c|}{$A\to D$}               & \multicolumn{1}{c|}{$D\to A$}
                                 & \multicolumn{1}{c}{$W\to A$} &  \\ \midrule              

NLL-OT{\cite{asano2019self}}     &  85.5 & 95.1 &  98.7 &  88.8         &  64.6         & 66.7         &  \\ \midrule
NLL-KL{\cite{zhang2021unsupervised}}      &  86.8         & 94.8 & 98.7                  & 89.4           & 65.1                   & 67.1 &   \\ \midrule
HD-SHOT{\cite{liang2020we}}                & 83.1          & 95.1 & 98.1          &  86.5          &  66.1          &  68.9 &   \\ \midrule
SD-SHOT{\cite{liang2020we}}      &  83.7        &  95.3&  97.1          & 89.2          & 67.9 & 71.1                  &   \\ \midrule
DINE{\cite{liang2022dine}}      &  86.8          &  96.2 &  98.6         &  91.6          & 72.2      &     73.3 &   

\\ \midrule

{\cellcolor[HTML]{EFEFEF}Ours}              & \cellcolor[HTML]{EFEFEF}92.6           & \cellcolor[HTML]{EFEFEF}97.3                & \cellcolor[HTML]{EFEFEF}99.2        & \cellcolor[HTML]{EFEFEF}92.0        & \cellcolor[HTML]{EFEFEF}73.9           & \cellcolor[HTML]{EFEFEF}73.8 &\\
\midrule
{\cellcolor[HTML]{EFEFEF}Ours+KD}              & \cellcolor[HTML]{EFEFEF}\textbf{93.8}           & \cellcolor[HTML]{EFEFEF}\textbf{98.4}                & \cellcolor[HTML]{EFEFEF}\textbf{100.0}        & \cellcolor[HTML]{EFEFEF}\textbf{93.8}        & \cellcolor[HTML]{EFEFEF}\textbf{74.0}           & \cellcolor[HTML]{EFEFEF}\textbf{75.6} &\\
\midrule
{\cellcolor[HTML]{EFEFEF}Ours+SL}              & \cellcolor[HTML]{EFEFEF}93.2           & \cellcolor[HTML]{EFEFEF}97.7                & \cellcolor[HTML]{EFEFEF}100.0        & \cellcolor[HTML]{EFEFEF}92.5        & \cellcolor[HTML]{EFEFEF}73.9           & \cellcolor[HTML]{EFEFEF}74.4 &\\
\midrule
{\cellcolor[HTML]{fde8e7}i.i.d-adv}              & \cellcolor[HTML]{fde8e7}94.5           & \cellcolor[HTML]{fde8e7}98.4                & \cellcolor[HTML]{fde8e7}100.0        & \cellcolor[HTML]{fde8e7}93.5        & \cellcolor[HTML]{fde8e7}74.6          & \cellcolor[HTML]{fde8e7}74.2
&  \\ \bottomrule
  \end{tabular}%
\end{adjustbox}
    \caption{Office-31 Target Domain Adaptation}
    \label{tab:office31taskadp}
\end{table}
%\end{subtable}
% \caption{Comparison of target domain adaptation's accuracy and source domain's average forgetting for different state-of-the-art Continual UDA baseline on Office-31 data sets for different domain adaptation tasks ($A\to W$,\ldots, $W\to A$). i.i.d-adv denotes the offline adversarial adaption performance where both source and target domain data could be accessed i.i.d.}
%\end{table}

\begin{table}
%\begin{subtable}{.5\linewidth}
\begin{adjustbox}{width=0.5\textwidth}
  \begin{tabular}{@{}lr|r|r|r|r|r@{\hskip 0.1in}l@{}}
  \toprule
                                 &     \multicolumn{1}{c}{}                & \multicolumn{4}{c}{Office-31 Source Domain Forgetting}                                                                                                                                                                                                                                                                                                                                                                                                                                                      &  \\ \cmidrule(l){2-7} 
                          Methods      & \multicolumn{1}{c|}{$A\to W$}                                                                                        & \multicolumn{1}{c|}{$D\to W$}                                                                                       & \multicolumn{1}{c|}{$W\to D$}                                                                                                             & \multicolumn{1}{c|}{$A\to D$}               & \multicolumn{1}{c|}{$D\to A$}
                                 & \multicolumn{1}{c}{$W\to A$} &  \\ \midrule              

NLL-OT{\cite{asano2019self}}     &  4.53 & 3.14 &  2.73 &  4.30         &  6.17          & 5.11         &  \\ \midrule
NLL-KL{\cite{zhang2021unsupervised}}      &  4.37          & 2.99 & 2.48                   & 4.02           & 5.94                   & 4.99 &   \\ \midrule
HD-SHOT{\cite{liang2020we}}                & 5.12          & 4.01 & 3.98          &  4.87          &  7.80          &  5.56 &   \\ \midrule
SD-SHOT{\cite{liang2020we}}      &  5.31         &  4.54 &  4.03          & 4.85          & 7.88 &        5.72            &   \\ \midrule
DINE{\cite{liang2022dine}}     &    3.81        &  2.16 &  1.50         &  3.32          & 5.08     &     3.98 &   

\\ \midrule

{\cellcolor[HTML]{EFEFEF}Ours}              & \cellcolor[HTML]{EFEFEF}$\textbf{1.97}$           & \cellcolor[HTML]{EFEFEF}\textbf{1.03}                & \cellcolor[HTML]{EFEFEF}\textbf{0.98}        & \cellcolor[HTML]{EFEFEF}\textbf{1.55}        & \cellcolor[HTML]{EFEFEF}\textbf{3.72}           & \cellcolor[HTML]{EFEFEF}\textbf{2.96}  &\\ \bottomrule
  \end{tabular}%
\end{adjustbox}
  \caption{Office-31 Source Domain Forgetting}
    \label{tab:office31forget}
%\end{subtable}
% \caption{Comparison of Domain Adaptation Performance on Office-31. i.i.d-adv denotes the offline adversarial adaption performance where both source and target domain data could be accessed i.i.d. Average forgetting denotes the source domain accuracy drop from the end of $S_0$ phase to the end of $T_1$ phase.}
\end{table}
\noindent\textbf{Effect of Learning rate and Epoch on Source-only Domain Discriminator}
Training source only domain discriminator has resemblance of one-class learning on a score-based function, the learning rate and epoch plays a important role to assure sensitivity to in-distribution data while prevent over saturation. Here we investigate the effect of learning rate and epoch on source-only domain discriminator in this section. We plot the combinatorial case on the learning rate of 0.0001, 0.0004, 0.001 and 0.002 and epochs of 1, 3, 5 and 7 training epochs as a heatmap that are shown in Fig (\ref{fig:heatmap}). We observe that pre-training a source-only domain discriminator in $S_0$ phase with a smaller learning rate and moderate number of training epochs would lead to better performance of target adaption in $T_1$ phase. This accords with the observation of the learning rate and epoch's effect on the performance of one-class learning in \cite{hu2020hrn}. For a stable and optimized performance, we choose the learning epoch source of 5 with learning rate of 0.0001 in the rest of our experiment. 

\begin{figure*}[htb!]
\centering
\begin{subfigure}{.33\textwidth}
  \centering
  \includegraphics[width=1.0\linewidth]{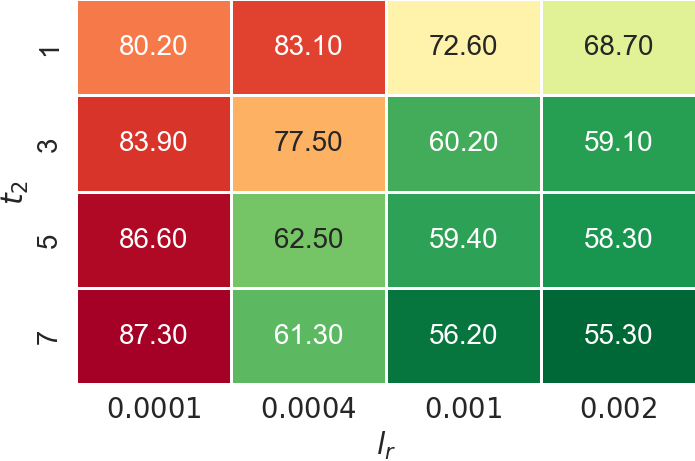}
  \caption{MNIST$\to$MNISTM}
\end{subfigure}%
\begin{subfigure}{.33\textwidth}
  \centering
  \includegraphics[width=1.0\linewidth]{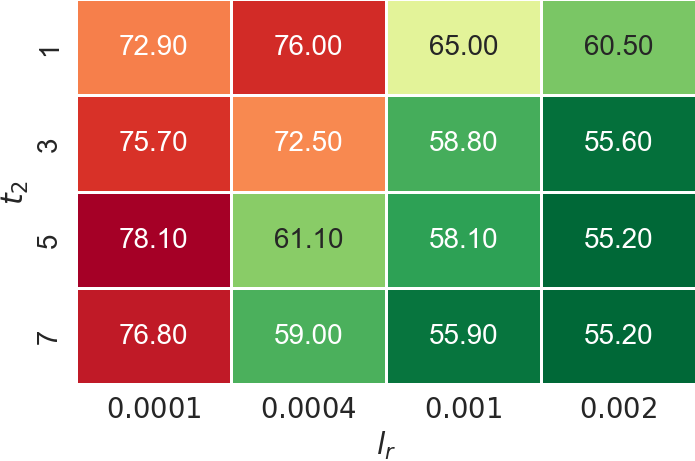}
  \caption{MNIST$\to$USPS}
\end{subfigure}
\begin{subfigure}{.33\textwidth}
  \centering
  \includegraphics[width=1.0\linewidth]{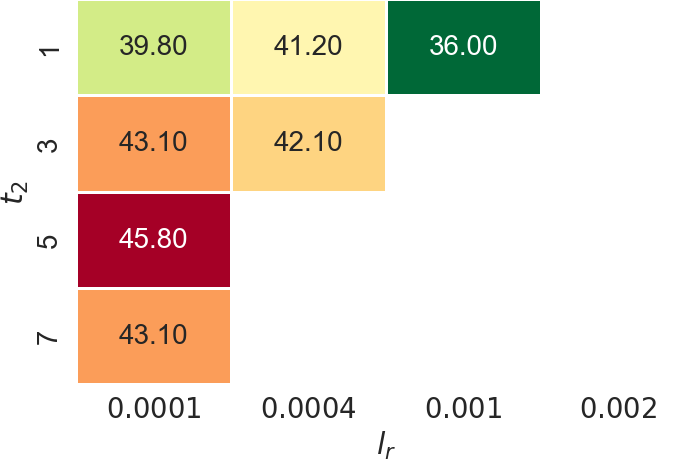}
  \caption{MNIST$\to$SVHN}
\end{subfigure}%
\caption{Effect of Source-only Domain Discriminator's learning rate $l_r$ and training epochs $t_2$ on target adaptation performance.}
\label{fig:heatmap}
\end{figure*}

\begin{table*}[!htp]
  \centering  

 \begin{adjustbox}{width=0.95\textwidth}
  \begin{tabular}{@{}lr|r|r|r|r|r|r|r|r|r|r|r@{\hskip 0.1in}l@{}}
  \toprule
                                 &       \multicolumn{1}{c}{}           & \multicolumn{9}{c}{Office-home Target Domain Adaptations}                                                                                                                                                                                                                                                                                                                                                                                                                                                      &  \multicolumn{1}{c}{}   \\ \cmidrule(l){2-13} 
                          Methods      & \multicolumn{1}{c|}{$Ar\to Cl$}                                                                                        & \multicolumn{1}{c|}{$Ar\to Pr$}                                                                                       & \multicolumn{1}{c|}{$Ar\to Re$}                                                                                                             & \multicolumn{1}{c|}{$Cl\to Ar$}               & \multicolumn{1}{c|}{$Cl\to Pr$}
                                 & \multicolumn{1}{c|}{$Cl\to Re$} & \multicolumn{1}{c|}{$Pr\to Ar$} &
                                 \multicolumn{1}{c|}{$Pr\to Cl$} &
                                 \multicolumn{1}{c|}{$Pr\to Re$} &
                                 \multicolumn{1}{c|}{$Re\to Ar$} &
                                 \multicolumn{1}{c|}{$Re\to Cl$} &
                                 \multicolumn{1}{c}{$Re\to Pr$} &
                                 \\ \midrule              

NLL-OT{\cite{asano2019self}}     &  49.1 & 71.7 &  77.3 &  60.2         &  68.7          & 73.1    &  57.0 & 46.5 &  76.8 &  67.0         &  52.3         & 79.5           &  \\ \midrule
NLL-KL{\cite{zhang2021unsupervised}}      &  49.0          & 71.5 & 77.1                  & 59.0           & 68.7                   & 72.9 &  56.4 & 46.9 &  76.6 &  66.2         &  52.3          & 79.1        &   \\ \midrule
HD-SHOT{\cite{liang2020we}}               & 48.6          & 72.8 & 77.0          &  60.7         &  70.0          &  73.2 &  56.6 & 47.0 &  76.7 &  67.5         &  52.6          & 80.2      &   \\ \midrule
SD-SHOT{\cite{liang2020we}}       &  50.1         &  75.0 &  78.8          & 63.2          & 72.9 & 76.4                &  60.0 & 48.0 &  79.4 &  69.2         &  54.2          & 81.6        &   \\ \midrule
DINE{\cite{liang2022dine}}       &  52.2          &  78.4 &  81.3         &  65.3          & 76.6      &     78.7 &  62.7 & 49.6 &  82.2 &  69.8         &  55.8          & 84.2       &   

\\ \midrule

{\cellcolor[HTML]{EFEFEF}Ours}              & \cellcolor[HTML]{EFEFEF}53.8           & \cellcolor[HTML]{EFEFEF}78.8                & \cellcolor[HTML]{EFEFEF}81.9       & \cellcolor[HTML]{EFEFEF}66.4        & \cellcolor[HTML]{EFEFEF}77.8           & \cellcolor[HTML]{EFEFEF}77.9 &\cellcolor[HTML]{EFEFEF}63.0          & \cellcolor[HTML]{EFEFEF}52.9                & \cellcolor[HTML]{EFEFEF}83.2        & \cellcolor[HTML]{EFEFEF}72.0        & \cellcolor[HTML]{EFEFEF}59.4           & \cellcolor[HTML]{EFEFEF}84.9 &\\
\midrule
{\cellcolor[HTML]{EFEFEF}Ours+KD}              & \cellcolor[HTML]{EFEFEF}\textbf{54.8}           & \cellcolor[HTML]{EFEFEF}\textbf{81.1}                & \cellcolor[HTML]{EFEFEF}\textbf{84.0}       & \cellcolor[HTML]{EFEFEF}\textbf{67.5}        & \cellcolor[HTML]{EFEFEF} \textbf{79.0}          & \cellcolor[HTML]{EFEFEF}\textbf{80.5} &\cellcolor[HTML]{EFEFEF}\textbf{65.1}           & \cellcolor[HTML]{EFEFEF}\textbf{53.8}                & \cellcolor[HTML]{EFEFEF}\textbf{84.5}        & \cellcolor[HTML]{EFEFEF}\textbf{73.2}        & \cellcolor[HTML]{EFEFEF}\textbf{60.0}           & \cellcolor[HTML]{EFEFEF}\textbf{86.7} &\\
\midrule
{\cellcolor[HTML]{EFEFEF}Ours+SL}              & \cellcolor[HTML]{EFEFEF}54.0           & \cellcolor[HTML]{EFEFEF}79.2                & \cellcolor[HTML]{EFEFEF}82.4      & \cellcolor[HTML]{EFEFEF}66.8        & \cellcolor[HTML]{EFEFEF}78.3           & \cellcolor[HTML]{EFEFEF}79.0 &\cellcolor[HTML]{EFEFEF}63.7           & \cellcolor[HTML]{EFEFEF}53.2                & \cellcolor[HTML]{EFEFEF}83.2        & \cellcolor[HTML]{EFEFEF}72.8        & \cellcolor[HTML]{EFEFEF}59.4           & \cellcolor[HTML]{EFEFEF}85.8 &\\
\midrule
{\cellcolor[HTML]{fde8e7}i.i.d-adv}              & \cellcolor[HTML]{fde8e7}54.9           & \cellcolor[HTML]{fde8e7}79.0                & \cellcolor[HTML]{fde8e7}82.8        & \cellcolor[HTML]{fde8e7}67.0        & \cellcolor[HTML]{fde8e7}78.7           & \cellcolor[HTML]{fde8e7}78.1  & \cellcolor[HTML]{fde8e7}63.6          & \cellcolor[HTML]{fde8e7}54.2                & \cellcolor[HTML]{fde8e7}83.8       & \cellcolor[HTML]{fde8e7}72.9       & \cellcolor[HTML]{fde8e7}60.8          & \cellcolor[HTML]{fde8e7}85.8
&  \\ \bottomrule
  \end{tabular}%
  \end{adjustbox}
  \caption{Comparison of Target Domain Adaptation Performance on Office-home.}
  % \caption{Comparison of target domain adaptation's performance for different state-of-the-art Continual UDA baseline on Office-home data sets for different domain adaptation tasks ($Ar\to Cl$,\ldots, $Re\to Pr$). i.i.d-adv denotes the offline adversarial adaption performance where both source and target domain data could be accessed i.i.d.}
    \label{tab:officehometaskadp}
\end{table*}
\begin{table*}[!htp]
  \centering  
 \begin{adjustbox}{width=0.95\textwidth}
  \begin{tabular}{@{}lr|r|r|r|r|r|r|r|r|r|r|r@{\hskip 0.1in}l@{}}
  \toprule
                                 &       \multicolumn{1}{c}{}           & \multicolumn{9}{c}{Office-home Source Domain Forgetting}                                                                                                                                                                                                                                                                                                                                                                                                                                                      &  \multicolumn{1}{c}{}   \\ \cmidrule(l){2-13} 
                          Methods      & \multicolumn{1}{c|}{$Ar\to Cl$}                                                                                        & \multicolumn{1}{c|}{$Ar\to Pr$}                                                                                       & \multicolumn{1}{c|}{$Ar\to Re$}                                                                                                             & \multicolumn{1}{c|}{$Cl\to Ar$}               & \multicolumn{1}{c|}{$Cl\to Pr$}
                                 & \multicolumn{1}{c|}{$Cl\to Re$} & \multicolumn{1}{c|}{$Pr\to Ar$} &
                                 \multicolumn{1}{c|}{$Pr\to Cl$} &
                                 \multicolumn{1}{c|}{$Pr\to Re$} &
                                 \multicolumn{1}{c|}{$Re\to Ar$} &
                                 \multicolumn{1}{c|}{$Re\to Cl$} &
                                 \multicolumn{1}{c}{$Re\to Pr$} &
                                 \\ \midrule              

NLL-OT{\cite{asano2019self}}     &  10.91 & 7.64 &  7.31 &  12.73         &  13.18          & 11.13    &  7.29 & 7.72 &  6.19 &   7.07        &  7.28          & 5.35           &  \\ \midrule
NLL-KL{\cite{zhang2021unsupervised}}      &  10.93          & 7.66 & 7.34                   & 13.01           & 13.05                   & 10.98 &  7.27 &  7.50 &  6.03 &  6.97         &  7.26          & 5.46        &   \\ \midrule
HD-SHOT{\cite{liang2020we}}                 & 11.10          & 9.69 & 8.06          &  14.99          &  15.02          &  12.06 &  7.57 & 7.86 &  6.58 &  7.22         &  7.92          & 6.02      &   \\ \midrule
SD-SHOT{\cite{liang2020we}}     &  11.21         &  8.93 &  7.89          & 15.24          & 15.55 & 12.25                &  7.75 & 7.93 &  6.72 &  7.22         &  8.13          & 6.05        &   \\ \midrule
DINE{\cite{liang2022dine}}       &  9.67          &  6.66 &   6.26        &  9.29         & 10.02      &     9.76&  6.13 & 5.92 &  5.82 &  6.19         &  6.05          & 4.93       &   

\\ \midrule

{\cellcolor[HTML]{EFEFEF}Ours}              & \cellcolor[HTML]{EFEFEF}\textbf{4.52}           & \cellcolor[HTML]{EFEFEF}\textbf{3.95}                & \cellcolor[HTML]{EFEFEF}\textbf{3.53}        & \cellcolor[HTML]{EFEFEF}\textbf{5.12}        & \cellcolor[HTML]{EFEFEF}\textbf{4.83}           & \cellcolor[HTML]{EFEFEF}\textbf{4.69} &\cellcolor[HTML]{EFEFEF}\textbf{1.93}           & \cellcolor[HTML]{EFEFEF}\textbf{2.05}                & \cellcolor[HTML]{EFEFEF}\textbf{1.89}        & \cellcolor[HTML]{EFEFEF}\textbf{2.12}        & \cellcolor[HTML]{EFEFEF}\textbf{3.13}           & \cellcolor[HTML]{EFEFEF}\textbf{1.43} &\\
\bottomrule
  \end{tabular}%
  \end{adjustbox}
  \caption{Comparison of Source Domain Forgetting Performance on Office-home.}
  % \caption{Results of average forgetting of source domain on office-home. Comparison between state-of-the-art Continual UDA baseline on Office-home data sets for different domain adaptation tasks ($Ar\to Cl$,\ldots, $Re\to Pr$).}
    \label{tab:officehomeforget}
\end{table*}

\subsection{Comparison to existing Continual UDA}
\label{compuda}
% \noindent\textbf{Baseline} We compare our proposed method with two strong baselines, Knowledge Distillation(KD) and Self-Learning(ST), which are commonly used semi-supervised learning(SSL) techniques for continual UDA. A representative work on proposing KD for continual UDA is DINE\cite{liang2022dine}. Representative works for ST are NLL-OT\cite{asano2019self} and NLL-KL\cite{zhang2021unsupervised}. SHOT\cite{liang2020we}.
% \noindent\textbf{Baseline} We compare our proposed method with strong baselines in two categories of commonly used semi-supervised learning(SSL) techniques for continual UDA. \textbf{Knowledge distillation (KD)} transfers knowledge from the source to the target domain by first training a teacher model on labeled source data then training a student model to mimic the teacher outputs on unlabeled target data\cite{hinton2015distilling}. A representative work on proposing KD for continual UDA is DINE\cite{liang2022dine}. \textbf{Self-training (ST)} trains the model on source labeled data first, then iteratively assigns pseudo-labels to unlabeled target data and trains on the most confident predictions\cite{nigam2000analyzing}. Variants of ST on continual UDA include NLL-OT\cite{asano2019self} and NLL-KL\cite{zhang2021unsupervised}. SHOT\cite{liang2020we} combines ST with K-Means by using a strategy of assigning pseudo-label from its distance to cluster centroid.
\noindent\textbf{Baseline} We compare our proposed method with two strong baselines, Knowledge Distillation(KD) and Self-Learning(ST), which are commonly used semi-supervised learning(SSL) techniques for continual UDA. KD transfers knowledge from the source to the target domain by distilling on source domain teach model's pseudolikelihoods assigned to unlabeled target domain\cite{hinton2015distilling}. A representative work on of KD is DINE\cite{liang2022dine}. ST trains the model on source labeled data first, then iteratively assigns pseudo-labels to unlabeled target domain and trains on the most confident predictions\cite{nigam2000analyzing}. Variants of ST include NLL-OT\cite{asano2019self} and NLL-KL\cite{zhang2021unsupervised}. SHOT\cite{liang2020we} combines ST with K-Means by using a strategy of assigning pseudo-label from its distance to cluster centroid. 

 Results with office-31 are presented in Table (\ref{tab:office31taskadp}, \ref{tab:office31forget}), and those for office-home are shown in Table (\ref{tab:officehometaskadp}, \ref{tab:officehomeforget}).  Nearly all categories of the results in our proposed method show improvement on target domain adaptation task upon baseline method. Additionally, our method effectively addresses the issue of catastrophic forgetting on source domain by employing adversarial adaptation to learn a domain generalized model. Though our proposed method sometime only see minor improvement over baseline or even fall short in rare cases. We believe that this is because sub-optimal optimization behavior of adversarial training which involves minimaximization on saddle-point.
% As domain adversarial adaptation involves saddle-point optimization which is implemented by gradient-reversal on domain discriminator. This training strategy does not guarantee optimality in the general minimax setting and sometime may cause vibration on sub-optimal points. 
One way to further improve the performance of our proposed method is to follow with a final stage of SSL fine-tuning. As SSL performance improves with decreasing domain discrepancy\cite{ben2010theory, zhao2019learning, kumar2020understanding}, our proposed method can be used as a pre-processing step for SSL. With a final stage of KD, we would achieve over $2\%$ performance increase among baseline methods in most categories of domain adaptation task.

\vspace{-1em}
\section{Conclusion}
\vspace{-1em}
We have proposed a double-head discriminator algorithm for continual adversarial domain adaptation. With our introduced source-only domain discriminator, the empirical estimation error of the $\gH$-divergence related domain adversarial loss is reduced from source domain side. Extensive experiment has shown that our proposed algorithm has consistently outperformed existing baseline. For future work, we would focus on source-free adversarial domain adaptation algorithm.
\bibliographystyle{unsrt}
\bibliography{ref.bib}
\appendix
\onecolumn
\section{An Overview of Adversarial Domain Adaptation}
\setcounter{theorem}{0}
\label{sec:adv}
Let $T$ and $S$ be the source and target distributions, respectively.
In a general formulation, the upper bound of the target prediction error is given by Ben-David et.al,\cite{ben2010theory}
\begin{theorem}
\label{ben2010theory}
Let $\gF$ be the hypothesis
space. For any classifier $f \in \gF$, $\text{err}_{S}$ denotes the population loss of a classifier $f \in \gF$ under the source distribution $S$, i.e.,
$
    \text{err}_{\mathcal S}(f) \triangleq \mathbb{E}_{(\mathbf{x}_i, \mathbf{y}_i) \sim S}[\ell(f(\mathbf{x}_i), y_i)]
$
And $\text{err}_{T}(f)$ parallel notates for the target domain error. respectively. Then for any classifier $f \in \gH$,
\begin{equation}
\label{upperbound}
% \small
    \text{err}_{S}(f) \leq \text{err}_{T}(f)+ d_{\gH\Delta\gH}(\mathcal{T,S}) + \min_{f^{*} \in \mathcal{F}}\{\text{err}_{S}(f^{*})+  \text{err}_{T}(f^{*})\},
\end{equation}
where $d_{\gH\Delta\gH}(\mathcal{T,S})$ is a discrepancy-based distance, known as the $\mathcal{H}$-divergence, and $\min_{f^{*} \in \mathcal{H}}\{\text{err}_{\mathcal P}(f^{\ast})+  \text{err}_{\mathcal Q}(f^{\ast})\}$ is the optimal joint error, i.e., the sum of source and target domain's population loss of $f$ in a {\it{hypothesis}} class $\mathcal{F}$. 
\end{theorem}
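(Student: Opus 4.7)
The statement is the classical Ben-David et al. target-error bound, with the roles of $S$ and $T$ interchanged on the left; since $d_{\gH\Delta\gH}$ and the joint optimal term are both symmetric in $(S,T)$, the usual proof transports directly. The plan is to reduce everything to an expected-disagreement quantity and then apply the triangle inequality twice, using the definition of $\gH\Delta\gH$-divergence to move from a $T$-expectation to an $S$-expectation.

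First I would introduce the shorthand $\epsilon_D(f,f') \triangleq \mathbb{E}_{\mathbf{x}\sim D}[\ell(f(\mathbf{x}),f'(\mathbf{x}))]$ for the expected disagreement of two hypotheses under a distribution $D$. Writing $f_D$ for the (deterministic) labeling function on domain $D$, we have $\mathrm{err}_D(f)=\epsilon_D(f,f_D)$, and the triangle inequality $\epsilon_D(f_1,f_3)\le \epsilon_D(f_1,f_2)+\epsilon_D(f_2,f_3)$ holds for any loss satisfying (an approximate) triangle property, in particular for the $0/1$ loss and its standard surrogates used in this line of work.

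Next I would let $f^{\star}\in\argmin_{h\in\gF}\{\mathrm{err}_S(h)+\mathrm{err}_T(h)\}$ be the joint optimal hypothesis and chain the inequalities
\begin{equation*}
\mathrm{err}_S(f)\;\le\;\mathrm{err}_S(f^{\star})+\epsilon_S(f,f^{\star})\;\le\;\mathrm{err}_S(f^{\star})+\epsilon_T(f,f^{\star})+\bigl|\epsilon_S(f,f^{\star})-\epsilon_T(f,f^{\star})\bigr|,
\end{equation*}
and then $\epsilon_T(f,f^{\star})\le \mathrm{err}_T(f)+\mathrm{err}_T(f^{\star})$ by a second triangle step. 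The residual $|\epsilon_S(f,f^{\star})-\epsilon_T(f,f^{\star})|$ is precisely the gap in measure assigned by $S$ and $T$ to the symmetric-difference event $\{f\neq f^{\star}\}$, and since $f,f^{\star}\in\gF$ this event lies in $\gF\Delta\gF$; hence it is bounded (up to the $\tfrac{1}{2}$ convention) by $d_{\gH\Delta\gH}(S,T)=d_{\gH\Delta\gH}(T,S)$.

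Collecting these pieces yields $\mathrm{err}_S(f)\le \mathrm{err}_T(f)+d_{\gH\Delta\gH}(T,S)+\bigl(\mathrm{err}_S(f^{\star})+\mathrm{err}_T(f^{\star})\bigr)$, which is exactly the claim. The only genuinely subtle step is the third one: justifying that the disagreement measure differential is controlled by $d_{\gH\Delta\gH}$ in the chosen normalization, which amounts to matching the definition of $\gH\Delta\gH$-divergence used in \cite{ben2010theory} (typically $2\sup_{A\in\gH\Delta\gH}|\Pr_S(A)-\Pr_T(A)|$) with the expected-loss formulation $\epsilon_D(\cdot,\cdot)$; everything else is an application of the triangle inequality and the defining optimality of $f^{\star}$. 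I do not foresee any serious obstacle, as this is a textbook derivation whose only bookkeeping concern is the loss-class and $\gH\Delta\gH$-divergence normalization convention inherited from the cited reference.
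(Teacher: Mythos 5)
Your reconstruction is the standard Ben-David et al. argument and it is correct; note, however, that the paper does not prove this statement at all -- it is quoted verbatim from the cited reference \cite{ben2010theory} as background, and the appendix only supplies proofs for Theorems 2 and 3 and Proposition 1. Your chain of two triangle inequalities around the joint optimal $f^{\star}$, plus the identification of $\bigl|\epsilon_S(f,f^{\star})-\epsilon_T(f,f^{\star})\bigr|$ with the measure gap on the symmetric-difference event, is exactly how the original source proceeds; the only caveats are the ones you already flag -- the argument needs the loss to satisfy the triangle inequality (it is stated for $0/1$ loss in the original), and the $\tfrac{1}{2}$ in the normalization $d_{\gH\Delta\gH}=2\sup_{A}\lvert\Pr_S(A)-\Pr_T(A)\rvert$ means your bound is in fact slightly stronger than the inequality as written here.
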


For the unsupervised domain adaptation problem, it has been proven that minimizing the upper bound, which is the r.h.s in (\ref{upperbound}), leads to an architecture consisting of a {\it feature extractor} parameterized by $\omega$,  i.e., $f^1_{\omega}$, a {\it label predictor},
parameterized also by $\omega$ i.e., $f^2_{\omega}$ ( $f_{\omega}\triangleq f^2_{\omega}\circ f^1_{\omega}$),
\footnote{The parameters of $f^1$ and $f^2$ are  not the same. In this case,  we abuse the notation to simplify the expression.} 
and a {\it domain classifier}
parameterized by $\psi$,  i.e., $h_{\psi}$, as shown in Fig~\ref{fig:fl_da}~\cite{ganin2015unsupervised,zhao2018adversarial}.
The feature extractor generates the domain-independent feature representations, which are then fed into the domain classifier and label predictor.
The domain classifier then tries to determine whether the extracted features belong to the source or target domain. Meanwhile, the label predictor predicts instance labels based on the extracted features of the labeled source-domain instances.

In Adversarial Domain Adaptation, an additional learning objective of $d_{\gH\Delta\gH}$ is introduced to encourage the extracted features to be both discriminative and invariant to changes between the source and target domains. By extending the $\gH$-divergence to general loss function in \cite{mansour2009domain}, r.h.s in (\ref{upperbound} is equivalent as
\begin{equation}
\min_{\omega}\max_{\psi} \triangleq \E_{(\rvx_i^s, y_i)\sim S}\ell(f_{\omega}(\rvx_i^s),y_i) + \nu\E_{\rvx_i^s\sim S}D^s_{\psi}(\rvx_i^s) +\nu\E_{\rvx_i^t\sim T} D^t_{\psi}(\rvx_i^t)
\end{equation}
where $D^s_{\psi}(\rvx_i^s)\triangleq D^s(h_{\psi}(f_{\omega}^1(\mathbf{x}^s_i))$ and $D^t_{\psi}(\rvx_i^t)\triangleq -D^t(h_{\psi}(f_{\omega}^1(\mathbf{x}^t_i))$

In the majority of domain adversarial problems, $d_{\gH\Delta\gH}$ is reformulated as the difference between the parameterized output of the domain classifier on the source domain and the target domain, given by $\E_{\rvx_i^s\sim S}D^s_{\psi}(\rvx_i^s) + \E_{\rvx_i^t\sim T} D^t_{\psi}(\rvx_i^t)$. This term is commonly referred to as the {\it domain adversarial loss}.

\section{Holistic Regulated One-class Domain Discriminator for continual UDA}
Hu et.al, \cite{hu2020hrn} proposed HRN, a simple but efficient deep one-class learning algorithm. And we adopt HRN as another training oracle for source only domain discriminator with scalar domain discriminators $h_{\psi,s}: \gZ \in\sF \to \sR$ such as DANN\cite{ganin2016domain} and CDAN\cite{long2017conditional}.
\begin{equation}
\label{hrn}
\min_{\psi_s}\E_{\rvx_i^s \sim S}[-\log(\sigmoid(h_{\psi,s}((\rvz(\rvx_i^s)))))] + \lambda \| \nabla_{\rvz} h_{\psi,s}(\rvz(\rvx_i^s))||_2^n
\end{equation}
where $\rvz$ is the domain features that are fed as the input to domain discriminator. In general, the domain features that could be used for $\rvz$ include but not limited to the following cases:
\begin{itemize}
\item Domain-Adversarial Neural Networks (DANN)\cite{ganin2016domain}, $\rvz$ is designed
simply to be the domain invariant feature $f_{\omega}^1(\rvx_i^s)$
\begin{equation}
\rvz \triangleq f_{\omega}^1(\rvx_i^s)
\end{equation}
\item Conditional Domain Adaptation Network (CDAN)\cite{long2017conditional}, $\rvz$ is from the cross-product space of $f_{\omega}^1(\rvx_i^s)$ and $f_{\omega}(\rvx_i^s)$
\begin{equation}
\rvz \triangleq f_{\omega}^1(\rvx_i^s) \otimes f_{\omega}^1(\rvx_i^s)
\end{equation}
\end{itemize}
Apart from the commonly adopted NLL for classification, HRN adds an additional regularization term on the $n$'s power of scalar domain discriminator $h_{\psi,s}(\cdot)$'s gradient norm.And $n$ is the exponential term which is used with $\lambda$ to control the strength of regularization.The full description of our double-head domain discriminator algorithm for scalar domain discriminator  is shown in Algorithm \ref{alg:doubleheadalgscalardc}.
% \begin{table}
% \centering
% \caption{Benefit of Source only Domain Discriminator.}
% \begin{tabular}{lrrrrrrr|c|}
% \toprule
% \textbf{Discriminator Modes}& $T_1$ Only & Dual Head \\
% \midrule
% MNIST $\to$ MNISTM &$63.2$ &  $86.3$  \\
% % \midrule
% MNIST $\to$ USPS &$65.1$  & $78.6$\\
% % \midrule
% MNIST $\to$ SVHN &$32.1$ & $45.3$ \\
% \bottomrule
% \end{tabular}
% \label{tab:evolution_steps}
% \end{table}
\begin{algorithm}
 \caption{Double Head Discriminator Algorithm}\label{alg:doubleheadalg}
\begin{algorithmic}[1]
    \State Initialization: 
    Task Model $f_{\omega} \triangleq f^2_{\omega}\circ f^1_{\omega}$\\
    Source Only multi-class Domain Classifier $h_{\psi,{s}}$
    \phase{Source Only training phase}
    \Procedure {Task Model Training Phase}{}
    \For{$t \in \{1, \ldots, t_1\}$}
      \For{$\{(\mathbf{x}_1, y_1), \ldots, (\mathbf{x}_K, y_K)\}\stackrel{K}{\sim} S_0$}
      \State $L=\frac{1}{K}\sum_{i=i}^{K}\ell(f_{\omega}(\mathbf{x}_i), y_i)$
      \State $\omega \to SGD(L, \omega)$\Comment{{Train Task Model on Source Domain}}
      \EndFor
    \EndFor
    \EndProcedure
    \Procedure {Source Only Domain Classifier Training Phase}{}
    \For{$t \in \{1, \ldots, t_2\}$}
      \For{$(\mathbf{x}_1, \ldots, \mathbf{x}_K) \stackrel{K}{\sim} S_0$}
      \State $d'(\mathbf{x})\to \arg\max_cf_{\omega}(\mathbf{x}_i, c) \hspace{2em} \forall \mathbf{x} \in \{\mathbf{x}_1\ldots \mathbf{x}_K\}$ \Comment{{Get Pseudo Domain Label from Task Model }}
      \State $D=\frac{1}{K}\sum_{i=i}^{K}-\log(\softmax(h_{\psi, s}(f^1_{\omega}(\mathbf{x}_i)), d'(\mathbf{x}_i)))$
      \State $\psi_s \to SGD(D, \psi_s)$\Comment{{Train on Source Only Domain Classifier}}
      \EndFor
    \EndFor
    \EndProcedure
    \phase{Sample on Source Domain Replay Memory}
    \Procedure{Memory Sample Phase}{}
    \State $\mathcal{M} \to \{\}$
    \For{$c \in \{1, \ldots, C\}$}
    \State Sample $\{(\mathbf{x}_1, c), \ldots, (\mathbf{x}_N, c)\}\stackrel{N}{\sim} S_0$
    \State $\mathcal{M}.append(\{(\mathbf{x}_1, c), \ldots, (\mathbf{x}_N, c)\})$\Comment{{Store N data per class on source domain on Replay Memory}}
    \EndFor
    \EndProcedure
    \phase{Unlabeled Target Adaptation Phase with Memory Reply}
    \State Initialization:Target Adaptation Phase multi-class Domain Classifier $h_{\psi,{t}}$
    \Procedure{Target Phase}{}
      \For{$t \in \{1, \ldots, t_3\}$}
        \For{$\{(\mathbf{x}^s_1, y^s_1), \ldots, (\mathbf{x}^s_K, y^s_K)\}\stackrel{K}{\sim} \mathcal{M}$, $(\mathbf{x}^t_1\ldots, \mathbf{x}^t_K)\stackrel{K}{\sim} T_1$} 
        \State $L=\frac{1}{K}\sum_{i=1}^{K}\ell(f_{\omega}(\mathbf{x}_i^s), y_i^s)$
        \State $\mathbf{d}'(\mathbf{x})\to \arg\max_cf_{\omega}(\mathbf{x}, c) \hspace{1em}\forall \mathbf{x} \in \{\mathbf{x}^s_1\ldots \mathbf{x}^s_K,\mathbf{x}^t_1\ldots \mathbf{x}^t_K\}$
        \State $D_{\psi, t} = \frac{1}{K}\sum_{i=1}^{K}-\log(\softmax(h_{\psi, t}(f^1_{\omega}(\mathbf{x}^s_i)), \mathbf{d}'(\mathbf{x}^s_i))) - \log(1- \softmax(h_{\psi, t}(f^1_{\omega}(\mathbf{x}^t_i)), \mathbf{d}'(\mathbf{x}^t_i)))$
        \State $D_{\psi} = \frac{1}{K}\sum_{i=1}^{K}-\log(\softmax(h_{\psi, s}(f^1_{\omega}(\mathbf{x}^s_i))+h_{\psi, t}(f^1_{\omega}(\mathbf{x}^s_i)), \mathbf{d}'(\mathbf{x}^s_i))) - \log(1- \softmax(h_{\psi, s}(f^1_{\omega}(\mathbf{x}^t_i))+h_{\psi, t}(f^1_{\omega}(\mathbf{x}^t_i)), \mathbf{d}'(\mathbf{x}^t_i)))$
        \State $\omega \to SGD(L-\beta D_\psi, \omega)$
        \State $\psi_t \to SGD(D_{\psi,t}, \psi_t)$
        % \State $D_s=\frac{1}{K}\sum_{i=i}^{K}\ell(h_{\omega, s}(f^1_{\omega}(\mathbf{x}^s_i)), \mathbf{d}^s_i)$, $D_t=\frac{1}{K}\sum_{i=i}^{K}-\log(1- softmax(h_{\omega, s}(f^1_{\omega}(\mathbf{x}^s_i)), \mathbf{d}^s_i))$
      \EndFor
    \EndFor
    \EndProcedure
  \end{algorithmic}
\end{algorithm}

\begin{algorithm}
 \caption{Double Head Discriminator Algorithm For Scalar Domain Discriminator}\label{alg:doubleheadalgscalardc}
  \begin{algorithmic}
  \State Initialization: 
    Task Model $f_{\omega} \triangleq f^2_{\omega}\circ f^1_{\omega}$\\
    Source Only scalar Domain Classifier $h_{\psi,{s}}$
    \phase{Source Only training phase}
    \Procedure {Task Model Training Phase}{}
    \For{$t \in \{1, \ldots, t_1\}$}
      \For{$\{(\mathbf{x}_1, y_1), \ldots, (\mathbf{x}_K, y_K)\}\stackrel{K}{\sim} S_0$}
      \State $L=\frac{1}{K}\sum_{i=i}^{K}\ell(f_{\omega}(\mathbf{x}_i), y_i)$
      \State $\omega \to SGD(L, \omega)$\Comment{{Train Task Model on Source Domain}}
      \EndFor
    \EndFor
    \EndProcedure
    \Procedure {Source Only Domain Classifier Training Phase}{}
    \For{$t \in \{1, \ldots, t_2\}$}
      \For{$(\mathbf{x}_1, \ldots, \mathbf{x}_K) \stackrel{K}{\sim} S_0$}
      \State $D=\frac{1}{K}\sum_{i=i}^{K}-\log(\sigmoid(h_{\psi, s}(f^1_{\omega}(\mathbf{x}_i)),)+  \lambda \| \nabla_{\rvz} h_{\psi,s}(\rvz)|_{\rvz=f_{\omega}^1(\rvx_i)}\|_2^n$
      \State $\psi_s \to SGD(D, \psi_s)$\Comment{{Train on Source Only Domain Classifier}}
      \EndFor
    \EndFor
    \EndProcedure
    \phase{Sample on Source Domain Replay Memory}
    \Procedure{Memory Sample Phase}{}
    \State $\mathcal{M} \to \{\}$
    \For{$c \in \{1, \ldots, C\}$}
    \State Sample $\{(\mathbf{x}_1, c), \ldots, (\mathbf{x}_N, c)\}\stackrel{N}{\sim} S_0$
    \State $\mathcal{M}.append(\{(\mathbf{x}_1, c), \ldots, (\mathbf{x}_N, c)\})$\Comment{{Store N data per class on source domain on Replay Memory}}
    \EndFor
    \EndProcedure
    \phase{Unlabeled Target Adaptation Phase with Memory Reply}
    \State Initialization:Target Adaptation Phase scalar Domain Classifier $h_{\psi,{t}}$
    \Procedure{Target Adaptation Phase}{}
      \For{$t \in \{1, \ldots, t_3\}$}
        \For{$\{(\mathbf{x}^s_1, y^s_1), \ldots, (\mathbf{x}^s_K, y^s_K)\}\stackrel{K}{\sim} \mathcal{M}$, $(\mathbf{x}^t_1\ldots, \mathbf{x}^t_K)\stackrel{K}{\sim} T_1$} 
        \State $L=\frac{1}{K}\sum_{i=i}^{K}\ell(f_{\omega}(\mathbf{x}^s_i), y_i)$
        \State $D_{\psi, t} = \frac{1}{K}\sum_{i=i}^{K}-\log(\sigmoid(h_{\psi, t}(f^1_{\omega}(\mathbf{x}^s_i)))) - \log(\sigmoid(-h_{\psi, t}(f^1_{\omega}(\mathbf{x}^t_i))))$
        \State $D_{\psi} = \frac{1}{K}\sum_{i=i}^{K}-\log(\sigmoid(h_{\psi, s}(f^1_{\omega}(\mathbf{x}^s_i))+h_{\psi, t}(f^1_{\omega}(\mathbf{x}^s_i)))) - \log(\sigmoid(-h_{\psi, s}(f^1_{\omega}(\mathbf{x}^t_i))-h_{\psi, t}(f^1_{\omega}(\mathbf{x}^t_i))))$
        \State $\omega \to SGD(L-\beta D_\psi, \omega)$
        \State $\psi_t \to SGD(D_{\psi,t}, \psi_t)$
        % \State $D_s=\frac{1}{K}\sum_{i=i}^{K}\ell(h_{\omega, s}(f^1_{\omega}(\mathbf{x}^s_i)), \mathbf{d}^s_i)$, $D_t=\frac{1}{K}\sum_{i=i}^{K}-\log(1- softmax(h_{\omega, s}(f^1_{\omega}(\mathbf{x}^s_i)), \mathbf{d}^s_i))$
      \EndFor
    \EndFor
    \EndProcedure
  \end{algorithmic}
\end{algorithm}

\section{Experiment Setup}
\label{expsetup}
\textbf{MNISTM} \cite{ganin2016domain}
is a dataset that demonstrates domain adaptation by combining MNIST with randomly colored image patches from the BSD500 dataset.

\noindent\textbf{USPS} \cite{hull1994database} 
is a digit dataset automatically scanned from envelopes by the U.S. Postal Service containing pixel grayscale samples. The images are centered, normalized. And a broad range of font styles are shown in the dataset.

% \noindent\textbf{SVHN} \cite{netzer2011reading} has RGB images of printed digits clipped from photographs of house number plates. The trimmed photos are centered on the digit of interest while surrounding digits and other distractions are retained. Photos of house numbers in various countries was used to create the SVHN dataset.

\noindent\textbf{SVHN} has RGB images of printed digits clipped from photographs of house number plates. The trimmed photos are centered on the digit of interest while surrounding digits and other distractions are retained. Photos of house numbers in various countries was used to create the SVHN dataset.

\noindent\textbf{Office-31} \cite{saenko2010adapting} is a typical domain adaptation dataset made up of three distinct domains with 31 categories in each domain. 
There are 4,652 images in total from 31 classes.

\noindent\textbf{Office-home} \cite{venkateswara2017deep} is a typical domain adaptation dataset made up of four distinct domains with 65 categories in each domain. There are total 15,500 images in total from 65 classes

\noindent\textbf{Implementation Details} On MNISTM, USPS and SVHN, we use a three-layer convolutional network as the invariant feature extractor, and the network models are trained from random initialization on server. On Office-31 and Office-home, we use the pre-trained ResNet50\cite{he2016deep} on ImageNet\cite{russakovsky2015imagenet} as the feature extractor. Both the task classifier and the domain classifier are  two-layer fully-connected neural networks.  The domain classifier's parameter are trained from random initialization in all settings. In Office-31 and Office-home datasets, we set the memory buffer size as 10 samples per class. We uniformly use supervised training in source domain data for 15 epochs in $S_0$ phase. Following supervised supervised training, we train freeze our task model and train source only domain classifier for 5 epochs in all remaining experiment except for ablation study. Learning rate of task model and domain discriminator is fixed with 0.001 on Adam optimizer. The source only domain discriminator is trained with Adam optimizer of learning rate 0.0001 for 5 epochs in all remaining experiment except for ablation study.

\section{Additional Experiment Result on Holistic Regulated One-class Domain Discriminator}
\begin{table}
\centering
\caption{Holistic Regulated One-class Domain Discriminator.}
\begin{tabular}{lrrrrrrr|c|}
\toprule
\textbf{Discriminator Used}& DANN($T_1$Only) &  CDAN($T_1$Only) & HRN-DANN & HRN-CDAN\\
\midrule
MNIST $\to$ MNISTM &$58.8 $ &  $59.2$  & $78.1$ & $80.3$ \\
% \midrule
MNIST $\to$ USPS &$60.6$  & $62.3$ & $69.1$  & $73.4$\\
% \midrule
MNIST $\to$ SVHN &$32.1$ & $35.7$ &  $37.5$  & $40.8$\\
\bottomrule
\end{tabular}
\label{tab:result_hrn}
\end{table}
Table 4: Experiment on testing with holistic-regulated training on source-only domain discriminator. The source-only domain discriminator $h_{\psi,s}$ of scalar output is trained on holistic regulated one-class loss in Equation (\ref{hrn}). The rest of domain adversarial training is the same as in Algorithm (\ref{alg:doubleheadalgscalardc}) using the ensembles of two discriminators digits as domain invariant signals for feature extractor, $f^1_{\omega}$. $n=6$ and $\lambda=0.1$ is used as holistic regulated loss on $h_{\psi,s}$ for stable performance. In general, including a holistic regulated source-only domain discriminator has performance improvement over using single domain discriminator in $T_1$ only. However the HRN method of training a scalar domain discriminator is inferior than MDD included multi-class domain discriminator for continual UDA. 
% \section{The continual adversarial domain adaptation flowchart}
% Figure 1: The flow chart of continual adversarial domain adaptation

% Figure 2: The flow chart of our double-head domain discriminator algorithm for continual UDA 
\setcounter{figure}{0}
\label{app:flowchart}
\begin{figure}
\centering
\includegraphics[width=0.8\textwidth]{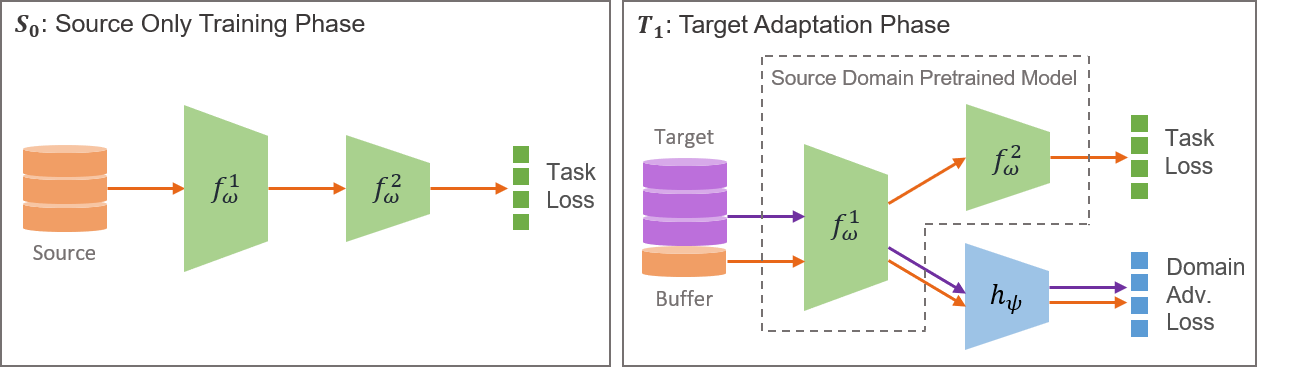}
\caption{{\small A continual adversarial domain adaptation model. Only the source risk of the client's local source data is accessible in source only training phase. A small set of buffered source domain data and target domain data is adversarial trained in target adaptation phase.}}
\label{fig:fl_da}
\end{figure}
% \begin{figure}
%   \begin{center}
%     \includegraphics[width=0.8\textwidth]{fig/flow_fig1.png}
%   \end{center}
%   \caption{The flowchart of the proposed double-head discriminator algorithm for the continual adversarial domain adaptation problem. The solid line is the forward path. And the dashed line is the gradient back-propagation path. After the task model is trained in source domain, an additional source-only domain discriminator $h_{\psi,s}$ is trained by freezing the task model $f_{\omega}$. In the target adaptation phase, the ensembles of domain discriminator $h_{\psi,s}$ and $h_{\psi,t}$'s digit is used as domain adversarial signal to learn domain invariant features for $f_{\omega}^1$. And $h_{\psi,t}$ is adversarially trained with $f_{\omega}^1$ on domain adversarial loss on its own digits $D_{\psi,t}^s + D_{\psi,t}^t$ }
%   \label{fig:doublehead}
%  \end{figure}
 
\section{Proof of Theorem 2}
\begin{lemma}
(Lemma C.1, \cite{zhang2019bridging}) For any distribution $D$ and any $f$, we have
\begin{equation}
disp_{D}^{(\rho)}(f',f)= err_{D}^{(\rho)}(f')+err_{D}^{(\rho)}(f)
\end{equation}
\end{lemma}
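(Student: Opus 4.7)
My plan is to establish the claimed equality by proving both inequalities separately, with the upper bound being standard and the reverse direction being the delicate step. For the upper bound $disp_D^{(\rho)}(f',f) \leq err_D^{(\rho)}(f') + err_D^{(\rho)}(f)$, I would perform a case analysis over the event $A=\{x:h_f(x)=y\}$, where $y$ denotes the true label of $x$. On $A$, the substitution $h_f(x)=y$ gives $\rho_{f'}(x,h_f(x))=\rho_{f'}(x,y)$, so the disparity integrand equals $\Phi^{\rho}(\rho_{f'}(x,y))$, which is exactly the margin-error integrand of $f'$ evaluated at the true label. On $A^c$, the definition of $h_f$ forces $\rho_f(x,y)\leq 0$, hence $\Phi^{\rho}(\rho_f(x,y))=1$ by the saturation branch of the ramp function, while $\Phi^{\rho}(\rho_{f'}(x,h_f(x)))\leq 1$ holds uniformly since $\Phi^{\rho}$ maps into $[0,1]$. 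Combining yields a pointwise bound $\Phi^{\rho}(\rho_{f'}(x,h_f(x)))\leq \Phi^{\rho}(\rho_{f'}(x,y))+\Phi^{\rho}(\rho_f(x,y))$, and integrating against $D$ produces the upper bound.

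For the reverse inequality $disp_D^{(\rho)}(f',f)\geq err_D^{(\rho)}(f')+err_D^{(\rho)}(f)$, which is what promotes the claim from a bound to an equality, my strategy is to argue that the slack introduced by the case analysis above must vanish under the structural setting of the lemma. The slack arises from two sources: on $A$, the integrand $\Phi^{\rho}(\rho_{f'}(x,y))$ ignores the $f$-error contribution, and on $A^c$ the bound $\Phi^{\rho}(\rho_{f'}(x,h_f(x)))\leq 1$ may be loose. I would therefore attempt to show that the error events $\{h_f(x)\neq y\}$ and $\{h_{f'}(x)\neq y\}$ are essentially disjoint under $D$, so that $A$ carries no $f$-error mass and $A^c$ carries no $f'$-error mass, in which case the two case bounds become equalities and additively account for the disparity exactly. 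An alternative route is to argue that under the scaling regime in which the downstream Rademacher bound is applied, the ramp $\Phi^{\rho}$ is evaluated in its saturated branch whenever $h_f(x)\neq y$, which again closes the slack.

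The main obstacle will be precisely this reverse direction, because the equality does not hold pointwise for generic $f, f'$: if $f=f'$ correctly classifies $x$ with positive margin, the left side vanishes while the right side equals $2\,err_D^{(\rho)}(f)$. I would therefore resolve the proof in one of two ways: (i) isolate an implicit structural assumption the authors rely on (most naturally, disjointness of the error events, or an alternative definition of $disp$ via an infimum not spelled out), and make it explicit as a hypothesis; or (ii) reinterpret the lemma as the upper-bound version $\leq$, which matches Lemma C.1 of Zhang et al.\ (2019) and suffices for the downstream application in Theorem~\ref{rademacherbondtheorem}. Under either resolution the case-analysis skeleton of the first paragraph forms the core of the argument, and the burden of the proof is to pin down the additional hypothesis that promotes the $\leq$ to an $=$.
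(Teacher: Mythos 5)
You should first note that the paper contains no proof of this lemma: it is imported by citation to Lemma~C.1 of Zhang et al.\ (2019), so there is no in-paper argument to compare yours against. That said, your treatment is essentially right, and your skepticism about the stated equality is warranted. The case analysis on $A=\{x:h_f(x)=y\}$ is the standard proof of the one-sided bound $disp_{D}^{(\rho)}(f',f)\le err_{D}^{(\rho)}(f')+err_{D}^{(\rho)}(f)$: on $A$ the disparity integrand coincides with the margin-error integrand of $f'$, and on $A^{c}$ one has $\rho_{f}(x,y)\le 0$, hence $\Phi^{\rho}(\rho_{f}(x,y))=1$, which dominates the disparity integrand. The reverse inequality is genuinely false, and no implicit hypothesis in this paper rescues it: taking $f'=f$ gives $disp_{D}^{(\rho)}(f,f)\le err_{D}^{(\rho)}(f)$ (the integrands agree on $A$ and the error integrand saturates at $1$ on $A^{c}$), so the claimed identity would force $err_{D}^{(\rho)}(f)=0$; any $f$ with positive margin error is a counterexample. (Your specific instance needs a small repair: if $f=f'$ classifies everything correctly with margin exceeding $\rho$, both sides vanish; the gap appears once margins fall in $(0,\rho)$ or misclassifications occur. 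Likewise, disjointness of the error events would not close the $A$-side slack $\Phi^{\rho}(\rho_{f}(x,y))$, which is positive whenever $f$ is correct with sub-$\rho$ margin.) Your resolution (ii) is therefore the right one: the cited source states an inequality, the paper itself invokes the result as ``the triangular inequality of margin discrepancy,'' and the only step that uses it --- bounding $disp_S^{(\rho)}(f^{\star},f)\le err_S^{(\rho)}(f^{\star})+err_S^{(\rho)}(f)$ --- needs only the $\le$ direction; the ``$=$'' should be read as a typo for ``$\le$''. One small correction: that downstream application occurs in the proof of the generalization bound (Theorem~2), not in Theorem~\ref{rademacherbondtheorem}, whose Rademacher-complexity argument does not use this lemma.
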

\begin{theorem}
For a hypothesis class $\gF$ and a fixed $f_0\in\gF$ where for every $f \in \gF$, $f-f_0$ is also in $\gF$, then we have the following property holds
\begin{equation}
err_T(f) \leq err_S^{(\rho)}(f) + d^{(\rho)}_{f,f_0, \gF}(S,T) + \lambda
\end{equation}
where $err_S^{(\rho)}(f)$, $d^{(\rho)}_{f,f_0, \gF}(S,T)$ and $\lambda$ is defined as
\begin{equation}
\begin{split}
&err_S^{(\rho)}(f)= \E_{(x_i,y_i)\sim S}\Phi_{\rho}\circ\rho_{f}(x_i, y_i)\\
&d^{(\rho)}_{f,f_0, \gF}(S,T)=\sup_{f'\in\gF}\{
\E_{x_i\sim T}\Phi_{\rho}\circ\rho_{f'+f_0}(x_i, h_f(x_i))-\E_{x_i\sim S}\Phi_{\rho}\circ\rho_{f'+f_0}(x_i, h_f(x_i))\}\\
&\lambda=\min_{f^{\star}\in\gF}err_S^{(\rho)}(f^{\star})+err_T^{(\rho)}(f^{\star}),
\end{split}
\end{equation}
\end{theorem}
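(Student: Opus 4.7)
The plan is to reduce the statement to the standard MDD generalization bound of \cite{zhang2019bridging} by absorbing the fixed hypothesis $f_0$ through a reparametrization of the supremum.

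First, I would exploit the closure hypothesis ``for every $f\in\gF$, $f-f_0\in\gF$'' to argue that the translation map $f'\mapsto f'+f_0$ is a bijection of $\gF$ onto itself: injectivity is trivial, and surjectivity follows since any $g\in\gF$ can be written as $g=(g-f_0)+f_0$ with $g-f_0\in\gF$ by assumption. Applying this change of variables inside the supremum in the definition of $d^{(\rho)}_{f,f_0,\gF}(S,T)$ then collapses it to the ordinary MDD discrepancy
\begin{equation}
d^{(\rho)}_{f,\gF}(S,T)=\sup_{g\in\gF}\bigl\{\E_{x\sim T}\Phi_\rho\circ\rho_g(x,h_f(x))-\E_{x\sim S}\Phi_\rho\circ\rho_g(x,h_f(x))\bigr\},
\end{equation}
so that, at the population level, introducing $f_0$ neither enlarges nor shrinks the discrepancy term.

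Second, I would invoke the MDD learning bound from \cite{zhang2019bridging} applied to $d^{(\rho)}_{f,\gF}(S,T)$. For completeness the short derivation I would spell out uses a pivot hypothesis $f^{\star}\in\arg\min_{f^{\star}\in\gF}\{err_S^{(\rho)}(f^{\star})+err_T^{(\rho)}(f^{\star})\}$ and proceeds by: (i) bounding the $0$--$1$ loss by the margin loss, $err_T(f)\le err_T^{(\rho)}(f)$; (ii) a triangle-style decomposition $err_T^{(\rho)}(f)\le err_T^{(\rho)}(f^{\star})+disp_T^{(\rho)}(f^{\star},f)$; (iii) adding and subtracting $disp_S^{(\rho)}(f^{\star},f)$ to isolate a quantity bounded above by the supremum $d^{(\rho)}_{f,\gF}(S,T)$; (iv) applying Lemma~1 to rewrite $disp_S^{(\rho)}(f^{\star},f)=err_S^{(\rho)}(f^{\star})+err_S^{(\rho)}(f)$; and (v) collecting the $f^{\star}$-terms into $\lambda$. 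Substituting the equivalence from the first step then yields the stated inequality.

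The only delicate point is the reparametrization in the first step: one must verify that the closure assumption on $\gF$ is strong enough to turn a supremum indexed by $f'+f_0$ into an unrestricted supremum over $\gF$. This is algebraic rather than technical, so I do not expect a genuine obstacle; the conceptual content of the theorem is precisely that the auxiliary source-only discriminator $f_0$ is ``invisible'' to the population-level MDD bound, which motivates the subsequent Theorem~\ref{rademacherbondtheorem} where $f_0$ does provide a nontrivial benefit at the empirical (finite-sample) level.
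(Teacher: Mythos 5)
Your proof is correct and follows essentially the same route as the paper's: a pivot at the ideal joint hypothesis $f^{\star}$, the triangle-style decomposition via the margin disparity, and the closure assumption to absorb $f_0$ into the supremum. One caveat on the step you yourself flagged as delicate: the hypothesis only gives $\gF - f_0 \subseteq \gF$, i.e.\ $\gF \subseteq \{f'+f_0 : f'\in\gF\}$, so $f'\mapsto f'+f_0$ need not map $\gF$ into $\gF$ and is not in general a bijection of $\gF$ onto itself; consequently you may only conclude $d^{(\rho)}_{f,\gF}(S,T) \le d^{(\rho)}_{f,f_0,\gF}(S,T)$ rather than the equality ("neither enlarges nor shrinks") you assert. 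This one-sided containment is, however, exactly what the upper bound requires --- the paper applies it pointwise by writing $f^{\star}=(f^{\star}-f_0)+f_0$ with $f^{\star}-f_0\in\gF$ --- so your argument still establishes the stated inequality.
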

\begin{proof}
We first define $f^{\star}$ be the ideal joint hypothesis which minimizes the combined margin loss,
\begin{equation}
f^{\star} \triangleq \argmin_{f\in\gF}\{err_S^{(\rho)}(f) + err_T^{(\rho)}(f)\}
\end{equation}
\begin{equation}
\begin{split}
err_T(f) \leq & \E_T \mathbbm{1}[h_f \neq h_{f^{\star}}] +  \E_T \mathbbm{1}[h_{f^{\star}} \neq y] \\
\leq & err_S^{(\rho)}(f) - err_S^{(\rho)}(f) + disp_T^{(\rho)}(f^{\star}, f) + err_T^{(\rho)}(f^{\star})
\end{split}
\end{equation}
From the triangular inequality of margin discrepancy(Lemma C.1, \cite{zhang2019bridging}), we have
\begin{equation}
\label{errrhobound}
\begin{split}
err_T(f) \leq & err_S^{(\rho)}(f) - err_S^{(\rho)}(f) + disp_T^{(\rho)}(f^{\star}, f) + err_T^{(\rho)}(f^{\star}) \\
\leq & err_S^{(\rho)}(f) + err_S^{(\rho)}(f^{\star}) - disp_S^{(\rho)}(f^{\star}, f) + disp_T^{(\rho)}(f^{\star},f) + err_T^{(\rho)}(f^{\star})
\end{split}
\end{equation}
Let we define $f_1 \triangleq f^{\star} - f_0$. From the properties of hypothesis class $\gF$, we have $f_1 \in \gF$. By substituting the definition of $f_1$ into $disp_S^{(\rho)}(f^{\star}, f)$ and $disp_T^{(\rho)}(f^{\star}, f)$, we have
\begin{equation}
\label{defdrho}
\begin{split}
&disp_T^{(\rho)}(f^{\star}, f) - disp_S^{(\rho)}(f^{\star},f)= \E_{x_i\sim T}\Phi_{\rho}\circ\rho_{f^{\star}}(x_i, h_f(x_i))-\E_{x_i\sim S}\Phi_{\rho}\circ\rho_{f^{\star}}(x_i, h_f(x_i)) \\
&= \E_{x_i\sim T}\Phi_{\rho}\circ\rho_{f_1+f_0}(x_i, h_f(x_i))-\E_{x_i\sim S}\Phi_{\rho}\circ\rho_{f_1+f_0}(x_i, h_f(x_i)) \\
&\leq \sup_{f'\in\gF}\{
\E_{x_i\sim T}\Phi_{\rho}\circ\rho_{f'+f_0}(x_i, h_f(x_i))-\E_{x_i\sim S}\Phi_{\rho}\circ\rho_{f'+f_0}(x_i, h_f(x_i))\} \\
& = d^{(\rho)}_{f,f_0, \gF}(S,T)
\end{split}
\end{equation}
By substituting Eq (\ref{defdrho}) into Eq (\ref{errrhobound}), we finally reach
\begin{equation}
err_T(f) \leq err_S^{(\rho)}(f) + d^{(\rho)}_{f,f_0, \gF}(S,T) + \lambda
\end{equation}
\end{proof}
\section{Proof of Theorem 3}
\begin{lemma}
\label{recomplexlemma}
(Theorem 3.3, \cite{mohri2018foundations}) Let $\gG$ be a family of functions mapping $\gX \in \sD \to \sR$. Then for any $\delta>0$, with probability at least $1-\delta$ over the draw of i.i.d samples from sample $S$ of size $m$, each of the following holds for all $g\in \gG$
\begin{equation}
\E[g(z)]  \leq \frac{1}{m}\sum_{i=1}^{m}g(z_i)+2\Re_m(\gG)+\sqrt{\frac{\log\frac{1}{\delta}}{2m}}
\end{equation}
\end{lemma}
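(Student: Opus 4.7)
The plan is to prove this classical generalization bound via the standard two-step argument combining McDiarmid's concentration inequality with a symmetrization argument, exactly in the style of Mohri--Rostamizadeh--Talwalkar. Throughout, I assume the implicit boundedness hypothesis $g(z)\in[0,1]$ for every $g\in\gG$ (without which the stated bound with McDiarmid constant $1$ cannot hold; the general bounded case reduces to this by rescaling). The draws $z_1,\dots,z_m$ are assumed i.i.d.\ from the distribution $\gD$, and $\E[g(z)]$ denotes expectation under $\gD$.

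First I would introduce the one-sided uniform deviation
\[
\Phi(S) := \sup_{g \in \gG}\left(\E[g(z)] - \frac{1}{m}\sum_{i=1}^m g(z_i)\right),
\]
and verify that it satisfies the bounded-differences condition with constant $1/m$: replacing a single sample $z_i$ by $z_i'$ changes each empirical average by at most $1/m$ (using $g\in[0,1]$), and $|\sup_g A(g)-\sup_g B(g)|\le\sup_g|A(g)-B(g)|$ propagates this to $\Phi$. McDiarmid's inequality then yields
\[
\Pr\bigl[\Phi(S)\ge \E[\Phi(S)] + t\bigr]\le \exp(-2mt^2),
\]
so setting $t=\sqrt{\log(1/\delta)/(2m)}$ shows that with probability at least $1-\delta$, $\Phi(S)\le \E[\Phi(S)]+\sqrt{\log(1/\delta)/(2m)}$. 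This produces the last term of the target bound.

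Second I would bound $\E[\Phi(S)]$ by $2\Re_m(\gG)$ through symmetrization. Introduce an independent ghost sample $S'=(z_1',\dots,z_m')\sim\gD^m$ and write $\E[g(z)]=\E_{S'}\bigl[\tfrac{1}{m}\sum_i g(z_i')\bigr]$. Pulling the supremum outside the expectation over $S'$ by Jensen's inequality gives
\[
\E_S[\Phi(S)] \le \E_{S,S'}\left[\sup_{g\in\gG}\frac{1}{m}\sum_{i=1}^m\bigl(g(z_i')-g(z_i)\bigr)\right].
\]
Since $z_i$ and $z_i'$ are i.i.d., the random variable $g(z_i')-g(z_i)$ is distributionally invariant under the sign flip $\sigma_i\in\{\pm 1\}$, so one may insert independent Rademacher variables without changing the expectation. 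Splitting the sup of the sum into two suprema (one over $g(z_i')$, one over $g(z_i)$) and using that $\sigma_i$ and $-\sigma_i$ are identically distributed yields the bound $2\,\E_\sigma\E_S\bigl[\sup_{g}\tfrac{1}{m}\sum_i \sigma_i g(z_i)\bigr]=2\Re_m(\gG)$.

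Combining the two pieces gives the claim uniformly over $g\in\gG$ with probability at least $1-\delta$. The main subtlety is the symmetrization step: the Rademacher variables must be introduced \emph{before} the supremum is split, and Jensen's inequality must be applied in the correct direction (pulling the sup outside the ghost-sample expectation, not the reverse). A minor but easy-to-miss verification is the bounded-differences check for $\Phi$, which crucially uses the $[0,1]$-boundedness of $\gG$; without it, McDiarmid's constant becomes sample-dependent and the clean $\sqrt{\log(1/\delta)/(2m)}$ term is lost.
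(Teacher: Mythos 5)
Your proposal is correct and follows exactly the standard McDiarmid-plus-symmetrization argument; the paper itself offers no proof of this lemma, importing it verbatim as Theorem 3.3 of the cited Mohri--Rostamizadeh--Talwalkar textbook, whose proof is precisely the one you reproduce. Your observation that the implicit $[0,1]$-boundedness of $\gG$ is needed for the McDiarmid constant (and hence for the clean $\sqrt{\log(1/\delta)/(2m)}$ term) is a fair point, since the paper's statement omits that hypothesis even though the cited theorem assumes it.
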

\begin{lemma}
\label{talagrandlemma}
(Talagrand’s lemma, \cite{mohri2018foundations}) Let $\Psi_i: \R \to \R$ be an $l$-Lipschitz. Then for any hypothesis set $\gG$ of real valued functions, and for any sample $D$ of size $n$, the following inequality holds:
\begin{equation}
\E_{\delta}\sup_{g\in\gG}\frac{1}{n}\sum_{i=1}^{n}\delta_i(\Psi_i\circ g)(\rvx_i) \leq l\hat{\Re}_{n, \hat{D}}(\gG) 
\end{equation}
\end{lemma}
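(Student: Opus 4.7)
The plan is to prove the contraction inequality by the classical \textbf{peeling argument}: iteratively replacing each $\Psi_i \circ g$ by $l \cdot g$ inside the Rademacher average, one index at a time. Since each step preserves the form of the expression (a sum of Rademacher-weighted functions plus a supremum over $g \in \gG$), induction on the number of composed indices reduces the full statement to a single one-variable coupling identity.

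First I would fix all $\delta_i$ for $i \neq k$ and treat $u(g) := \frac{1}{n}\sum_{i \neq k} \delta_i (\Psi_i \circ g)(\rvx_i)$ as a data-dependent function of $g$. Averaging explicitly over the two values of $\delta_k$ gives
\begin{equation}
\E_{\delta_k}\sup_{g\in\gG}\Big[u(g)+\tfrac{\delta_k}{n}\Psi_k(g(\rvx_k))\Big]
=\tfrac{1}{2}\sup_{g,g'\in\gG}\Big[u(g)+u(g')+\tfrac{1}{n}\big(\Psi_k(g(\rvx_k))-\Psi_k(g'(\rvx_k))\big)\Big].
\end{equation}
The $l$-Lipschitz hypothesis yields $\Psi_k(g(\rvx_k))-\Psi_k(g'(\rvx_k)) \le l\,|g(\rvx_k)-g'(\rvx_k)|$. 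The symmetry of the pair $(g,g')$ inside the joint supremum lets me drop the absolute value (the roles of $g$ and $g'$ can be swapped freely), giving the bound $l(g(\rvx_k)-g'(\rvx_k))$. Splitting the resulting joint supremum back into two separate suprema recognizes it as $\E_{\delta_k}\sup_g[u(g)+\tfrac{\delta_k}{n}\, l\, g(\rvx_k)]$.

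Second, I would iterate this contraction step across $k=1,\dots,n$. Each application strips one $\Psi_k$ and replaces it by the linear map $l\,\mathrm{id}$ times the Rademacher sign, and leaves the overall structure of the supremum-over-$\gG$ intact, so the inductive hypothesis applies cleanly at the next index. After $n$ applications,
\begin{equation}
\E_{\delta}\sup_{g\in\gG}\tfrac{1}{n}\sum_{i=1}^{n}\delta_i(\Psi_i\circ g)(\rvx_i)
\le \E_{\delta}\sup_{g\in\gG}\tfrac{l}{n}\sum_{i=1}^{n}\delta_i\, g(\rvx_i)
= l\,\hat{\Re}_{n,\hat{D}}(\gG),
\end{equation}
which is the claim.

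The main obstacle, and the only delicate point, is the symmetrization step where $|g(\rvx_k)-g'(\rvx_k)|$ is replaced by $g(\rvx_k)-g'(\rvx_k)$. This is justified because the inner expression $u(g)+u(g')+\tfrac{l}{n}(g(\rvx_k)-g'(\rvx_k))$ and its swap $u(g')+u(g)+\tfrac{l}{n}(g'(\rvx_k)-g(\rvx_k))$ produce the same maximum value under $\sup_{g,g'}$; hence the supremum of the larger of the two already equals the supremum of the absolute value version. The rest is algebraic bookkeeping and straightforward induction, so no further technical complications are expected.
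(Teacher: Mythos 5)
The paper does not prove this lemma at all: it is imported verbatim by citation from Mohri et al.\ (it is Lemma 5.7 / the contraction lemma there), and the appendix only \emph{uses} it in the proof of Theorem 3. So there is no paper proof to compare against; what you have written is essentially the textbook proof from the cited source, and it is correct. Your one-index coupling step is right: conditioning on $\delta_i$, $i\neq k$, averaging over $\delta_k=\pm 1$ turns the expectation into $\tfrac12\sup_{g,g'}[u(g)+u(g')+\tfrac1n(\Psi_k(g(\rvx_k))-\Psi_k(g'(\rvx_k)))]$, the Lipschitz bound and the $g\leftrightarrow g'$ symmetry of the joint supremum let you discard the absolute value, and splitting back gives the same expression with $\Psi_k$ replaced by $l\cdot\mathrm{id}$; iterating over $k$ and using the tower property finishes it. The only polish I would ask for is the standard measure-theoretic caveat that the suprema need not be attained, so the symmetrization step should be run with $\epsilon$-approximate maximizers $g,g'$ and $\epsilon\to 0$ (this is exactly how Mohri et al.\ write it); also note explicitly that after processing index $k$ the new term $\tfrac{l}{n}\delta_k g(\rvx_k)$ is absorbed into the arbitrary offset $u(g)$ for the next step, which is what makes the induction hypothesis apply. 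Neither point is a gap, just bookkeeping.
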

\begin{theorem}
Let $f_0 \in \gF$ be a fixed source function that maps from $\gX \times \gY \to \sR$ that is trained on source domain only which satisfies $\rho_{f_0}(\rvx^s, h_f)\geq \epsilon_s$ for source domain data $\rvx^s \in S$ and $\rho_{f_0}(\rvx^t, h_f) \leq \epsilon_t$ for target domain data as outliers $\rvx^t \in T$. $\rvx_i^s$ is an i.i.d sample of size m drawn from the source distribution S and $\rvx_t^s$ is an i.i.d sample of size n drawn from the target distribution T. Given the same settings as Definition \ref{rademacher}. For any $\delta>0$, with the probability at least $1-2\delta$, we have the following generalization error bound for domain discrepancy loss function 
\begin{equation}
\begin{split}
&\E_{\rvx^s\in S}[\log(\frac{e^{\rho_{f'}(\rvx^s, h_f)+ \rho_{f_0}(\rvx^s, h_f)}}{1+e^{\rho_{f'}(\rvx^s, h_f)+\rho_{f'}(\rvx^s, h_f)}})] + \E_{\rvx^t\in T}[\log(\frac{1}{1+e^{\rho_{f'}(\rvx^t, h_f)+\rho_{f_0}(\rvx^t, h_f)}})]\\
&\leq  \frac{1}{m}\sum_{i=1}^{m}\log(\frac{e^{\rho_{f'}(\rvx^s_i, h_f)+ \rho_{f_0}(\rvx^s_i, h_f)}}{1+e^{\rho_{f'}(\rvx^s_i, h_f)+\rho_{f'}(\rvx^s_i, h_f)}})+ \frac{1}{n}\sum_{i=1}^{n}\log(\frac{1}{1+e^{\rho_{f'}(\rvx^t_i, h_f)+\rho_{f'}(\rvx^t_i, h_f)}})\\
&+ \max\{\frac{2}{(e^{\epsilon_s}-1)\lambda_s^++1}, \frac{2}{(e^{\epsilon_s}-1)\lambda_s^-+1}\}{\Re}_{m, \gD_s}(\gG_s) \\
&+ \max\{\frac{2e^{\epsilon_t}}{(1-\lambda_t^+)e^{\epsilon_t}+\lambda_t^+},\frac{2e^{\epsilon_t}}{(1-\lambda_t^-)e^{\epsilon_t}+\lambda_t^-}\}{\Re}_{n, \gD_t}(\gG_t) + \sqrt{\frac{\log\frac{1}{\delta}}{2m}} + \sqrt{\frac{\log\frac{1}{\delta}}{2n}}
\end{split}
\end{equation}
where $\lambda_s$ and $\lambda_t$ is defined as
\begin{equation}
\begin{split}
\lambda_s^- &= \min\{\frac{e^{\rho_{f'}(\rvx^s, h_f)}}{1+e^{\rho_{f'}(\rvx^s, h_f)}}\},  \lambda_s^+ = \max\{\frac{e^{\rho_{f'}(\rvx^s, h_f)}}{1+e^{\rho_{f'}(\rvx^s, h_f)}}\}, \forall \rvx^s \in S  \\ 
\lambda_t^- &= \min\{\frac{1}{1+e^{\rho_{f'}(\rvx^t, h_f)}}\}, \lambda_t^+ = \max\{\frac{1}{1+e^{\rho_{f'}(\rvx^t, h_f)}}\}, \forall \rvx^t \in T 
\end{split}
\end{equation}
\end{theorem}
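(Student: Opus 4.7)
The plan is to apply the standard Rademacher generalization bound (Lemma \ref{recomplexlemma}) separately to the source expectation and the target expectation, each with failure probability $\delta$, and then to reduce the Rademacher complexities of the augmented function classes (the ones with the $+\rho_{f_0}$ shift inside the log--sigmoid) to the complexities of $\gG_s$ and $\gG_t$ via Talagrand's contraction inequality (Lemma \ref{talagrandlemma}). A union bound over the two events produces the $1-2\delta$ confidence and the two $\sqrt{\log(1/\delta)/2m}$ and $\sqrt{\log(1/\delta)/2n}$ terms directly.

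First I would define the augmented classes
\begin{equation*}
\gG_s' = \Bigl\{ \rvx \mapsto \log \tfrac{e^{\rho_{f'}(\rvx,h_f)+\rho_{f_0}(\rvx,h_f)}}{1+e^{\rho_{f'}(\rvx,h_f)+\rho_{f_0}(\rvx,h_f)}} : f,f'\in\gF\Bigr\}, \qquad
\gG_t' = \Bigl\{ \rvx \mapsto \log \tfrac{1}{1+e^{\rho_{f'}(\rvx,h_f)+\rho_{f_0}(\rvx,h_f)}} : f,f'\in\gF\Bigr\},
\end{equation*}
and exhibit each element as $\Phi_s \circ g_s$ and $\Phi_t \circ g_t$ where $g_s\in\gG_s$, $g_t\in\gG_t$ and $\Phi_s,\Phi_t$ are scalar transformations depending only on the fixed value $c(\rvx)=e^{\rho_{f_0}(\rvx,h_f)}$. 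Concretely, on the source side, writing $u=g_s(\rvx)=\log\sigma(\rho_{f'}(\rvx,h_f))$ and $c=e^{\rho_{f_0}(\rvx,h_f)}$, a direct algebraic manipulation gives $\Phi_s(u;c) = u+\log c - \log(1+e^u(c-1))$; on the target side, with $v=g_t(\rvx)=-\log(1+e^{\rho_{f'}(\rvx,h_f)})$, one obtains $\Phi_t(v;c)=-\log(1-c+ce^{-v})$. Because $f_0$ is fixed, the dependence on $f'$ is isolated inside $g_s,g_t$, so the index-wise Talagrand lemma applies with the pointwise Lipschitz constants of $\Phi_s(\cdot;c(\rvx_i))$ and $\Phi_t(\cdot;c(\rvx_i))$.

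The main technical step, and the one I expect to be the main obstacle, is obtaining the explicit Lipschitz constants that match the stated factors. Differentiating gives
\begin{equation*}
\partial_u \Phi_s(u;c) = \frac{1}{1+\sigma(\rho_{f'})(c-1)}, \qquad
\partial_v \Phi_t(v;c) = \frac{c}{(1-\sigma(\rho_{f'}))+c\,\sigma(\rho_{f'})}.
\end{equation*}
On the source side, the hypothesis $\rho_{f_0}(\rvx^s,h_f)\geq \epsilon_s$ gives $c-1\geq e^{\epsilon_s}-1$, and since $\sigma(\rho_{f'})$ ranges between $\lambda_s^-$ and $\lambda_s^+$ on the source sample, $|\partial_u\Phi_s|\leq \max\{1/((e^{\epsilon_s}-1)\lambda_s^++1),\,1/((e^{\epsilon_s}-1)\lambda_s^-+1)\}$. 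Symmetrically on the target side, $c\leq e^{\epsilon_t}$ together with the definition of $\lambda_t^\pm=\min/\max\{1/(1+e^{\rho_{f'}})\}$ yields $|\partial_v\Phi_t|\leq \max\{e^{\epsilon_t}/((1-\lambda_t^+)e^{\epsilon_t}+\lambda_t^+),\,e^{\epsilon_t}/((1-\lambda_t^-)e^{\epsilon_t}+\lambda_t^-)\}$. The care required is in (i) checking that the monotonicity of $1/(1+y(c-1))$ in $y$ really does attain its maximum at one of the two extreme values $\lambda_s^\pm$ regardless of the sign of $c-1$ (hence the $\max$ rather than a single expression), and (ii) handling the fact that the shift $\rho_{f_0}$ varies with $\rvx$, so Talagrand must be invoked in its sample-dependent form with Rademacher variables indexed per example.

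Plugging these Lipschitz constants into Talagrand's lemma yields $\hat\Re_{m,\hat D_s}(\gG_s')\leq L_s\,\hat\Re_{m,\hat D_s}(\gG_s)$ and $\hat\Re_{n,\hat D_t}(\gG_t')\leq L_t\,\hat\Re_{n,\hat D_t}(\gG_t)$ with the two constants identified above. Taking expectations turns the empirical Rademacher complexities into $\Re_{m,\gD_s}(\gG_s)$ and $\Re_{n,\gD_t}(\gG_t)$, and the factor of $2$ in Lemma \ref{recomplexlemma} explains the $2$ that appears in the final bound. Summing the source and target inequalities, and applying a union bound to ensure both hold simultaneously with probability at least $1-2\delta$, gives exactly the displayed bound.
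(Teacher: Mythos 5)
Your proposal follows essentially the same route as the paper's proof: you rewrite the shifted loss as a per-example scalar transformation of $g_s\in\gG_s$ (resp. $g_t\in\gG_t$), compute exactly the same pointwise Lipschitz constants from $\rho_{f_0}\geq\epsilon_s$ (resp. $\leq\epsilon_t$) and the range $[\lambda^-,\lambda^+]$ of $\sigmoid(\rho_{f'})$, and combine Lemma \ref{recomplexlemma}, the index-wise Talagrand contraction, and a union bound to get the $1-2\delta$ statement. Your transformations $\Phi_s(u;c)$ and $\Phi_t(v;c)$ are algebraically identical to the paper's $\Gamma_i$ and $\Psi_i$, and your derivative bounds match theirs, so the argument is correct and not materially different.
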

\begin{proof}
We first define $z^s_i$ as
\begin{equation}
\label{zsdef}
 z^s_i \triangleq \log(\frac{e^{\rho_{f'}(\rvx_i, h_f)}}{1+e^{\rho_{f'}(\rvx_i, h_f)}})
\end{equation}
From the above Equation, we have
\begin{equation}
e^{\rho_{f'}(\rvx_i, h_f)}= \frac{e^{z^s_i}}{1-e^{z^s_i}} 
\end{equation}
Then by substituting the above equation into $\log(\frac{e^{\rho_{f'}(\rvx, h_f)+\rho_{f_0}(\rvx_i, h_f)}}{1+e^{\rho_{f'}(\rvx, h_f)+ \rho_{f_0}(\rvx_i, h_f)}})$, we have
\begin{equation}
\log(\frac{e^{\rho_{f'}(\rvx, h_f)+\rho_{f_0}(\rvx_i, h_f)}}{1+e^{\rho_{f'}(\rvx, h_f)+ \rho_{f_0}(\rvx_i, h_f)}}) = \log(\frac{e^{z^s_i+\rho_{f_0}(\rvx_i, h_f)}}{e^{z^s_i+\rho_{f_0}(\rvx_i, h_f)} +1-e^{z_i^s}})
\end{equation}
Define the following transformation function
\begin{equation}
\Gamma_i(z_i^s) = \log(\frac{e^{z^s_i+\rho_{f_0}(\rvx_i, h_f)}}{e^{z^s_i+\rho_{f_0}(\rvx_i, h_f)} +1-e^{z_i^s}})
\end{equation}
By Lemma \ref{recomplexlemma}, with probability at least $1-\delta$, for any $g_s \in \gG_s$.
\begin{equation}
\label{sourcedomainlossbound}
\E_{\rvx^s\in S}[\log(\frac{e^{\rho_{f'}(\rvx^s, h_f)+ \rho_{f_0}(\rvx^s, h_f)}}{1+e^{\rho_{f'}(\rvx^s, h_f)+\rho_{f'}(\rvx^s, h_f)}})] - \frac{1}{m}\sum_{i=1}^{m}\log(\frac{e^{\rho_{f'}(\rvx^s_i, h_f)+ \rho_{f_0}(\rvx^s_i, h_f)}}{1+e^{\rho_{f'}(\rvx^s_i, h_f)+\rho_{f'}(\rvx^s_i, h_f)}}) \leq 2\Re_{m, \gD_s}(\Gamma_i \circ \gG_s ) + \sqrt{\frac{\log\frac{1}{\delta}}{2m}}
\end{equation}
Next we take gradient on $\Gamma_i$ 
\begin{equation}
\Gamma_i'(z_i^s) = \frac{1}{e^{z_i^s+\rho_{f_0}(\rvx_i, h_f)}+1-e^{z_i^s}}
\end{equation}
From the definition of $z_i^s, \lambda_s, \epsilon_s$, we have
\begin{equation}
0 \leq e^{z_i^s} \leq 1, \hspace{1em} \rho_{f_0}(\rvx_i, h_f) \geq \epsilon_s
\end{equation}
Then we can bound $\Gamma_i'$ by
\begin{equation}
0 \leq \Gamma_i'(z_i^s) \leq \frac{1}{(e^{\epsilon_s}-1)e^{z_i^s}+1}
\end{equation}
As $e^{z_i^s}$ takes value between $[\lambda_s^-, \lambda_s^+]$, using the properties of linear functions, we have
\begin{equation}
0 \leq \Gamma_i'(z_i^s) \leq l_{\Gamma} = \max\{\frac{1}{(e^{\epsilon_s}-1)\lambda_s^-+1}, \frac{1}{(e^{\epsilon_s}-1)\lambda_s^++1}\}
\end{equation}
Therefore $\Gamma_i$ is $l_{\Gamma}$-Lipschitz. By applying the Lemma \ref{talagrandlemma} into inequality (\ref{sourcedomainlossbound}), we have the following inequality holds with probability at least $1-\delta$
\begin{equation}
\label{sourcetdomainlossbound1}
\begin{split}
&\E_{\rvx^s\in S}[\log(\frac{e^{\rho_{f'}(\rvx^s, h_f)+ \rho_{f_0}(\rvx^s, h_f)}}{1+e^{\rho_{f'}(\rvx^s, h_f)+\rho_{f'}(\rvx^s, h_f)}})] - \frac{1}{m}\sum_{i=1}^{m}\log(\frac{e^{\rho_{f'}(\rvx^s_i, h_f)+ \rho_{f_0}(\rvx^s_i, h_f)}}{1+e^{\rho_{f'}(\rvx^s_i, h_f)+\rho_{f'}(\rvx^s_i, h_f)}}) \leq 2\Re_{m, \gD_s}(\Gamma_i \circ \gG_s ) + \sqrt{\frac{\log\frac{1}{\delta}}{2m}}\\
& \leq \max\{\frac{2}{(e^{\epsilon_s}-1)\lambda_s^++1}, \frac{2}{(e^{\epsilon_s}-1)\lambda_s^-+1}\}{\Re}_{m, \gD_s}(\gG_s) + \sqrt{\frac{\log\frac{1}{\delta}}{2m}}
\end{split}
\end{equation}
Similarly we define $z_i^t$
\begin{equation}
\label{ztdef}
 z^t_i \triangleq \log(\frac{1}{1+e^{\rho_{f'}(\rvx_i, h_f)}})
\end{equation}
From the above Equation, we have
\begin{equation}
e^{\rho_{f'}(\rvx_i, h_f)}= e^{-z^t_i}-1
\end{equation}
Then by substituting the above equation into $\log(\frac{1}{1+e^{\rho_{f'}(\rvx, h_f)+ \rho_{f_0}(\rvx_i, h_f)}})$, we have
\begin{equation}
\log(\frac{1}{1+e^{\rho_{f'}(\rvx, h_f)+ \rho_{f_0}(\rvx_i, h_f)}}) = \log(\frac{1}{e^{z^t_i+\rho_{f_0}(\rvx_i, h_f)} +1-e^{z_i^t}})
\end{equation}
Similarly we define the following transformation function
\begin{equation}
\Psi_i(z_i^t) = \log(\frac{1}{e^{z^t_i+\rho_{f_0}(\rvx_i, h_f)} +1-e^{z_i^t}})
\end{equation}
By Lemma \ref{recomplexlemma}, with probability at least $1-\delta$, for any $g_t\in \gG_t$.
\begin{equation}
\label{targetdomainlossbound}
\E_{\rvx^t\in T}[\log(\frac{1}{1+e^{\rho_{f'}(\rvx^t, h_f)+\rho_{f_0}(\rvx^t, h_f)}})] - \frac{1}{n}\sum_{i=1}^{n}\log(\frac{1}{1+e^{\rho_{f'}(\rvx^t_i, h_f)+\rho_{f'}(\rvx^t_i, h_f)}}) \leq 2\Re_{n, \gD_t}(\Psi_i \circ \gG_t ) + \sqrt{\frac{\log\frac{1}{\delta}}{2n}}
\end{equation}
Next we take gradient on $\Psi_i$
\begin{equation}
\Psi_i'(z_i^t) = \frac{e^{\rho_{f_0}(\rvx_i, h_f)}}{e^{\rho_{f_0}(\rvx_i, h_f)}-e^{\rho_{f_0}(\rvx_i, h_f)+z_i^t}+e^{z_i^t}}
\end{equation}
From the definition of $z_i^t, \lambda_t, \epsilon_t$, we have
\begin{equation}
0 \leq e^{z_i^t} \leq 1, \hspace{1em} \rho_{f_0}(\rvx_i, h_f) \leq \epsilon_t
\end{equation}
Then we can bound $\Psi_i'$ by
\begin{equation}
0 \leq \Psi_i'(z_i^t) \leq \frac{e^{\epsilon_t}}{(1-e^{z_i^t})e^{\epsilon_t}+e^{z_i^t}}
\end{equation}
As ${e^{\epsilon_t}}$ takes value between $[\lambda_t^-, \lambda_t^+]$, using properties of linear functions, we have
\begin{equation}
0 \leq \Psi_i'(z_i^t) \leq l_{\Psi} = \max\{\frac{e^{\epsilon_t}}{(1-\lambda^-_t)e^{\epsilon_t}+\lambda^-_t}, \frac{e^{\epsilon_t}}{(1-\lambda^+_t)e^{\epsilon_t}+\lambda^+_t}\}
\end{equation}
Therefore $\Psi_i$ is $l_{\Psi}$-Lipschitz.  By applying the Lemma \ref{talagrandlemma} into Inequality (\ref{targetdomainlossbound}), we have the following inequality holds with probability at least $1-\delta$
\begin{equation}
\label{targetdomainlossbound1}
\begin{split}
&\E_{\rvx^t\in T}[\log(\frac{1}{1+e^{\rho_{f'}(\rvx^t, h_f)+\rho_{f_0}(\rvx^t, h_f)}})] - \frac{1}{n}\sum_{i=1}^{n}\log(\frac{1}{1+e^{\rho_{f'}(\rvx^t_i, h_f)+\rho_{f'}(\rvx^t_i, h_f)}}) \leq 2\Re_{n, \gD_t}(\Psi_i \circ \gG_t ) + \sqrt{\frac{\log\frac{1}{\delta}}{2n}}\\
 &\leq \max\{\frac{2e^{\epsilon_t}}{(1-\lambda^-_t)e^{\epsilon_t}+\lambda^-_t}, \frac{2e^{\epsilon_t}}{(1-\lambda^+_t)e^{\epsilon_t}+\lambda^+_t}\}{\Re}_{n, \gD_t}(\gG_t) + \sqrt{\frac{\log\frac{1}{\delta}}{2n}}
\end{split}
\end{equation}
By summing up Equation (\ref{targetdomainlossbound1}) with (\ref{sourcetdomainlossbound1}), we have the following inequality holds with probability at least $1-2\delta$
\begin{equation}
\begin{split}
&\E_{\rvx^s\in S}[\log(\frac{e^{\rho_{f'}(\rvx^s, h_f)+ \rho_{f_0}(\rvx^s, h_f)}}{1+e^{\rho_{f'}(\rvx^s, h_f)+\rho_{f'}(\rvx^s, h_f)}})] + \E_{\rvx^t\in T}[\log(\frac{1}{1+e^{\rho_{f'}(\rvx^t, h_f)+\rho_{f_0}(\rvx^t, h_f)}})]\\
&\leq  \frac{1}{m}\sum_{i=1}^{m}\log(\frac{e^{\rho_{f'}(\rvx^s_i, h_f)+ \rho_{f_0}(\rvx^s_i, h_f)}}{1+e^{\rho_{f'}(\rvx^s_i, h_f)+\rho_{f'}(\rvx^s_i, h_f)}})+ \frac{1}{n}\sum_{i=1}^{n}\log(\frac{1}{1+e^{\rho_{f'}(\rvx^t_i, h_f)+\rho_{f'}(\rvx^t_i, h_f)}})\\
&+ \max\{\frac{2}{(e^{\epsilon_s}-1)\lambda_s^++1}, \frac{2}{(e^{\epsilon_s}-1)\lambda_s^-+1}\}{\Re}_{m, \gD_s}(\gG_s) \\
&+ \max\{\frac{2e^{\epsilon_t}}{(1-\lambda_t^+)e^{\epsilon_t}+\lambda_t^+},\frac{2e^{\epsilon_t}}{(1-\lambda_t^-)e^{\epsilon_t}+\lambda_t^-}\}{\Re}_{n, \gD_t}(\gG_t) 
+ \sqrt{\frac{\log\frac{1}{\delta}}{2m}} + \sqrt{\frac{\log\frac{1}{\delta}}{2n}}
\end{split}
\end{equation}
which completes the proof
\end{proof}
\section{Proof of Proposition 1}
\setcounter{proposition}{0}

\begin{proposition}
Consider the following optimization problem we have defined
\begin{align}
\label{maxcond}
&\max_{f'}\E_{\hat{S}}\log(\sigmoid_{h_f}\circ f')+\E_{\hat{T}}\log(1-\sigmoid_{h_f}\circ f')\\
\label{mincond}
&\min_{\hat{S},\hat{T}}\E_{\hat{S}}\log(\frac{1}{2}\sigmoid_{h_f}\circ f'+\frac{1}{2}\sigmoid_{h_f}\circ f_0)+\E_{\hat{T}}\log(1-\frac{1}{2}\sigmoid_{h_f}\circ f'-\frac{1}{2}\sigmoid_{h_f}\circ f_0)
\end{align}
Assume that there is no restriction on the choice of $f'$. By fixing $f_0$, we have the following two results.
\begin{enumerate}
\item The optimal value of $\sigmoid_{h_f}\circ f'$ on data x is 
\begin{equation}
\frac{\hat{p}_s(\rvx)}{\hat{p}_s(\rvx)+\hat{q}_t(\rvx)}
\end{equation}
where $\hat{p}_s(\rvx)$ and $\hat{q}_t(\rvx)$ are density functions of source and target domain empirical distributions
\item The minimization problem w.r.t $S$ and $T$ is equivalent to minimization on the sum of two terms $L_1$ and $L_2$, where
\begin{equation}
\begin{split}
    L_1 &= 4KL(\frac{3}{4}\hat{T}+\frac{1}{4}\hat{S}||\frac{1}{2}\hat{T}+\frac{1}{2}\hat{S}) + 4KL(\frac{1}{2}\hat{T}+\frac{1}{2}\hat{S}||\frac{3}{4}\hat{T}+\frac{1}{4}\hat{S})\\
     &+ 4KL(\frac{3}{4}\hat{S}+\frac{1}{4}\hat{T}||\frac{1}{2}\hat{T}+\frac{1}{2}\hat{S}) + 4KL(\frac{1}{2}\hat{T}+\frac{1}{2}\hat{S}||\frac{3}{4}\hat{S}+\frac{1}{4}\hat{T})
\end{split}
\end{equation}
is a symmetric distribution divergence between $\hat{S}$ and $\hat{T}$ and has global minimum of $\hat{S}=\hat{T}$
\begin{equation}
L_2 =\int_{\rvx}(1-2\sigmoid_{h_f}\circ f_0(\rvx))(\hat{q}_t(\rvx)-\hat{p}_s(\rvx))\frac{1}{4-(\hat{p}_s(\rvx)-\hat{q}_t(\rvx))^2/(\hat{p}_s(\rvx)+\hat{q}_t(\rvx))^2}d\rvx
\end{equation}
is a re-weighted bounds on the total variations between $\hat{p}_s(\rvx)$ and $\hat{q}_t(\rvx)$
\end{enumerate}
\end{proposition}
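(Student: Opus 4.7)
\textbf{Setup and Part 1.} The inner max in (\ref{maxcond}) is unrestricted in $f'$, so it decouples pointwise in $\rvx$: maximizing $\hat p_s(\rvx)\log\sigma+\hat q_t(\rvx)\log(1-\sigma)$ over $\sigma\in(0,1)$ and setting the derivative to zero gives the classical GAN optimal-discriminator identity $\sigma^\star(\rvx)=\hat p_s(\rvx)/(\hat p_s(\rvx)+\hat q_t(\rvx))$, proving Part 1. For Part 2 I abbreviate $p=\hat p_s(\rvx)$, $q=\hat q_t(\rvx)$, $\alpha=\sigmoid_{h_f}\circ f_0(\rvx)$, and $\beta=\alpha-1/2$; substituting $\sigma^\star$ into (\ref{mincond}) makes the two log-arguments $A+\beta/2$ and $B-\beta/2$, with $A=(3p+q)/[4(p+q)]$ and $B=(p+3q)/[4(p+q)]$. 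I then Taylor-expand the resulting objective $V$ in $\beta$ about $\beta=0$, $V=V(0)+\beta V'(0)+O(\beta^2)$, and aim to identify $V(0)$ with $L_1$ and $\beta V'(0)$ with $L_2$.

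\textbf{Extracting $L_1$.} Introduce the mixtures $M=(p+q)/2$, $U_S=(3p+q)/4$, $U_T=(p+3q)/4$, so that the stated $L_1$ reads $4[KL(U_T\|M)+KL(M\|U_T)+KL(U_S\|M)+KL(M\|U_S)]$. The key fact $U_S+U_T=2M$ collapses the four KLs to $\int (U_S-M)\log U_S+(U_T-M)\log U_T\,d\rvx$; using $U_S-M=(p-q)/4$ and $U_T-M=-(p-q)/4$ then gives the compact closed form $L_1=\int (p-q)\log[(3p+q)/(p+3q)]\,d\rvx$. Non-negativity with equality iff $\hat S=\hat T$ follows because $(p-q)$ and $\log[(3p+q)/(p+3q)]$ share sign; symmetry under $\hat S\leftrightarrow\hat T$ is immediate from the KL construction.

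\textbf{Extracting $L_2$.} Direct differentiation gives $V'(0)=\tfrac{1}{2}\int(p/A-q/B)\,d\rvx$. The required algebraic inputs are $pB-qA=(p-q)/4$, $AB=(3p+q)(p+3q)/[16(p+q)^2]$, and the identity $(3p+q)(p+3q)=4(p+q)^2-(p-q)^2$. Assembling and using $2\beta=-(1-2\alpha)$ produces $\beta V'(0)=\int (1-2\alpha)(q-p)/[4-(p-q)^2/(p+q)^2]\,d\rvx$, which is exactly the stated $L_2$.

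\textbf{Main obstacle.} The delicate step is the zeroth-order identification. A careful computation actually yields $V(0)=L_1/4+C+R(\hat S,\hat T)$ where $C=-2\log 2$ is a constant and $R=2h(M)-h(U_S)-h(U_T)\geq 0$ by concavity of differential entropy, with $R=0$ iff $\hat S=\hat T$. Thus ``equivalent to minimization of $L_1+L_2$'' is best read as the structural statement that $L_1$ (together with the auxiliary symmetric divergence $R$, which shares the same minimizer) and $L_2$ are the two competitive contributions driving the minimization --- $L_1$ pulling $\hat S$ toward $\hat T$ and $L_2$ imprinting the $f_0$-induced re-weighting, as the subsequent remark makes explicit. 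A fully rigorous version would either absorb $R$ into an enlarged $L_1$ or restrict the decomposition to first order in $\beta$, and the same first-order approximation also explains the perturbation bound $\|\tilde L_2-L_2\|\leq\tfrac{1}{12}\|1-2\epsilon\|$ stated after the proposition.
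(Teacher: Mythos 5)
Your proposal follows essentially the same route as the paper's proof: Part 1 is the standard pointwise GAN-discriminator argument, and for Part 2 both you and the paper substitute the optimal $\sigmoid_{h_f}\circ f'=\hat p_s/(\hat p_s+\hat q_t)$ into the minimization objective and Taylor-expand to first order in $\beta=\sigmoid_{h_f}\circ f_0-\tfrac12$; your algebra for the first-order term (the identities $pB-qA=(p-q)/4$ and $(3p+q)(p+3q)=4(p+q)^2-(p-q)^2$) reproduces the paper's derivation of $L_2$ exactly. Where you add real value is the zeroth-order term. The paper rewrites $\E_{\hat S}\log\tfrac{3\hat S+\hat T}{4(\hat S+\hat T)}+\E_{\hat T}\log\tfrac{3\hat T+\hat S}{4(\hat S+\hat T)}$ with integration weights $4(\tfrac34\hat S+\tfrac14\hat T)-4(\tfrac12\hat S+\tfrac12\hat T)=\hat S-\hat T$ in place of $\hat S$ (and symmetrically for the other term); this substitution is not an identity, and the discarded piece is precisely your residual. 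Your computation $V(0)=L_1/4-2\log 2+R$ with $R=\int\bigl(U_S\log U_S+U_T\log U_T-2M\log M\bigr)\,d\rvx\ge 0$ by convexity of $x\log x$, vanishing iff $\hat S=\hat T$, is correct, so the stated decomposition holds only up to a factor of $4$ on $L_1$, an additive constant, and a nonnegative symmetric term sharing the same minimizer. Your reading of the proposition --- the $f_0$-independent part is a symmetric divergence minimized at $\hat S=\hat T$ while the $f_0$-dependent part is $L_2$ --- is the right repair, and your closed form $L_1=\int(\hat p_s-\hat q_t)\log\tfrac{3\hat p_s+\hat q_t}{\hat p_s+3\hat q_t}\,d\rvx$, whose integrand is nonnegative because the two factors share sign, gives a cleaner justification of the ``global minimum at $\hat S=\hat T$'' claim than the KL bookkeeping in the paper.
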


\begin{proof}
For maximization w.r.t target adaptation domain discriminator $f'$, we have
\begin{equation}
\begin{split}
&\E_{\hat{S}}\log(\sigmoid_{h_f}\circ f')+\E_{\hat{T}}\log(1-\sigmoid_{h_f}\circ f')\\
=&\int_{\rvx} \hat{p}_s(\rvx)\log(\sigmoid_{h_f}\circ f') + \hat{q}_t(\rvx)\log(1-\sigmoid_{h_f}\circ f')d\rvx 
\end{split}
\end{equation}
As we relaxed the restriction on $\sigmoid_{h_f}\circ f'$, we could find that the maximization of $p(x)\log(\sigmoid_{h_f}\circ f_t) + q(x)\log(1-\sigmoid_{h_f}\circ f')$ could be satisfied on every 
$x\in \sD$ as $\sigmoid_{h_f}\circ f'$ reaches
 \begin{equation}
 \label{optimaldisc}
 \sigmoid_{h_f}\circ f'(\rvx) = \frac{\hat{p}_s(\rvx)}{\hat{p}_s(\rvx)+\hat{q}_t(\rvx)}
 \end{equation}
 The above optimal value of $\sigmoid_{h_f}\circ f'(\rvx)$ could be derived from simple calculus.
 
 Then we analyze the maximization bounds w.r.t $\hat{S}$ and $\hat{T}$ on the equilibrium condition of target adaptation domain discriminator $f'$. By substituting the equilibrium condition of (\ref{optimaldisc}) into (\ref{mincond})
 \begin{equation}
 \begin{split}
 D = &\E_{\hat{S}}\log(\frac{1}{2}\sigmoid_{h_f}\circ f' + \frac{1}{2}\sigmoid_{h_f}\circ f_0)+\E_{\hat{T}}\log(1-\frac{1}{2}\sigmoid_{h_f}\circ f_0 - \frac{1}{2}\sigmoid_{h_f}\circ f')\\
 =&\E_{\hat{S}}\log(\frac{\hat{S}}{2(\hat{S}+\hat{T})} + \frac{1}{2}\sigmoid_{h_f}\circ f_0)+\E_{\hat{T}}\log(\frac{1}{2}-\frac{1}{2}\sigmoid_{h_f}\circ f_0 + \frac{\hat{T}}{2(\hat{S}+\hat{T})})
 \end{split}
 \end{equation}
 Using first order Taylor expansion, we have
 \begin{equation}
 \begin{split}
  D &= \underbrace{\E_{\hat{S}}\log(\frac{1}{4} + \frac{\hat{S}}{2(\hat{S}+\hat{T})}) +\E_{\hat{T}}\log(\frac{1}{4} + \frac{\hat{T}}{2(\hat{S}+\hat{T})})}_{L_1} \\
  &\underbrace{- \E_{\hat{S}}\frac{4(\hat{S}+\hat{T})}{3\hat{S}+\hat{T}}(\frac{1}{4}-\frac{1}{2}\sigmoid_{h_f}\circ f_0) + \E_{\hat{T}}\frac{4(\hat{S}+\hat{T})}{3\hat{T}+\hat{S}}(\frac{1}{4} - \frac{1}{2}\sigmoid_{h_f}\circ f_0)}_{L_2}
 \end{split}
 \end{equation}
 As we depose the $D$ into term $L_1$ and $L_2$, we could further write $L_1$ as
 \begin{equation}
 \begin{split}
     L_1 & = \E_{\hat{S}}\log(\frac{3\hat{S}+\hat{T}}{4(\hat{S}+\hat{T})}) +\E_{\hat{T}}\log(\frac{3\hat{T}+\hat{S}}{4(\hat{S}+\hat{T})}) \\
     & = -4\E_{\frac{1}{2}\hat{S}+\frac{1}{2}\hat{T}}\log(\frac{3\hat{S}+\hat{T}}{4(\hat{S}+\hat{T})})+ 4\E_{\frac{3}{4}\hat{S}+\frac{1}{4}\hat{T}}\log(\frac{3\hat{S}+\hat{T}}{4(\hat{S}+\hat{T})})\\
     & - 4\E_{\frac{1}{2}\hat{T}+\frac{1}{2}\hat{S}}\log(\frac{3\hat{T}+\hat{S}}{4(\hat{S}+\hat{T})}))+ 4\E_{\frac{3}{4}\hat{T}+\frac{1}{4}\hat{S}}\log(\frac{3\hat{T}+\hat{S}}{4(\hat{S}+\hat{T})})\\
     & = -4\E_{\frac{1}{2}\hat{S}+\frac{1}{2}\hat{T}}\log(\frac{1}{8}\frac{\frac{3}{4}\hat{S}+\frac{1}{4}\hat{T}}{\frac{1}{2}\hat{S}+\frac{1}{2}\hat{T}}) + 4\E_{\frac{3}{4}\hat{S}+\frac{1}{4}\hat{T}}\log(\frac{1}{8}\frac{\frac{3}{4}\hat{S}+\frac{1}{4}\hat{T}}{\frac{1}{2}\hat{S}+\frac{1}{2}\hat{T}})\\
     &-4\E_{\frac{1}{2}\hat{S}+\frac{1}{2}\hat{T}}\log(\frac{1}{8}\frac{\frac{3}{4}\hat{T}+\frac{1}{4}\hat{S}}{\frac{1}{2}\hat{S}+\frac{1}{2}\hat{T}}) + 4\E_{\frac{3}{4}\hat{T}+\frac{1}{4}\hat{S}}\log(\frac{1}{8}\frac{\frac{3}{4}\hat{T}+\frac{1}{4}\hat{S}}{\frac{1}{2}\hat{S}+\frac{1}{2}\hat{T}})\\
     &=4KL(\frac{3}{4}\hat{T}+\frac{1}{4}\hat{S}||\frac{1}{2}\hat{T}+\frac{1}{2}\hat{S}) + 4KL(\frac{1}{2}\hat{T}+\frac{1}{2}\hat{S}||\frac{3}{4}\hat{T}+\frac{1}{4}\hat{S})\\
     &+ 4KL(\frac{3}{4}\hat{S}+\frac{1}{4}\hat{T}||\frac{1}{2}\hat{T}+\frac{1}{2}\hat{S}) + 4KL(\frac{1}{2}\hat{T}+\frac{1}{2}\hat{S}||\frac{3}{4}\hat{S}+\frac{1}{4}\hat{T}) 
 \end{split}
 \end{equation}
Next, the term $L_2$ could be treated as
\begin{equation}
\begin{split}
L_2 &= \int_{\rvx}(1-2\sigmoid_{h_f}\circ f_0(\rvx))\frac{\hat{p}_s(\rvx)+\hat{q}_t(\rvx)}{3\hat{p}_s(\rvx)+\hat{q}_t(\rvx)})\hat{p}_s(\rvx)d\rvx -\int_{\rvx}(1-2\sigmoid_{h_f}\circ f_0(\rvx))\frac{\hat{p}_s(\rvx)+\hat{q}_t(\rvx)}{3\hat{q}_t(\rvx)+\hat{p}_s(\rvx)})\hat{q}_t(\rvx)d\rvx\\
&=\int_{\rvx}(1-2\sigmoid_{h_f}\circ f_0(\rvx))(\hat{p}_s(\rvx)+\hat{q}_t(\rvx))(-\frac{\hat{p}_s(\rvx)}{3\hat{p}_s(\rvx)+\hat{q}_t(\rvx)}+ \frac{\hat{q}_t(\rvx)}{3\hat{q}_t(\rvx)+\hat{p}_s(\rvx)})d\rvx\\
& = \int_{\rvx}(1-2\sigmoid_{h_f}\circ f_0(\rvx))(\hat{p}_s(\rvx)+\hat{q}_t(\rvx))\frac{\hat{q}_t(\rvx)^2 - \hat{p}_s(\rvx)^2 }{(3\hat{p}_s(\rvx)+\hat{q}_t(\rvx))(3\hat{q}_t(\rvx)+\hat{p}_s(\rvx))}d\rvx\\
& = \int_{\rvx}(1-2\sigmoid_{h_f}\circ f_0(\rvx))(\hat{q}_t(\rvx)-\hat{p}_s(\rvx))\frac{(\hat{p}_s(\rvx)+\hat{q}_t(\rvx))^2}{(3\hat{p}_s(\rvx)+\hat{q}_t(\rvx))(3\hat{q}_t(\rvx)+\hat{p}_s(\rvx))}d\rvx\\
& = \int_{\rvx}(1-2\sigmoid_{h_f}\circ f_0(\rvx))(\hat{q}_t(\rvx)-\hat{p}_s(\rvx))\frac{(\hat{p}_s(\rvx)+\hat{q}_t(\rvx))^2}{3\hat{p}_s(\rvx)^2 + 3\hat{q}_t(\rvx)^2+ 10\hat{p}_s(\rvx)\hat{q}_t(\rvx)}d\rvx \\
&=\int_{\rvx}(1-2\sigmoid_{h_f}\circ f_0(\rvx))(\hat{q}_t(\rvx)-\hat{p}_s(\rvx))\frac{(\hat{p}_s(\rvx)+\hat{q}_t(\rvx))^2}{4(\hat{p}_s(\rvx)+\hat{q}_t(\rvx))^2-(\hat{p}_s(\rvx)-\hat{q}_t(\rvx))^2}d\rvx \\
&=\int_{\rvx}(1-2\sigmoid_{h_f}\circ f_0(\rvx))(\hat{q}_t(\rvx)-\hat{p}_s(\rvx))\frac{1}{4-(\hat{p}_s(\rvx)-\hat{q}_t(\rvx))^2/(\hat{p}_s(\rvx)+\hat{q}_t(\rvx))^2}d\rvx
\end{split}
\end{equation}
\end{proof}
\section*{Checklist}

% %%% BEGIN INSTRUCTIONS %%%
The checklist follows the references. For each question, choose your answer from the three possible options: Yes, No, Not Applicable.  You are encouraged to include a justification to your answer, either by referencing the appropriate section of your paper or providing a brief inline description (1-2 sentences). 
Please do not modify the questions.  Note that the Checklist section does not count towards the page limit. Not including the checklist in the first submission won't result in desk rejection, although in such case we will ask you to upload it during the author response period and include it in camera ready (if accepted).

\textbf{In your paper, please delete this instructions block and only keep the Checklist section heading above along with the questions/answers below.}
% %%% END INSTRUCTIONS %%%

 \begin{enumerate}

 \item For all models and algorithms presented, check if you include:
 \begin{enumerate}
   \item A clear description of the mathematical setting, assumptions, algorithm, and/or model. [Yes]
   \item An analysis of the properties and complexity (time, space, sample size) of any algorithm. [Yes]
   \item (Optional) Anonymized source code, with specification of all dependencies, including external libraries. [Yes]
 \end{enumerate}

 \item For any theoretical claim, check if you include:
 \begin{enumerate}
   \item Statements of the full set of assumptions of all theoretical results. [Yes]
   \item Complete proofs of all theoretical results. [Yes]
   \item Clear explanations of any assumptions. [Yes]     
 \end{enumerate}

 \item For all figures and tables that present empirical results, check if you include:
 \begin{enumerate}
   \item The code, data, and instructions needed to reproduce the main experimental results (either in the supplemental material or as a URL). [Yes/No/Not Applicable]
   \item All the training details (e.g., data splits, hyperparameters, how they were chosen). [Yes]
         \item A clear definition of the specific measure or statistics and error bars (e.g., with respect to the random seed after running experiments multiple times). [No]
         \item A description of the computing infrastructure used. (e.g., type of GPUs, internal cluster, or cloud provider). [No]
 \end{enumerate}

 \item If you are using existing assets (e.g., code, data, models) or curating/releasing new assets, check if you include:
 \begin{enumerate}
   \item Citations of the creator If your work uses existing assets. [Yes]
   \item The license information of the assets, if applicable. [Yes]
   \item New assets either in the supplemental material or as a URL, if applicable. [Not Applicable]
   \item Information about consent from data providers/curators. [Not Applicable]
   \item Discussion of sensible content if applicable, e.g., personally identifiable information or offensive content. [Not Applicable]
 \end{enumerate}

 \item If you used crowdsourcing or conducted research with human subjects, check if you include:
 \begin{enumerate}
   \item The full text of instructions given to participants and screenshots. [Not Applicable]
   \item Descriptions of potential participant risks, with links to Institutional Review Board (IRB) approvals if applicable. [Not Applicable]
   \item The estimated hourly wage paid to participants and the total amount spent on participant compensation. [Not Applicable]
 \end{enumerate}

 \end{enumerate}

% \section{Result on Source Domain Forgetting}
% In another aspect, we follow PASS\cite{zhu2021prototype} to quantitatively measure the degree of the catastrophic forgetting on source domain. Specifically, the source domain accuracy drop from previously learnt source specific model at the end of $S_0$ phase to currently target domain adapted model after $T_1$ phase is denoted as the average forgetting. As shown in Tab (\ref{tab:office31forget}, \ref{tab:officehomeforget}), the average forgetting on source domain is non-negligible in all the baseline methods of NLL-OT, NLL-KL, HD-SHOT, SD-SHOT and DINE. However for our proposed model, the catastrophic
% % forgetting is significantly overcome. Our proposed model outperforms baseline because our proposed method learns a domain-invariant feature representation consistently through adversarial adaptation in $T_1$ phase. In the comparing methods of NLL-OT, NLL-KL, DINE, HD-SHOT, and DINE, task model is specifically adapted to target domain in $T_1$ phase through either ST or KD.

\end{document}